\documentclass{article}

 \usepackage[preprint]{neurips_2026}

\usepackage[utf8]{inputenc} 
\usepackage[T1]{fontenc}    
\usepackage{hyperref}       
\usepackage{url}            
\usepackage{booktabs}       
\usepackage{amsfonts}       
\usepackage{amssymb}        
\usepackage{amsmath}        

\usepackage{amsthm}         

\usepackage{nicefrac}       
\usepackage{microtype}      
\usepackage{xcolor}         
\usepackage{colortbl}  
\usepackage{graphicx}       
\usepackage{subcaption}     
\usepackage{multirow}       
\usepackage{enumitem}       
\usepackage{algorithm}      
\usepackage{algorithmic}    
\usepackage{wrapfig}
\usepackage{graphicx}
\usepackage{booktabs}
\usepackage{xspace}

\newcommand{\method}{\textsc{Moir}\xspace}

\newcommand{\bos}{$\langle\texttt{bos}\rangle$}
\newcommand{\randone}{$\langle\texttt{rand}\rangle\times 1$}
\newcommand{\randtwo}{$\langle\texttt{rand}\rangle\times 2$}
\newcommand{\randthree}{$\langle\texttt{rand}\rangle\times 3$}
\newcommand{\randfour}{$\langle\texttt{rand}\rangle\times 4$}

\usepackage{mathtools}

\newtheorem{theorem}{Theorem}
\newtheorem{proposition}[theorem]{Proposition}

\newtheorem{lemma}[theorem]{Lemma}

\newtheorem{remark}{Remark}

\graphicspath{{figures/}}

\title{Moir: Let the Model Direct Its Own Story for Robust Cross-Domain Knowledge Editing}

\author{
  Jea Kwon, \quad Jiwon Kim, \quad Dong-Kyum Kim, \quad Meeyoung Cha \\
  Max Planck Institute for Security and Privacy (MPI-SP) \\
}
\begin{document}

\maketitle

\begin{abstract}
While language models remain frozen at their training state, the world evolves continuously. Knowledge editing has emerged as a key alternative to full retraining, but its deployment is bottlenecked by the erosion of core capabilities: mathematical and programmatic reasoning collapse while encyclopedic recall remains intact. We trace this asymmetric degradation to a distributional mismatch. Covariance-based editors preserve only the subspaces spanned by their reference corpus, but fail to capture the operative distribution shaped by post-training such as SFT and DPO. Static external corpora, including Wikipedia and even the original pretraining mixture, cannot recover this shifted manifold. We propose \method, which estimates the preservation covariance $C$ directly from the model itself by sampling from its own decoding distribution. Seeding generation with a single random vocabulary token bypasses the instruction-following templates that otherwise dominate sampled outputs, exposing the broader subspaces the model has internalized. \method requires no external data and serves as a drop-in component for any covariance-based editor, a practical advantage given that the pre- and post-training corpora of most modern LLMs are not publicly accessible. Across OLMo-2, Llama-3.1, and Qwen-3 (7--8B), under both MEMIT and AlphaEdit and in batch and sequential regimes, \method consistently extends preservation in the most vulnerable domains, most strikingly on Qwen3-8B after 20{,}000 AlphaEdit batch edits, it retains 79.9\% GSM8K accuracy compared to 10.9\% with the Wikipedia baseline. These results suggest that aligning the preservation distribution with the model's operative distribution is a key factor in non-destructive editing, and that the model itself may be the most accessible source of that distribution for deployed systems.
\end{abstract}

\section{Introduction}
\label{sec:introduction}

Large language models (LLMs) act as static repositories of knowledge in a fundamentally dynamic world. While global events, scientific breakthroughs, and socio-political landscapes evolve continuously, the internal state of an LLM remains frozen at the conclusion of its training. Because full retraining to incorporate new information is computationally and environmentally prohibitive, \textit{model editing} has emerged as a critical paradigm for incorporating precise, localized updates to specific facts~\citep{geva2021transformer, mitchell2022mend}.

The central challenge in model editing lies in maintaining a delicate balance: successfully integrating new information without compromising the model’s hard-earned structural integrity. Past research in this field has focused on the mechanics of the update, moving from \textit{how to edit}~\citep{meng2022locating, meng2023memit} to \textit{how to preserve} existing parameters~\citep{fang2024alphaedit}. This work introduces a new perspective. To achieve truly non-destructive editing, we show that the community must address an equally important, yet largely unexamined question: \textbf{\textit{what exactly should be preserved?}}

Current structure-preserving algorithms~\citep{fang2024alphaedit, gu2024rect} implicitly answer this question by using arbitrary proxy corpora, such as Wikipedia, to define the preserved knowledge space. In this work, we demonstrate that this deeply ingrained convention introduces a critical misalignment, causing a severe and disproportionate collapse of complex capabilities that lie outside this proxy’s narrow distribution (e.g., mathematical reasoning and code generation) during the editing process. These implications extend beyond editing itself. As deployed models increasingly rely on editing for factual patching, content removal, and continuous updates, any intervention that inadvertently degrades core capabilities becomes a significant barrier to deployment as reliable infrastructure



We propose a shift in how the preservation space is defined through the following contributions:
\begin{itemize}
    \vspace{-5pt}
    \item \textbf{Diagnosing the proxy pitfall.} We empirically show that arbitrary proxy corpora drive the cross-domain collapse observed in prior editing methods.
    
    \item \textbf{Self-generated manifolds.} 
    We introduce \method, a data-free framework that approximates this distribution directly from the model's own generations (see Figure \ref{fig:memoir}).
    
    \item \textbf{Inaccessibility as motivation.} 
    We demonstrate that the operative distribution is generally inaccessible in deployed LLMs. Consequently, self-generation is presented not merely as a preference but as the only viable approach for maintaining the relevant distribution.
    
    \vspace{-5pt}
\end{itemize}

We introduce an approach that utilizes a model's own decoding distribution to access its internalized training data and post-training shifts without relying on external corpora. Empirically, we identify single-random-token seeding as the configuration that best balances mode-collapse mitigation and prefix-prior fidelity. Additionally, we show that aligning the preservation space with this self-generated manifold reduces the cross-domain degradation typical of external proxies. By applying \method, we achieve unprecedented stability across major LLMs and maintain complex reasoning manifolds.


\method is a drop-in replacement for any covariance-based editor, requiring no architectural changes to existing frameworks like MEMIT or AlphaEdit~\citep{meng2023memit, fang2024alphaedit}. By sampling directly from the live model, it captures the nuanced activation shifts induced by post-training,  such as Supervised Fine-Tuning (SFT) and Direct Preference Optimization (DPO), that static corpora fail to represent. This alignment enables large-scale, non-destructive editing across the OLMo-2, Llama-3.1, and Qwen-3 families. The resulting stability is substantial: the tested models sustain 79.9\% accuracy on GSM8K after 20,000 edits, whereas Wikipedia baselines collapse to 10.9\%. This stability is achieved at a manageable one-time generation cost of approximately 2 hours per model.

The implications reach beyond knowledge editing as a research problem. Once edits no longer cost a model its reasoning or coding ability, full retraining is no longer the only path to keeping a model factually up-to-date. A deployed model can instead evolve continuously, absorbing corrections in minutes rather than months, and stay in sync with the world for as long as it runs. What has been a static artifact, frozen at the end of training, becomes something closer to maintainable infrastructure.

\begin{figure*}[t]
    \centering
    \includegraphics[width=0.95\textwidth]{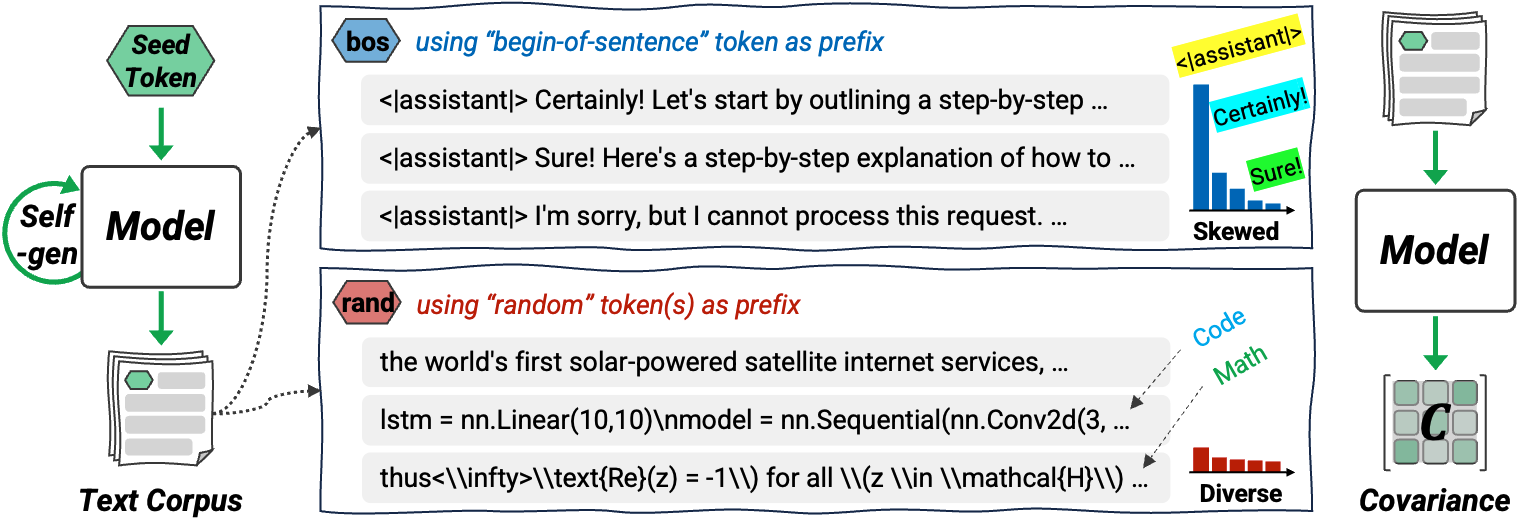}
    \caption{\textbf{\method's self-generation pipeline.} The model autoregressively generates samples from a seed token, from which $C$ is estimated. Using begin-of-sentence token (top) collapses generations skewed toward an instruction-style, whereas a random token (bottom) diversifies generations across domains (e.g., general text, code, and math) better approximating the model's internalized distribution.}
    \label{fig:memoir}
\end{figure*}

\section{Related Work}

\paragraph{The Evolution of Editing Paradigms.}
Knowledge editing methods can be grouped based on whether they preserve the model's original parameters or not. Parameter-preserving approaches \citep{hu2022lora, mitchell2022serac, hartvigsen2023grace} route edits through external adapters or memory modules. While safe by construction, they incur inference overhead, struggle with complex reasoning, and cannot truly remove information. Direct parameter-modifying methods \citep{mitchell2022mend, meng2022rome, meng2023memit} instead overwrite weights to internalize new facts. The cost is interference: representations in language models are entangled, so localized updates leak into neighboring activations and, under sequential editing, compound into catastrophic forgetting and model collapse. Recent structure-preserving editors \citep{fang2024alphaedit, ma2024prune, gu2024rect} address this by projecting updates onto subspaces deemed orthogonal to existing knowledge.

\paragraph{Out-of-Domain Capability Erosion.}
Recent stress tests expose a blind spot in editing evaluation: the standard locality metric draws held-out triples from the same distribution as the edits and thus cannot detect a collapse in capabilities lying outside it. \citet{gu2024rect} shows that a single editing pass modifying less than one percent of LLaMA-1's parameters drives out-of-domain accuracy to near zero, and \citet{yang2024should} reports that ten thousand sequential edits push the model into a ``muting effect'' that collapses broad capabilities in parallel. The damage is most severe in specialized domains: sequential MEMIT edits collapse mathematical problem-solving to zero \citep{gu2024model, lin-etal-2024-navigating}, and the same methods reduce syntactic validity on code-specialized models by up to 86 points \citep{chhetri2025understanding}. Concurrent work by \citet{liu2026covariance} formalizes this as a ``Covariance Trap,'' attributing the failure to the projection metric itself. 

\section{Rethinking the Preservation Space}
\label{sec:rethinking_space}

\subsection{Background: Covariance-Constrained Knowledge Editing}
\label{sec:background}

To establish the mathematical role of the preservation space, we first formalize the objective of sequential model editing. Given a residual matrix $R$ that encodes target values, an algorithm computes a parameter update $\Delta W$ to memorize new facts associated with keys $K_{\text{new}}$. This update must simultaneously preserve previously edited facts ($K_{\text{past}}$) and foundational pre-training knowledge ($K_0$) sampled from a proxy distribution $\mathcal{D}$. Both foundational and state-of-the-art methods rely on $K_0$ to construct their preservation spaces.

\paragraph{Covariance-Constrained Formulations.} 
Modern editing methods rely on the uncentered covariance matrix $C = K_0 K_0^\top$ to define the preservation space, albeit through slightly different implementations. MEMIT \citep{meng2023memit} incorporates $C$ as a soft regularization penalty, computing updates proportional to $(... + C)^{-1}$. Conversely, AlphaEdit \citep{fang2024alphaedit} enforces a hard constraint by projecting updates strictly into the null space of $C$. Applying SVD to $C$ yields an orthogonal projection matrix $P$ that structurally guarantees $P K_0 = \mathbf{0}$. Despite these algorithmic differences, both methods are fundamentally bottlenecked by the same geometric commitment: the distribution $\mathcal{D}$ used to sample $K_0$ and estimate $C$. 
{Please refer to Appendix~\ref{app:setup} for the full mathematical formulations of these parameter updates.}

\paragraph{From ``How'' to ``What'' to Preserve}
Recent structure-preserving model editing methods \citep{fang2024alphaedit, ma2024prune, gu2024rect} have achieved significant gains by refining \textit{how} parameter displacement is constrained. Yet, as illustrated in Figure~\ref{fig:degradation}, we observe a catastrophic erosion of specialized knowledge regardless of the preservation mechanism employed. While general linguistic performance is maintained, out-of-domain capabilities, specifically for mathematics and coding, collapse to near-zero levels far more rapidly. 

\definecolor{heswcolor}{HTML}{8856A7} 
\definecolor{wigrcolor}{HTML}{4A90E2} 
\definecolor{mmlucolor}{HTML}{D6604D} 
\definecolor{arccolor}{HTML}{43A247}  
\definecolor{hecolor}{HTML}{B3A125}   
\definecolor{gsmcolor}{HTML}{D9822B}  
\begin{table}[t]
\centering
\vspace{-15pt}
\caption{Impact of $C$ source distribution on OLMo-2-Instruct under MEMIT batch editing. Aligning $C$ with OLMo-2's actual pretraining corpus (OLMo-Mix-1124) substantially mitigates cross-domain collapse compared to the conventional Wikipedia proxy (see evaluation metrics in \ref{app:setup_metrics}). Values in parentheses denote the absolute improvement ($+\Delta$) over the Wikipedia baseline in the most vulnerable domains (Code and Math). Oracle construction details are provided in Appendix~\ref{app:oracle}. $N=2{,}000$ edits}
\label{tab:mixed_c0}
\resizebox{\textwidth}{!}{%
\begin{tabular}{l|cc|ccccc|cccccc}
\toprule
& \multicolumn{7}{c|}{\textbf{Edit Metrics}} & \multicolumn{6}{c}{\textbf{Preserve Metrics}} \\
\cmidrule(lr){2-8} \cmidrule(l){9-14}
$C$ Source & \textbf{ES} & \textbf{PS} & \textbf{Effi} & \textbf{Gen} & \textbf{Spec} & \textbf{Flu} & \textbf{Con} & \textcolor{heswcolor}{\textbf{HeSw}} & \textcolor{wigrcolor}{\textbf{WiGr}} & \textcolor{mmlucolor}{\textbf{MMLU}} & \textcolor{arccolor}{\textbf{ARC}} & \textcolor{hecolor}{\textbf{HE}} & \textcolor{gsmcolor}{\textbf{GSM8K}} \\
\midrule
\multicolumn{14}{l}{\textit{Conventional proxy}} \\
Wikipedia  & .961 & .831 & .830 & .443 & .481 & .946 & .258 & .776 & .672 & .527 & .505 & .140 & .352 \\
\midrule
\multicolumn{14}{l}{\textit{Pretraining-aligned distribution (OLMo-2)}} \\
Dolma~1.7 (Wiki subset only)       & .950 & .791 & .789 & .383 & .462 & .951 & .245 & .769 & .657 & .550 & .501 & .232 {\scriptsize (+.092)} & .378 {\scriptsize (+.026)} \\
OLMo-Mix (pretrain-oracle)        & .942 & .796 & .767 & .386 & .455 & .961 & .242 & .774 & .676 & .552 & .521 & \textbf{.238 {\scriptsize (+.098)}} & \textbf{.485 {\scriptsize (+.133)}} \\
\bottomrule
\end{tabular}%
}
\end{table}

This convention introduces a structural bias. Modern pretraining corpora such as Dolma~\citep{soldaini2024dolma} and RedPajama~\citep{together2023redpajama} are highly heterogeneous, whereas Wikipedia is restricted to encyclopedic prose. A covariance matrix estimated from such a narrow proxy spans only the activation subspaces relevant to factual recall, leaving directions critical for these specialized domains unconstrained. Consequently, editing algorithms that rely on this incomplete $C$ unintentionally distort the unprotected subspaces, leading to the rapid degradation illustrated in Figure~\ref{fig:degradation} and quantified in Table~\ref{tab:mixed_c0}.

\paragraph{Why $C$ Alone Determines the Update} 

For the closed-form covariance-constrained editors examined here, the parameterupdate depends on the input distribution only through the uncentered covariance~$C$; the output-side factor~$G$ cancels in the closed-form optimum. This is not an assumption but a consequence of the cancellation established in proposition~\ref{thm:g-cancel} (with MEMIT requiring the soft-regularization limit
per Remark 1).

\begin{wrapfigure}{r}{0.55\textwidth} 
  \begin{center}
    \vspace{-15pt} 
    \includegraphics[width=0.55\textwidth]{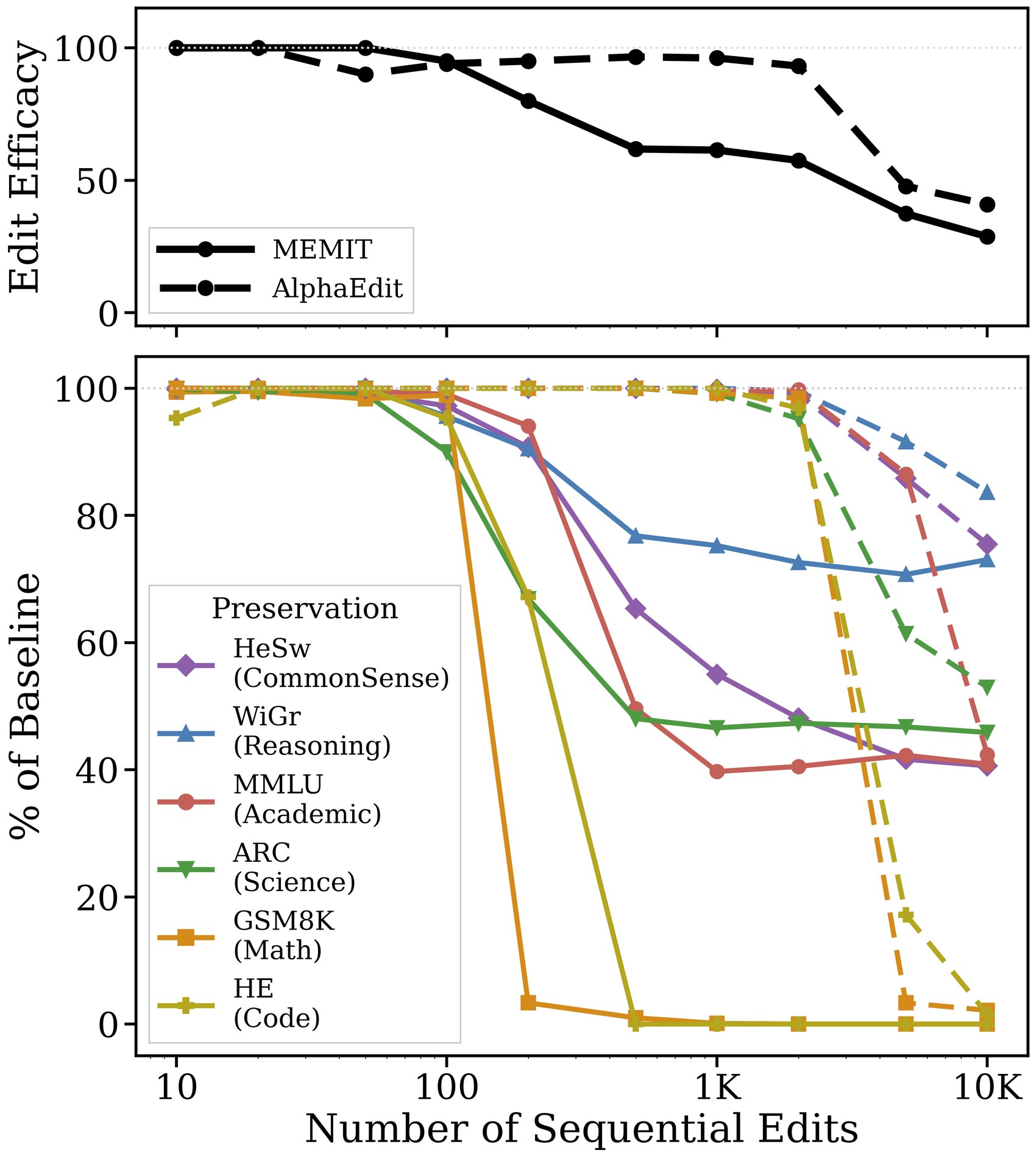}
    \vspace{-5pt} 
    \caption{Differential degradation rates across knowledge domains during sequential editing on Counter Factual data with OLMo-2-Inst.}
    \label{fig:degradation}
    \vspace{-3.5em} 
  \end{center}
\end{wrapfigure}

\paragraph{Beyond the pretraining oracle.}
A natural hypothesis is that the optimal $C$ should be derived from the model's original pre-training mixture. We test this by constructing an "Oracle" covariance using the official OLMo-Mix-1124 corpus, sampled at exact pretraining proportions (Table~\ref{tab:mixed_c0}). Relative to the Wikipedia baseline, the pretraining-aligned oracle substantially improves preservation of specialized capabilities at a modest cost to edit quality. Surprisingly, a self-generated distribution of the kind we propose is highly competitive with and frequently surpasses even this oracle (Table~\ref{tab:memit_batch}).

As shown in Figure 3, post-training procedures (e.g., SFT, DPO) shift the model's operative manifold away from its raw pretraining distribution until the data on disk no longer matches the internal representations encoded in the weights. Any static external corpus, regardless of its historical relevance, incurs an inherent geometric bias. A principled way to capture this shifted distribution is to elicit it directly from the live model via self-generation, a data-free framework we propose next.

\section{\method: Self-generated Covariance Estimation}
\label{sec:memoir}

\method (
\textbf{M}emories \textbf{O}f 
\textbf{I}nternal \textbf{R}epresentations) elicits the preservation distribution from the model itself. Rather than estimating $C$ from an external corpus, we construct it from samples produced by the model under its own decoding distribution. The estimator follows the standard formulation:
\begin{equation}
    C_{\text{\method}} = \frac{1}{N} \sum_{i=1}^{N} \mathbf{k}_i \, \mathbf{k}_i^\top,
\end{equation}
where each key $\mathbf{k}_i$ is the input to the targeted MLP layer during the forward pass over a sequence sampled from $p_\theta(\,\cdot\,\mid s)$, and $N$ is the total number of token positions across sequences (see Appendix~\ref{app:memoir_details} for the full extraction procedure). Within this framework, the primary design choice distinguishing \method from prior methods is the specification of the seed sequence $s$.

\textbf{Self-generation Strategy.} 
The seed sequence $s$ serves as the entry point into the model's latent manifold. While conventional methods rely on a static $\langle \text{bos} \rangle$ prefix, such a choice is overly deterministic and limits the generation to high-probability instruction-following paths. By setting $s$ as a single uniformly-random vocabulary token ($\langle \text{rand} \rangle \times 1$), we ensure that the resulting activations are sufficiently diverse to represent the model’s broad operative distribution while remaining grounded in the model's learned $k$-gram statistics (see Table~\ref{tab:bos_vs_rand}).


\paragraph{Prefix Length Controls Diversity vs.\ Fidelity.}
The empirical results in Table~\ref{tab:bos_vs_rand} confirm that the choice of seed $s$ is critical for representative sampling. \randone$~$achieves the highest total performance (1.093, $+0.087$ gain over \bos), whereas longer random prefixes lead to a gradual degradation toward the baseline (\randfour: 1.049). These results validate our hypothesis: while a single random token is sufficient to break the mode collapse associated with $\langle \text{bos} \rangle$, excessive randomization steers generation away from coherent linguistic structures and the model's true operative activation statistics. We adopt \randone$~$as the default in all subsequent experiments.

\begin{wrapfigure}{r}{0.45\linewidth}
  \centering
  \vspace{-0.5em}
  \includegraphics[width=\linewidth]{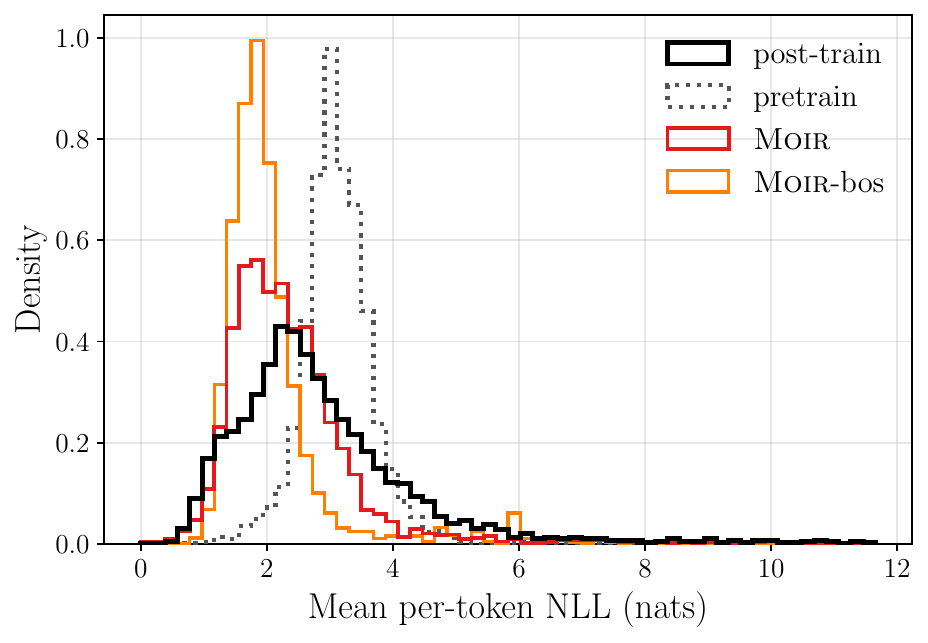}
    \caption{\textbf{Tracking the Post-Training Manifold.} Per-document NLL distributions demonstrating the clear leftward shift from the pretraining to the post-training regime. \method (\randone, red) recovers this shifted operative distribution, visibly achieving a tighter match than the \bos-only variant (orange).}
  \label{fig:nll-single}
  \vspace{-1.5em}
\end{wrapfigure}

\paragraph{Distributional Analysis.} 
To characterize the text generated by our framework, we analyze its per-document Negative Log-Likelihood (NLL) (Eq.~\ref{eq:nll})~\citep{shumailov2024ai} against the model's pre- and post-training corpora. Here, NLL quantifies how probable a document is under the model's current weights; comparing these NLL distributions allows us to verify if the generated text spans the same probability landscape as the post-training data.


Figure~\ref{fig:nll-single} reveals a stark gap between raw pretraining data (dotted black) and post-training (SFT/DPO) data (solid black). The conventional Wikipedia proxy is anchored to the pretraining distribution (Figure~\ref{fig:nll-jsd} in Appendix~\ref{app:nll_dist}), meaning it completely misses the structural shifts induced by alignment tuning. In contrast, by generating text directly from the live model, \method naturally captures this shifted post-training distribution (orange and red in Figure~\ref{fig:nll-single}). 

Crucially, our \randone-prefix strategy (\method, red) matches this target distribution significantly better than the \bos-only variant (\method-bos, orange). This is because \bos-seeding forces the model into repetitive, mode-collapsed chat templates (e.g., starting with ``Certainly!'', ``Sure!''), whereas seeding with a single random token diversifies the output to cover the model's true breadth of knowledge (see Figure~\ref{fig:token_dist} in Appendix~\ref{app:distributional_analyses}). By avoiding this mode collapse, \method provides a highly accurate, data-free representation of the deployed model's true operative distribution.



\definecolor{randA}{HTML}{F9D2CC}  
\definecolor{randB}{HTML}{FBDED9}
\definecolor{randC}{HTML}{FCE9E5}
\definecolor{randD}{HTML}{FEF4F2}  
\definecolor{bosBG}{HTML}{DEEBF5}

\begin{table}[t]
\centering
\small
\vspace{-10pt}
\caption{Effect of self-generation $C$ quality by different seed token. Four random-prefix variants (\randone–\randfour) are compared against (\bos). Both overall edit score ($S_e$) and preservation scores ($S_p$) are measured by harmonic mean of metrics (see Appendix~\ref{app:setup_metrics} for equations) across $N \in \{1\text{K}, 2\text{K}, 3\text{K}, 5\text{K}\}$ edits; Total $=$ Edit Avg. $+$ Pres. Avg.; OLMo-2-7B-Inst.}
\label{tab:bos_vs_rand}

\resizebox{0.9\textwidth}{!}{%
\begin{tabular}{c|cccc|c|cccc|c|c}
\toprule
\multirow{2}{*}{\textbf{Seed Token}} & \multicolumn{5}{c|}{\textbf{Edit HM ($S_e$)}} & \multicolumn{5}{c|}{\textbf{Pres. HM ($S_p$)}} & \multirow{2}{*}{\textbf{Total}} \\
\cmidrule(lr){2-6} \cmidrule(l){7-11}
 & 1K & 2K & 3K & 5K & Avg & 1K & 2K & 3K & 5K & Avg & \\
\midrule
\rowcolor{randA} \randone & .564 & .508 & .473 & .399 & \underline{.486} & .908 & .810 & .555 & .155 & \textbf{.607} & \textbf{1.093} \\
\rowcolor{randB} \randtwo & .563 & .531 & .501 & .404 & \textbf{.500} & .930 & .735 & .571 & .077 & \underline{.578} & \underline{1.078} \\
\rowcolor{randC} \randthree & .570 & .506 & .483 & .379 & .484 & .942 & .791 & .466 & .100 & .575 & 1.059 \\
\rowcolor{randD} \randfour & .568 & .513 & .481 & .373 & .484 & .931 & .801 & .511 & .019 & .565 & 1.049 \\
\rowcolor{bosBG} \bos  & .550 & .495 & .443 & .360 & .462 & .932 & .710 & .492 & .041 & .544 & 1.006 \\
\bottomrule
\end{tabular}%
}
\vspace{-15pt}
\end{table}

\section{A Closer Look at the Preservation Distribution}
\label{sec:theory}

\subsection{A K-FAC View of Closed-Form Editors}
\label{ssec:design-lever}

\paragraph{Why a K-FAC view.} The preservation matrix $C = K_0 K_0^\top$ that MEMIT and AlphaEdit treat as a regularizer or projector is, structurally, the empirical estimator of the K-FAC input factor $\Sigma_\ell(p) = \mathbb{E}_{x \sim p}[\phi_\ell(x)\phi_\ell(x)^\top]$ under whatever distribution $p$ supplies the samples $K_0$. Reading $C$ this way reframes covariance-constrained editing as an input-side projection under the Fisher metric induced by $p$: the editor commits to a Riemannian geometry on the activation space, and the choice of $p$ is the choice of metric. Two consequences fall out of this reading and motivate Proposition~\ref{thm:g-cancel} below. First, because the constraint $\Delta W K_\text{new} = R$ leaves no output-side degrees of freedom, the output factor $G_\ell(p)$ cannot affect the closed-form solution---the metric on the output side is inert. Second, the input factor $\Sigma_\ell(p)$ is the \emph{only} distributional object the editor sees.

Let $\phi_\ell : \mathcal{X} \to \mathbb{R}^{d_\ell}$ denote the activation map to the targeted MLP at layer $\ell$, and let $\Sigma_\ell(p) := \mathbb{E}_{x \sim p}[\phi_\ell(x)\phi_\ell(x)^\top]$ and $G_\ell(p) := \mathbb{E}_{x \sim p}[g_\ell(x)g_\ell(x)^\top]$ denote the K-FAC input and output factors, where $g_\ell$ is the output-side gradient.

\begin{proposition}[G-independence of closed-form covariance-constrained editing]
\label{thm:g-cancel}
Let $A \in \mathbb{R}^{d_{\mathrm{in}} \times d_{\mathrm{in}}}$ be positive definite and $G \in \mathbb{R}^{d_{\mathrm{out}} \times d_{\mathrm{out}}}$ be positive semi-definite, and let $K_{\mathrm{new}} \in \mathbb{R}^{d_{\mathrm{in}} \times m}$, $R \in \mathbb{R}^{d_{\mathrm{out}} \times m}$. The constrained minimum-norm problem
\begin{equation}
\label{eq:rome-objective}
\min_{\Delta W} \; \mathrm{vec}(\Delta W)^\top (A \otimes G)\, \mathrm{vec}(\Delta W) \quad \text{s.t.} \quad \Delta W\, K_{\mathrm{new}} = R
\end{equation}
admits the unique optimum $\Delta W^\star = R\, (K_{\mathrm{new}}^\top A^{-1} K_{\mathrm{new}})^{-1}\, K_{\mathrm{new}}^\top A^{-1}$, which is independent of $G$. The soft-penalty (MEMIT) and null-space-projection (AlphaEdit) formulations of Appendix~\ref{app:theory} adopt the same input-only design by construction: in all three cases, $\Delta W$ depends on the input distribution only through $A_\ell = \Sigma_\ell(p)$.
\end{proposition}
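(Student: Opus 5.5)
The plan is to reduce the Kronecker-weighted problem to a family of independent row-wise problems, each of which involves only $A$. First I would rewrite the objective using $\mathrm{vec}(\Delta W)^\top (A\otimes G)\,\mathrm{vec}(\Delta W) = \mathrm{tr}(G\,\Delta W\, A\, \Delta W^\top)$. This follows from the identity $(A\otimes G)\mathrm{vec}(X)=\mathrm{vec}(G X A^\top)$, together with the symmetry of $A$.

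Next I would diagonalize $G = U D U^\top$, where $U$ is orthogonal and $D=\mathrm{diag}(d_1,\dots,d_{d_{\mathrm{out}}})\succeq 0$, and change variables to $Y = U^\top \Delta W$. Since $U$ is invertible, $\Delta W \mapsto Y$ is a bijection. Writing $y_i^\top$ for the $i$-th row of $Y$, the objective becomes $\sum_i d_i\, y_i^\top A\, y_i$. The constraint becomes $Y K_{\mathrm{new}} = U^\top R =: \tilde R$, which splits row by row as $K_{\mathrm{new}}^\top y_i = \tilde r_i$. The problem therefore decouples into the independent subproblems
\begin{equation*}
\min_{y_i} \; d_i\, y_i^\top A\, y_i \quad \text{s.t.} \quad K_{\mathrm{new}}^\top y_i = \tilde r_i .
\end{equation*}
When $d_i>0$, the positive factor $d_i$ does not change the minimizer, so each subproblem is the classical $A$-weighted minimum-norm problem. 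A Lagrangian computation, $2A y_i = K_{\mathrm{new}}\lambda_i$, gives $y_i = A^{-1}K_{\mathrm{new}}\lambda_i/2$; substituting into the constraint determines $\lambda_i$ and yields
\begin{equation*}
y_i^\star = A^{-1}K_{\mathrm{new}}\,(K_{\mathrm{new}}^\top A^{-1}K_{\mathrm{new}})^{-1}\tilde r_i .
\end{equation*}
This minimizer is unique because $A\succ 0$ makes the subproblem strictly convex. Stacking the rows gives $Y^\star = \tilde R\,(K_{\mathrm{new}}^\top A^{-1}K_{\mathrm{new}})^{-1}K_{\mathrm{new}}^\top A^{-1}$, and multiplying back by $U$ cancels $U^\top$ to give the claimed $\Delta W^\star$. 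No $d_i$, and hence no $G$, appears in the result.

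The main obstacle is the hypotheses, not the algebra. First, invertibility of $K_{\mathrm{new}}^\top A^{-1}K_{\mathrm{new}}$ requires $K_{\mathrm{new}}$ to have full column rank, i.e.\ linearly independent new keys. I would make this explicit as an assumption, since it is implicit in the stated formula. Second, when $G$ is only positive semi-definite, any row with $d_i=0$ contributes nothing to the objective. For such a row every feasible $y_i$ is optimal, so uniqueness fails. My plan is to state the result in two parts. For $G\succ 0$, $\Delta W^\star$ is the unique optimum. For $G\succeq 0$, $\Delta W^\star$ is still an optimum, and it is the unique optimum that is simultaneously $A$-minimal in every row; equivalently, it is the limit of the unique optima for $G+\epsilon I$ as $\epsilon\to 0$. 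In either reading the solution remains $G$-independent, because the regularized optimum does not depend on $\epsilon$.

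Finally, for the MEMIT and AlphaEdit part of the statement, I would read off the closed forms from Appendix~\ref{app:theory}. For MEMIT, the update has the form $\Delta W = R K_{\mathrm{new}}^\top(C + K_{\mathrm{new}}K_{\mathrm{new}}^\top)^{-1}$ with $C=K_0K_0^\top$. Per Remark~1, the soft-penalty limit connects this to the problem above with $A$ given by $C$. For AlphaEdit, the projector $P$ is built from the SVD of $C$ alone, and the update is solved in the projected input space. In both formulas the only distribution-dependent object is $C$, which is the empirical estimate of $\Sigma_\ell(p)$, and no output-side second moment enters. This gives the claim that $\Delta W$ depends on $p$ only through $A_\ell=\Sigma_\ell(p)$.
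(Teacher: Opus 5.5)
Your proof is correct, but the core argument takes a different route from the paper's.

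The paper gives two proofs. The first is a perturbation argument. It writes any feasible point as $\Delta W^\star+\delta W$ with $\delta W K_{\mathrm{new}}=0$. Since $\Delta W^\star A=R(K_{\mathrm{new}}^\top A^{-1}K_{\mathrm{new}})^{-1}K_{\mathrm{new}}^\top$, the cross term is a Frobenius inner product against $\delta W K_{\mathrm{new}}=0$ for every $G$. The objective therefore equals its value at $\Delta W^\star$ plus $\|G^{1/2}\delta W A^{1/2}\|_F^2$. The second proof is a matrix Lagrangian for $G\succ 0$, in which $G^{-1}$ and $G$ cancel on back-substitution; it is extended to PSD $G$ via $G+\epsilon I$ and continuity.

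You instead diagonalize $G$ and rotate the output coordinates. Then $G$ survives only as nonnegative scalar weights $d_i$ on decoupled vector problems $\min_y y^\top A y$ subject to $K_{\mathrm{new}}^\top y=\tilde r_i$. This explains the cancellation very transparently: a positive rescaling cannot move an argmin. The $d_i=0$ rows also handle the PSD case directly, with no limiting argument.

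The paper's Pythagorean split buys a coordinate-free, one-line proof of optimality for every PSD $G$ and uniqueness for $G\succ0$. Read as an ``iff'', it also gives the full solution set $\Delta W^\star+\{\delta W:\ \delta W K_{\mathrm{new}}=0,\ G\,\delta W=0\}$, which your row decomposition recovers as well. Your two caveats (full column rank of $K_{\mathrm{new}}$, and uniqueness only for $G\succ0$) are exactly the amendments the paper makes when it restates the proposition in the appendix: the hypothesis $K_{\mathrm{new}}^\top A^{-1}K_{\mathrm{new}}\succ0$ and uniqueness ``whenever $G\succ 0$''.

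Two small corrections. First, a zero-weight row is free only along $\ker K_{\mathrm{new}}^\top$. So uniqueness fails only when $G$ is singular \emph{and} $m<d_{\mathrm{in}}$. If $K_{\mathrm{new}}$ is square, the feasible set is the single point $RK_{\mathrm{new}}^{-1}$, so say uniqueness ``can fail'', as the paper does.

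Second, the MEMIT update you quote drops $\lambda$ and $K_{\mathrm{past}}K_{\mathrm{past}}^\top$, and the reduction in Remark~1 needs both. It sets $K_{\mathrm{past}}=0$ and uses the push-through identity to write $K_{\mathrm{new}}^\top(K_{\mathrm{new}}K_{\mathrm{new}}^\top+\lambda C)^{-1}=(\lambda I+K_{\mathrm{new}}^\top C^{-1}K_{\mathrm{new}})^{-1}K_{\mathrm{new}}^\top C^{-1}$. It then lets $\lambda\to0^+$. Without $\lambda$ there is no limit to take.
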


\begin{remark}[Specialization of MEMIT and AlphaEdit to the canonical form]
\label{rem:reduction}
MEMIT and AlphaEdit reduce to the canonical form of proposition~\ref{thm:g-cancel} through two distinct mechanisms, each preserving $A$-only dependence.

\textit{(i) MEMIT (penalty $\to$ constraint).} Applying the matrix push-through identity to the Tikhonov-regularized update (Eq.~\ref{eq:memit-update}) and taking $\lambda \to 0^{+}$ on the well-posed branch ($K_{\mathrm{new}}^{\top} C^{-1} K_{\mathrm{new}} \succ 0$, $K_{\mathrm{past}} = 0$) yields
\[
\Delta W_{\mathrm{MEMIT}} \;\longrightarrow\; R\,\bigl(K_{\mathrm{new}}^{\top} C^{-1} K_{\mathrm{new}}\bigr)^{-1} K_{\mathrm{new}}^{\top} C^{-1},
\]
the $A = C$ specialization of proposition~\ref{thm:g-cancel}. For finite $\lambda$, MEMIT is the soft analogue of the same problem; $G$ appears in neither form.

\textit{(ii) AlphaEdit (hard projection).} Replacing the soft penalty with the projector $P$ onto $\ker(C)$ enforces $\Delta W\, K_{0} = 0$ structurally, applying proposition~\ref{thm:g-cancel} with the feasible set restricted to $\mathrm{range}(P)$.

In both cases, $\Delta W$ depends on the input distribution only through $C = \Sigma_{\ell}(p)$.
\end{remark}

\noindent Across all three editors, $\Delta W$ therefore depends on the input distribution \emph{only} through $A_\ell = \Sigma_\ell(p)$.


\subsection{Which Distribution? An Information-Geometric View with Hypotheses}
\label{ssec:distributional}

Given proposition~\ref{thm:g-cancel}, the only distributional object the editor sees is the input covariance $\Sigma_\ell(p)$, which we treat as the empirical proxy for the geometry of the model's operative manifold $p_\theta$. The remaining question is which sampling distribution $q$ best matches $p_\theta$ at the level of activations.

\paragraph{H1: Static external corpora are confined to a narrow chart of the manifold.}
A pretraining proxy such as Wikipedia places nearly all of its mass on the encyclopedic chart of the model's internalized manifold; charts corresponding to code, mathematical reasoning, or instruction-following receive vanishing density. A covariance estimated from such a corpus is therefore not only narrow in \emph{support} but \emph{tilted} in mass: the directions the model actually uses for these domains are averaged in at near-zero weight, regardless of sample size. Self-generation, by drawing from $p_\theta$ directly, allocates density across whichever charts the model itself has internalized, without requiring us to identify those charts in advance.

\paragraph{H2: BOS-seeding collapses onto the post-training attractor.}
Post-training installs a strong prior at the start of generation. Conditioned on $\langle\textsc{bos}\rangle$, the model funnels into a small set of instruction-following openings, so the resulting sample distribution is sharply concentrated rather than broad. This constitutes a different failure mode from H1: where Wikipedia is biased in \emph{topic}, BOS-seeding is biased in \emph{form}, sampling the manifold only along the few trajectories that survive the chat-template attractor.

\paragraph{H3: A single random token escapes the attractor at low geometric cost.}
A uniformly drawn first token rarely sits on the natural prefix manifold, since many tokens are syntactically marginal or semantically vacuous. Yet this small off-manifold perturbation dislodges generation from the BOS basin: after one or two autoregressive steps, the trajectory re-enters the model's natural distribution, seeded into a region the BOS prior would never visit, a small fidelity cost for a large gain in exploration.

\paragraph{H4: Longer random prefixes drift into a sparse region of the product space.}
For a length-$k$ random seed, the prefix lives in $\mathcal{V}^k$, whose meaningful subset shrinks combinatorially with $k$. Most points in $\mathcal{V}^k$ are nonsense $k$-grams that the model has never seen during training, so conditioning on such a prefix steers generation into low-density regions of $p_\theta$. Unlike the single-token case, the autoregressive process cannot fully recover here, because the prefix bias propagates forward through the context. The exploration benefit thus saturates after $k=1$ while the off-manifold drift compounds, and we expect a sweet spot at $k=1$: enough randomness to break the attractor, but not enough to leave the manifold.

\section{Generality Across Editors, Models, and Regimes}
\label{sec:results}

\begin{wrapfigure}{r}{0.47\textwidth} 
  \begin{center}
    \vspace{-15pt} 
    \includegraphics[width=0.47\textwidth]{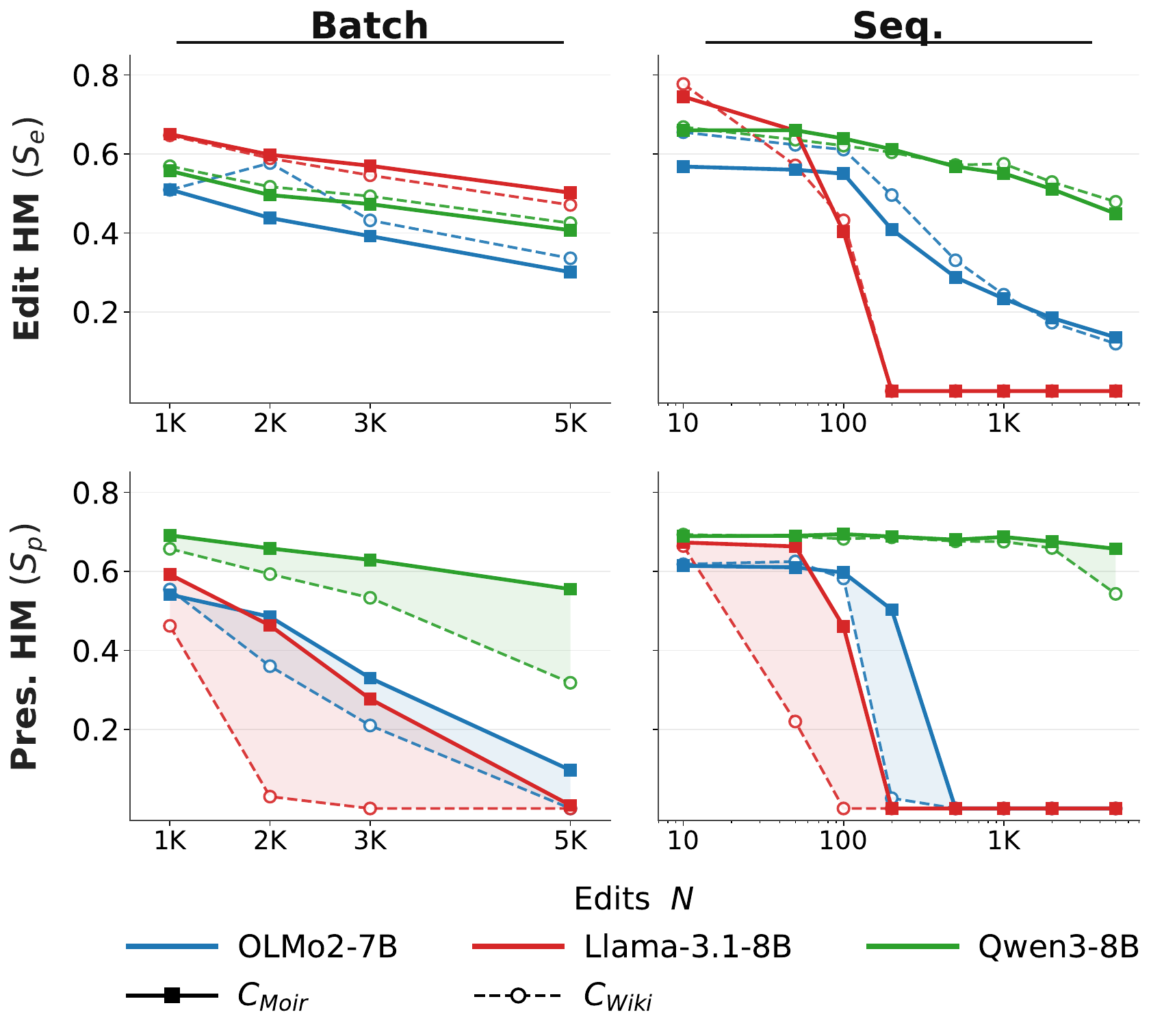}
    \vspace{-5pt} 
    \caption{$C_\text{\method}$ Improves Preservation. Harmonic means over edit metrics (top) and preservation tasks (bottom), under MEMIT batch (left) and sequential (right) editing. Shades in bottom row shows the improved preservation capacity.}
    \label{fig:hm_memit}
    \vspace{-10pt} 
  \end{center}
\end{wrapfigure}  

We evaluate \method as a drop-in replacement for $C$ in two representative editors: MEMIT~\citep{meng2023memit} (soft preservation penalty) and AlphaEdit~\citep{fang2024alphaedit} (hard null-space projection). Our evaluation covers both batch and sequential regimes on three open-weight instruction-tuned models: OLMo-2-7B-Instruct~\citep{groeneveld2024olmo}, Llama-3.1-8B-Instruct~\citep{touvron2023llama}, and Qwen3-8B~\citep{bai2023qwen}. The editing score ($S_e$) and preservation score ($S_p$) are reported as harmonic means over five standard \textsc{CounterFact} editing metrics and six downstream benchmarks (MMLU, GSM8K, HellaSwag, WinoGrande, ARC-Challenge, HumanEval). The full setup, including editing layers, hyperparameters, and metric definitions, is provided in Appendix~\ref{app:setup}.

\paragraph{Consistency across editors, models, and scales.}
Figures~\ref{fig:hm_memit} and~\ref{fig:hm_alphaedit} compare $C_{\text{\method}}$ against the standard $C_{\text{Wiki}}$ across all editor-regime combinations (MEMIT/AlphaEdit $\times$ batch/sequential) and models, as well as edit counts spanning four orders of magnitude. Two patterns hold throughout. One is that \textit{editing quality is preserved.} Across all settings (top row), $C_{\text{\method}}$ matches or exceeds $C_{\text{Wiki}}$ in editing performance. The largest gap appears under MEMIT sequential editing on Llama-3, where $C_{\text{Wiki}}$ collapses to zero within $\sim$200 edits while $C_{\text{\method}}$ sustains non-trivial editing performance throughout. Where $C_{\text{Wiki}}$ remains stable, $C_{\text{\method}}$ tracks it closely; the self-generated covariance is therefore not a trade-off against editing quality but a strict improvement on average.

Another pattern is that \textit{out-of-domain preservation is dramatically extended.} The bottom row shows the central finding. Under MEMIT batch editing, $C_{\text{Wiki}}$ degrades roughly linearly with edit count and reaches near-zero preservation at $5 \times 10^3$ edits on Llama-3, while $C_{\text{\method}}$ retains $0.3$--$0.5$ Preservation HM at the same scale. Under MEMIT sequential editing, the gap is more extreme: $C_{\text{Wiki}}$ collapses to zero within $10^2$--$10^3$ edits across all three models, while $C_{\text{\method}}$ sustains $0.5$--$0.7$ preservation throughout $10^4$ edits. AlphaEdit exhibits the same pattern in attenuated form: $C_{\text{\method}}$ maintains preservation flat across the full edit range, while $C_{\text{Wiki}}$ on Qwen-3 degrades sharply beyond $10^3$ sequential edits.

\begin{figure*}[h]
    \centering
    \includegraphics[width=\textwidth]{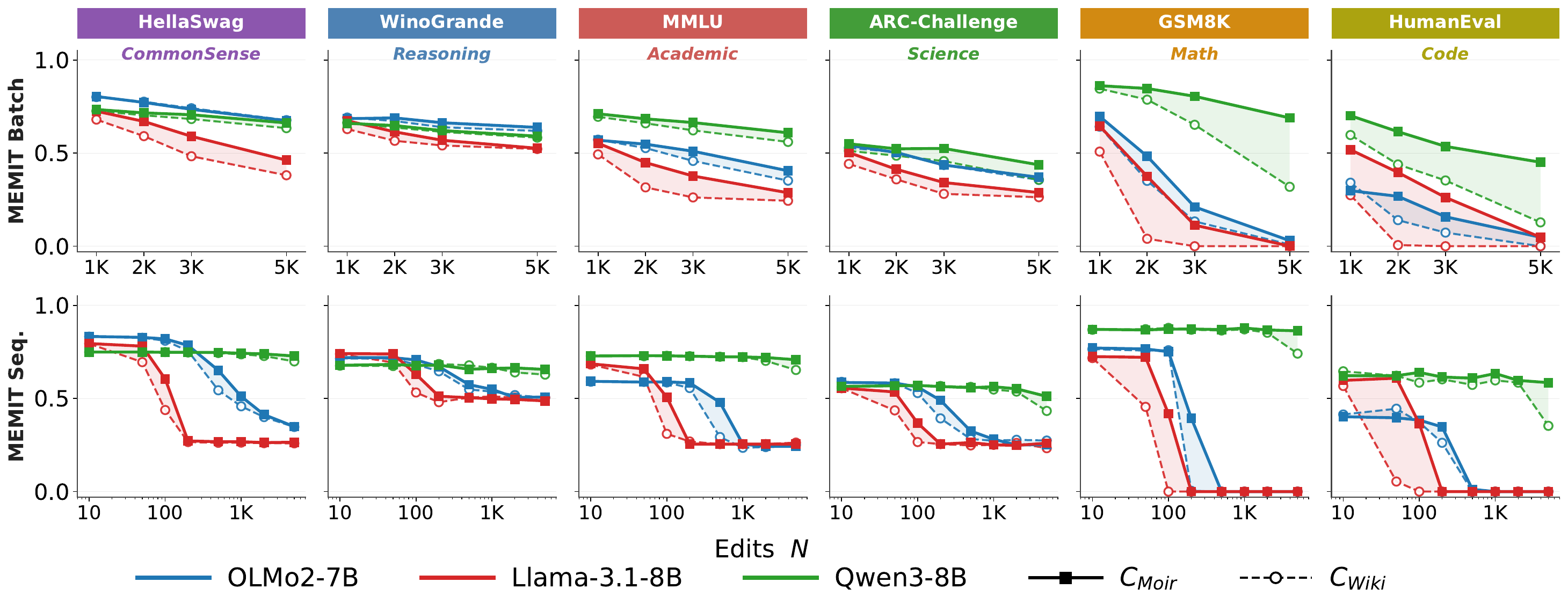}
    \caption{Per-task preservation under MEMIT. Six benchmark scores after batch and sequential editing on three instruction-tuned models. Shaded bands mark the gap. $C_\method$ consistently delays collapse, most visibly on GSM8K, HumanEval, and across the sequential cliff regime.}
    \vspace{-10pt} 
    \label{fig:pres_memit}
\end{figure*}  
\begin{figure*}[t]
    \centering
    \includegraphics[width=\textwidth]{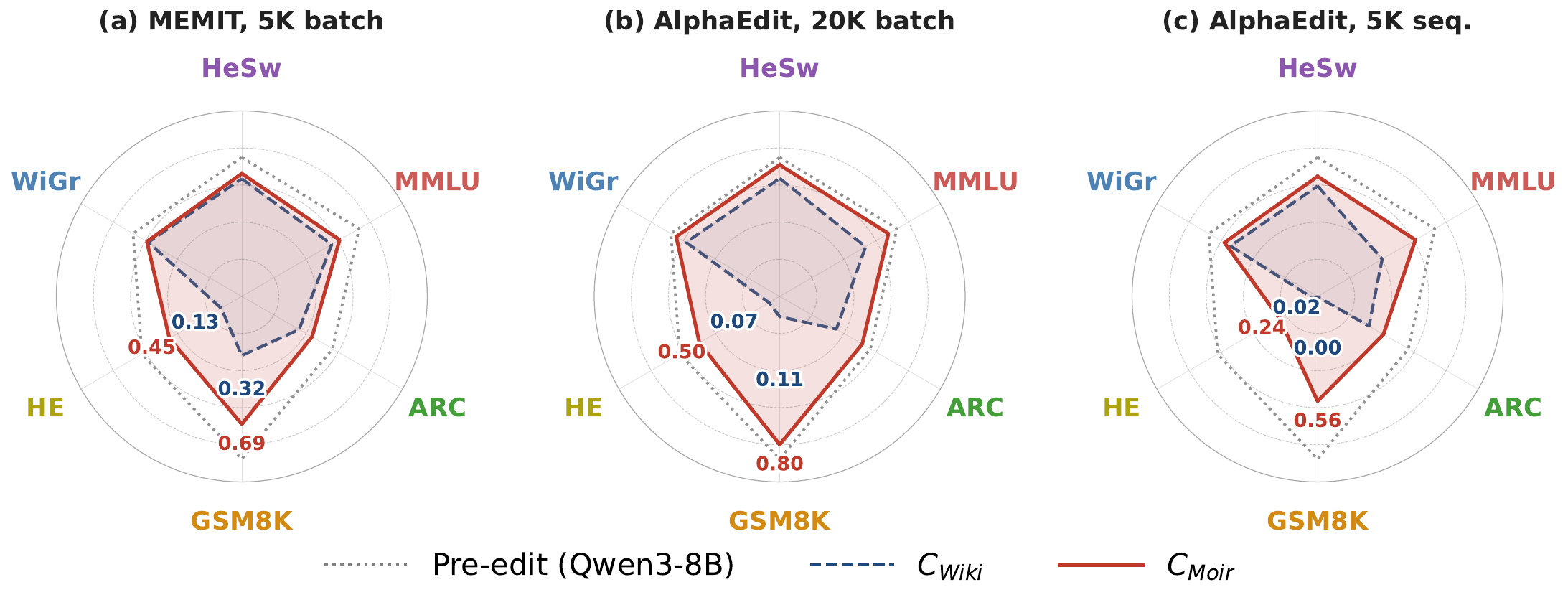}
    \caption{Representative radar chart on Qwen3-8B (analogous patterns for OLMo-2 and Llama-3.1 in Fig. 5). The collapse is asymmetric: $C_\text{Wiki}$ preserves encyclopedic axes (MMLU, WiGr, HeSw) but GSM8K/HE collapse toward zero. $C_\method$ closes this gap without disturbing the preserved axes.}
    \vspace{-15pt} 
    \label{fig:radar}
\end{figure*}  

\paragraph{Where the preservation gap lives.}
Figure~\ref{fig:pres_memit} and~\ref{fig:pres_alphaedit} shows task-wise preservation capacity (shades). Figure~\ref{fig:radar} resolves this by showing the per-benchmark preservation profile on Qwen-3-8B across three editing settings, overlaid on the pre-edit baseline. The collapse is sharply asymmetric: $C_{\text{Wiki}}$ holds encyclopedic capabilities (MMLU, WinoGrande) near baseline while visibly shrinking the polygon inward on the most distributionally distant axes: HumanEval and GSM8K. $C_{\text{\method}}$ closes this gap. GSM8K rises from $.06$ to $.78$ on AlphaEdit batch editing and from $.32$ to $.92$ on AlphaEdit sequential, with similar recovery on HumanEval, while the encyclopedic axes that $C_{\text{Wiki}}$ already preserves remain preserved. The improvement, therefore, comes from extending coverage into capability subspaces that $C_{\text{Wiki}}$ never protected, not from redistributing it.

\textbf{The Nature of Domain Collapse.} It is important to note that the near-zero performance on GSM8K and HumanEval under $C_{\text{Wiki}}$ represents a comprehensive collapse of these specific domains. Whether this degradation manifests as a loss of underlying logic or the destruction of domain-specific formatting (e.g., code syntax, step-by-step mathematical templates), the practical utility of these subspaces is entirely compromised. $C_{\text{Wiki}}$ fails to protect the structural integrity of these out-of-domain manifolds, whereas \method preserves the exact distributional footprint required to maintain them.

\paragraph{Why the gap widens with scale.}
Both editors accumulate parameter perturbations $\Delta W$ as edits proceed; the role of $C$ is to constrain how those perturbations project onto the activation space. When $C$ misses entire capability subspaces (as $C_{\text{Wiki}}$ does for code and math, per Table~\ref{tab:mixed_c0} and Figure~\ref{fig:radar}) the cumulative perturbation freely distorts those subspaces, and the distortion compounds with each edit. $C_{\text{\method}}$ samples from the model's actual internalized distribution, covering exactly the subspaces the model relies on, regardless of whether they appear in any external corpus. The benefit grows with the number of edits and the distance between the editing distribution and the protected capability.

\section{Discussion}
\label{sec:Discussion}



Our results show that this geometric mismatch determines which capabilities survive an edit and which collapse. Replacing the Wikipedia-derived $C_{\text{Wiki}}$ with $C_{\text{\method}}$, estimated from the model's own random-prefix generations, restores out-of-domain capability across all four editor-regime combinations and three model families we tested, without sacrificing editing quality and without requiring access to training data. The gap is most striking under sequential editing on specialized domains: $C_{\text{Wiki}}$ collapses GSM8K to 0.2\% on Qwen-3 within $5{\times}10^3$ AlphaEdit edits, while $C_{\text{\method}}$ preserves 56.6\%.

\begin{wrapfigure}{r}{0.5\textwidth} 
  \begin{center}
    \vspace{-10pt} 
    \includegraphics[width=0.5\textwidth]{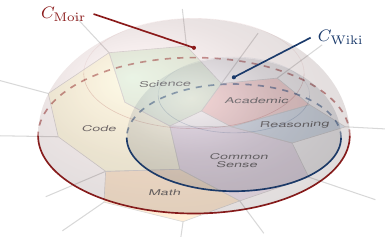}
    \vspace{-5pt} 
    \caption{Knowledge Coverage: $C_{\text{\method}}$ vs $C_{\text{Wiki}}$.}
    \label{fig:dom}
    \vspace{-10pt} 
  \end{center}
\end{wrapfigure}

NLL and JSD comparisons (Figure~3, Appendix~D.3) show that external proxies suffer from an inherent geometric bias relative to the operative distribution $p_\theta$ that persists regardless of sample size. Empirically, this led to a result we did not anticipate: $C_{\text{\method}}$ matches and sometimes exceeds the performance of an "Oracle" covariance built from the official OLMo-Mix-1124 corpus (Table~\ref{tab:mixed_c0}). We attribute this to the fact that post-training procedures like instruction tuning shift the model's operative manifold away from its raw pre-training state. In contrast, self-generation provides a dynamic map of this evolved geometry that static external corpora cannot replicate.

\paragraph{Limitations.}
Our study is bounded by three primary considerations. First, evaluation considered only two locate-then-edit families (MEMIT, AlphaEdit). Even though our distributional findings should generalize to any paradigm utilizing a covariance-like operator to define the preservation space (including memory-based or hypernetwork editors), this remains to be empirically verified. Second, the quality of \method depends on the model's generation behavior. In cases of extreme over-optimization as in extreme Reinforcement Learning from Human Feedback (RLHF), where the decoding distribution is pathologically mode-collapsed, self-generation might under-sample specialized or rare capabilities. 
Third, our experiments focus on instruction-tuned models in the 7--8B range; behavior at frontier scale (70B+) or on raw base (non-instruct) models warrants separate study. While \method consistently improves preservation, we do not claim self-generation is uniquely optimal. Stronger external mixtures may close part of the gap against Wikipedia, but the distributions most relevant to deployed models, such as post-training and usage data, are typically inaccessible. \method offers a practical, data-free approximation aligned with the model's post-training distribution.

\paragraph{Implications for editing as infrastructure.}
If knowledge editing is to serve as a robust operational layer for keeping deployed models current (enabling real-time factual patching, content removal, regulatory compliance, continuous updates), then preservation cannot be an afterthought tied to a 2020 Wikipedia dump. Our results indicate that this limitation is fixable without any change to existing editors: $C_{\text{\method}}$ requires a one-time, $\sim$2-hour generation as a drop-in component for MEMIT, AlphaEdit, or any future covariance-based editor. The broader methodological lesson is that proxies inherited from earlier work warrant re-evaluation as the field matures and models evolve. Treating this component with rigor is essential for transitioning model editing from a research demonstration into a reliable and practical discipline for model maintenance.

\clearpage

\bibliographystyle{plainnat}
\bibliography{references}

@article{meng2022locating,
  title={Locating and editing factual associations in {GPT}},
  author={Meng, Kevin and Bau, David and Andonian, Alex and Belinkov, Yonatan},
  journal={Advances in Neural Information Processing Systems (NeurIPS)},
  year={2022}
}

@inproceedings{
fang2024alphaedit,
title={{AlphaEdit}: Null-Space Constrained Model Editing for Language Models},
author={Junfeng Fang and Houcheng Jiang and Kun Wang and Yunshan Ma and Jie Shi and Xiang Wang and Xiangnan He and Tat-Seng Chua},
booktitle={International Conference on Learning Representations (ICLR)},
year={2025},
}

@article{touvron2023llama,
  title={The Llama 3 Herd of Models},
  author={Aaron Grattafiori and Abhimanyu Dubey and Abhinav Jauhri and Abhinav Pandey and Abhishek Kadian and Ahmad Al-Dahle and Aiesha Letman and Akhil Mathur and Alan Schelten and Alex Vaughan and others},
  journal={arXiv preprint arXiv:2407.21783},
  year={2024}
}

@article{bai2023qwen,
  title={Qwen technical report},
  author={Bai, Jinze and Bai, Shuai and Chu, Yunfei and others},
  journal={arXiv preprint arXiv:2309.16609},
  year={2023}
}

@article{groeneveld2024olmo,
  title={{OLMo}: Accelerating the science of language models},
  author={Groeneveld, Dirk and Beltagy, Iz and Walsh, Pete and others},
  journal={arXiv preprint arXiv:2402.00838},
  year={2024}
}

@article{soldaini2024dolma,
  title={Dolma: An open corpus of three trillion tokens for language model pretraining research},
  author={Soldaini, Luca and Kinney, Rodney and Bhagia, Akshita and others},
  journal={arXiv preprint arXiv:2402.00159},
  year={2024}
}

@misc{together2023redpajama,
  author = {{Together Computer}},
  title = {{RedPajama}-{Data}: An Open Source Recipe to Reproduce {LLaMA} training dataset},
  month = {April},
  year = {2023},
  url = {https://github.com/togethercomputer/RedPajama-Data}
}

@article{sakaguchi2020winogrande,
author = {Sakaguchi, Keisuke and Bras, Ronan Le and Bhagavatula, Chandra and Choi, Yejin},
title = {{WinoGrande}: an adversarial winograd schema challenge at scale},
year = {2021},
volume = {64},
number = {9},
journal = {Commun. ACM},
pages = {99–106},
numpages = {8}
}

@article{gu2024model,
  title={Model editing can hurt general abilities of large language models},
  author={Gu, Jia-Chen and Xu, Hao-Xiang and Ma, Jun-Yu and Lu, Pan and Ling, Zhen-Hua and Chang, Kai-Wei and Peng, Nanyun},
  journal={arXiv preprint arXiv:2401.04700},
  year={2024}
}

@inproceedings{hu2022lora,
  title     = {{LoRA}: Low-Rank Adaptation of Large Language Models},
  author    = {Hu, Edward J. and Shen, Yelong and Wallis, Phillip and 
               Allen-Zhu, Zeyuan and Li, Yuanzhi and Wang, Shean and 
               Wang, Lu and Chen, Weizhu},
  booktitle = {International Conference on Learning Representations (ICLR)},
  year      = {2022}
}

@inproceedings{mitchell2022serac,
  title     = {Memory-Based Model Editing at Scale},
  author    = {Mitchell, Eric and Lin, Charles and Bosselut, Antoine and 
               Manning, Christopher D. and Finn, Chelsea},
  booktitle = {International Conference on Machine Learning (ICML)},
  year      = {2022}
}

@inproceedings{hartvigsen2023grace,
  title     = {Aging with {GRACE}: Lifelong Model Editing with 
               Discrete Key-Value Adaptors},
  author    = {Hartvigsen, Thomas and Sankaranarayanan, Swami and 
               Palangi, Hamid and Kim, Yoon and Ghassemi, Marzyeh},
  booktitle = {Advances in Neural Information Processing Systems (NeurIPS)},
  year      = {2023}
}

@inproceedings{mitchell2022mend,
  title     = {Fast Model Editing at Scale},
  author    = {Mitchell, Eric and Lin, Charles and Bosselut, Antoine and 
               Finn, Chelsea and Manning, Christopher D.},
  booktitle = {International Conference on Learning Representations (ICLR)},
  year      = {2022}
}

@inproceedings{meng2022rome,
  title     = {Locating and Editing Factual Associations in {GPT}},
  author    = {Meng, Kevin and Bau, David and Andonian, Alex and 
               Belinkov, Yonatan},
  booktitle = {Advances in Neural Information Processing Systems (NeurIPS)},
  year      = {2022}
}

@inproceedings{meng2023memit,
  title     = {Mass-Editing Memory in a Transformer},
  author    = {Meng, Kevin and Sharma, Arnab Sen and Andonian, Alex and 
               Belinkov, Yonatan and Bau, David},
  booktitle = {International Conference on Learning Representations (ICLR)},
  year      = {2023}
}

@article{ma2024prune,
  title     = {Perturbation-Restrained Sequential Model Editing},
  author    = {Ma, Jun-Yu and Wang, Hong and Xu, Hao-Xiang and 
               Ling, Zhen-Hua and Gu, Jia-Chen},
  journal   = {arXiv preprint arXiv:2405.16821},
  year      = {2024}
}

@inproceedings{gu2024rect,
  title     = {Model Editing Harms General Abilities of Large Language 
               Models: Regularization to the Rescue},
  author    = {Gu, Jia-Chen and Xu, Hao-Xiang and Ma, Jun-Yu and Lu, Pan 
               and Ling, Zhen-Hua and Chang, Kai-Wei and Peng, Nanyun},
  booktitle = {International Conference on Machine Learning (ICML)},
  year      = {2024}
}

@inproceedings{yang2024should,
title={Should We Really Edit Language Models? On the Evaluation of Edited Language Models},
author={Qi Li and Xiang Liu and Zhenheng Tang and Peijie Dong and Zeyu Li and Xinglin Pan and Xiaowen Chu},
booktitle={Advances in Neural Information Processing Systems (NeurIPS)},
year={2024}
}

@article{chhetri2025understanding,
  title   = {Understanding Robustness of Model Editing in Code {LLMs}: An Empirical Study},
  author  = {Chhetri, Vinaik and Siddique, A. B. and Farooq, Umar},
  journal = {arXiv preprint arXiv:2511.03182},
  year    = {2025}
}

@article{liu2026covariance,
  title     = {Beyond the Covariance Trap: Unlocking Generalization in 
               Same-Subject Knowledge Editing for Large Language Models},
  author    = {Liu, Xiyu and Si, Qingyi and Liu, Zhengxiao and 
               Yang, Chenxu and Gu, Naibin and Lin, Zheng},
  journal   = {arXiv preprint arXiv:2603.15518},
  year      = {2026}
}

@inproceedings{olmo20252,
  title     = {2 {OLMo} 2 Furious},
  author    = {{OLMo Team} and Walsh, Pete and Soldaini, Luca and 
               Groeneveld, Dirk and others},
  booktitle = {Conference on Language Modeling (COLM)},
  year      = {2025},
  note      = {arXiv:2501.00656}
}

@inproceedings{li2024dclm,
  title     = {{DataComp-LM}: In Search of the Next Generation of 
               Training Sets for Language Models},
  author    = {Li, Jeffrey and Fang, Alex and Smyrnis, Georgios and 
               Ivgi, Maor and Jordan, Matt and Gadre, Samir and 
               Bansal, Hritik and Guha, Etash and Keh, Sedrick and 
               others},
  booktitle = {Advances in Neural Information Processing Systems (NeurIPS)},
  year      = {2024}
}

@article{li2023starcoder,
  title   = {{StarCoder}: May the Source Be with You!},
  author  = {Li, Raymond and Allal, Loubna Ben and Zi, Yangtian and 
             Muennighoff, Niklas and Kocetkov, Denis and Mou, Chenghao 
             and Marone, Marc and Akiki, Christopher and Li, Jia and 
             others},
  journal = {Transactions on Machine Learning Research (TMLR)},
  year    = {2023}
}

@inproceedings{azerbayev2024llemma,
  title     = {Llemma: An Open Language Model for Mathematics},
  author    = {Azerbayev, Zhangir and Schoelkopf, Hailey and Paster, 
               Keiran and Santos, Marco Dos and McAleer, Stephen and 
               Jiang, Albert Q. and Deng, Jia and Biderman, Stella 
               and Welleck, Sean},
  booktitle = {International Conference on Learning Representations (ICLR)},
  year      = {2024}
}

@misc{eval-harness,
  author       = {Gao, Leo and Tow, Jonathan and Abbasi, Baber and Biderman, Stella and Black, Sid and DiPofi, Anthony and Foster, Charles and Golding, Laurence and Hsu, Jeffrey and Le Noac'h, Alain and Li, Haonan and McDonell, Kyle and Muennighoff, Niklas and Ociepa, Chris and Phang, Jason and Reynolds, Laria and Schoelkopf, Hailey and Skowron, Aviya and Sutawika, Lintang and Tang, Eric and Thite, Anish and Wang, Ben and Wang, Kevin and Zou, Andy},
  title        = {The Language Model Evaluation Harness},
  month        = 07,
  year         = 2024,
  publisher    = {Zenodo},
  version      = {v0.4.3},
  doi          = {10.5281/zenodo.12608602},
  url          = {https://zenodo.org/records/12608602}
}

@article{shumailov2024ai,
  title={{AI} models collapse when trained on recursively generated data},
  author={Shumailov, Ilia and Shumaylov, Zakhar and Zhao, Yiren and Papernot, Nicolas and Anderson, Ross and Gal, Yarin},
  journal={Nature},
  volume={631},
  number={8022},
  pages={755--759},
  year={2024}
}

@inproceedings{geva2021transformer,
  title={Transformer feed-forward layers are key-value memories},
  author={Geva, Mor and Schuster, Roei and Berant, Jonathan and Levy, Omer},
  booktitle={Empirical Methods in Natural Language Processing (EMNLP)},
  year={2021}
}

@article{wang2023easyedit,
  title={Easyedit: An easy-to-use knowledge editing framework for large language models},
  author={Peng Wang and Ningyu Zhang and Bozhong Tian and Zekun Xi and Yunzhi Yao and Ziwen Xu and Mengru Wang and Shengyu Mao and Xiaohan Wang and Siyuan Cheng and Kangwei Liu and Yuansheng Ni and Guozhou Zheng and Huajun Chen},
  journal={arXiv preprint arXiv:2308.07269},
  year={2023}
}

@article{zhang2024knowedit,
  title={A Comprehensive Study of Knowledge Editing for Large Language Models},
  author={Ningyu Zhang and Yunzhi Yao and Bozhong Tian and Peng Wang and Shumin Deng and Mengru Wang and Zekun Xi and Shengyu Mao and Jintian Zhang and Yuansheng Ni and Siyuan Cheng and Ziwen Xu and Xin Xu and Jia-Chen Gu and Yong Jiang and Pengjun Xie and Fei Huang and Lei Liang and Zhiqiang Zhang and Xiaowei Zhu and Jun Zhou and Huajun Chen},
  journal={arXiv preprint arXiv:2401.01286},
  year={2024}
}

@inproceedings{lin-etal-2024-navigating,
    title = "Navigating the Dual Facets: A Comprehensive Evaluation of Sequential Memory Editing in Large Language Models",
    author = "Lin, Zihao  and
      Beigi, Mohammad  and
      Li, Hongxuan  and
      Zhou, Yufan  and
      Zhang, Yuxiang  and
      Wang, Qifan  and
      Yin, Wenpeng  and
      Huang, Lifu",
    booktitle = "Association for Computational Linguistics (ACL)",
    year = "2024",
}

@inproceedings{hendrycks2021measuring,
  title={Measuring Massive Multitask Language Understanding},
  author={Dan Hendrycks and Collin Burns and Steven Basart and Andy Zou and Mantas Mazeika and Dawn Song and Jacob Steinhardt},
  booktitle={International Conference on Learning Representations (ICLR)},
  year={2021}
}

@article{cobbe2021training,
  title={Training Verifiers to Solve Math Word Problems}, 
  author={Karl Cobbe and Vineet Kosaraju and Mohammad Bavarian and Mark Chen and Heewoo Jun and Lukasz Kaiser and Matthias Plappert and Jerry Tworek and Jacob Hilton and Reiichiro Nakano and Christopher Hesse and John Schulman},
  journal={arXiv preprint arXiv:2110.14168},
  year={2021}
}

@inproceedings{zellers2019hellaswag,
  title={{HellaSwag}: Can a Machine Really Finish Your Sentence?},
  author={Zellers, Rowan and Holtzman, Ari and Bisk, Yonatan and Farhadi, Ali and Choi, Yejin},
  booktitle={Association for Computational Linguistics (ACL)},
  year={2019}
}

@article{clark2018think,
  title={Think you have Solved Question Answering? Try {ARC}, the {AI2} Reasoning Challenge},
  author={Clark, Peter and Cowhey, Isaac and Etzioni, Oren and Khot, Tushar and Sabharwal, Ashish and Schoenick, Carissa and Tafjord, Oyvind},
  journal={arXiv preprint arXiv:1803.05457},
  year={2018}
}

@article{chen2021evaluating,
  title={Evaluating Large Language Models Trained on Code}, 
  author={Mark Chen and Jerry Tworek and Heewoo Jun and Qiming Yuan and Henrique Ponde de Oliveira Pinto and Jared Kaplan and Harri Edwards and Yuri Burda and Nicholas Joseph and Greg Brockman and Alex Ray and Raul Puri and Gretchen Krueger and Michael Petrov and Heidy Khlaaf and Girish Sastry and Pamela Mishkin and Brooke Chan and Scott Gray and Nick Ryder and Mikhail Pavlov and Alethea Power and Lukasz Kaiser and Mohammad Bavarian and Clemens Winter and Philippe Tillet and Felipe Petroski Such and Dave Cummings and Matthias Plappert and Fotios Chantzis and Elizabeth Barnes and Ariel Herbert-Voss and William Hebgen Guss and Alex Nichol and Alex Paino and Nikolas Tezak and Jie Tang and Igor Babuschkin and Suchir Balaji and Shantanu Jain and William Saunders and Christopher Hesse and Andrew N. Carr and Jan Leike and Josh Achiam and Vedant Misra and Evan Morikawa and Alec Radford and Matthew Knight and Miles Brundage and Mira Murati and Katie Mayer and Peter Welinder and Bob McGrew and Dario Amodei and Sam McCandlish and Ilya Sutskever and Wojciech Zaremba},
  journal={arXiv preprint arXiv:2107.03374},
  year={2021}
}


\appendix

\newpage

\section*{Appendix Overview}
The supplementary material is organized as follows:
\begin{itemize}
    \item \textbf{Appendix A (Extended Methodology):} Details the \method operational algorithm (A.1) and the construction of the OLMo-Mix-1124 Oracle (A.2).
    \item \textbf{Appendix B (Theoretical Definitions and Proofs):} Provides the full parameter update formulations, K-FAC setup, and proofs for Proposition 1.
    \item \textbf{Appendix C (Experimental Setup and Baselines):} Details models, editors, hyperparameters, evaluation metrics, causal tracing, and pre-edit baselines.
    \item \textbf{Appendix D (Additional Distributional Analyses):} Contains the self-generated token distributions and per-document NLL analyses.
    \item \textbf{Appendix E (Full Tabular Results):} Presents comprehensive tabular results for all editors, models, and regimes.
\end{itemize}

\clearpage
\section{Extended Methodology}
\label{app:methodology}

\subsection{\method Operational Details}
\label{app:memoir_details}

\paragraph{Notation.}
Let $\phi_\ell(\cdot)$ denote the function that maps an input token (in context) to its 
\emph{key} $\mathbf{k} \in \mathbb{R}^d$ at layer $\ell$, defined as 
the input to the second MLP linear projection $W_{\text{out}}^\ell$ 
in the targeted transformer block---the same activation extracted by 
ROME and MEMIT for $C$ estimation. The sample estimator we compute 
matches the form used by prior work,
\begin{equation}
    \hat{C}_{\text{\method}}^\ell 
    = \frac{1}{N} \sum_{i=1}^{N} \mathbf{k}_i^\ell \, (\mathbf{k}_i^\ell)^\top,
    \qquad \mathbf{k}_i^\ell = \phi_\ell(x_i),
    \label{eq:memoir_estimator}
\end{equation}
and differs only in how the keys $\{\mathbf{k}_i^\ell\}_{i=1}^N$ 
are obtained.

\paragraph{Key insight.}
An autoregressive language model $p_\theta$ defines an implicit 
distribution over token sequences shaped by its full training history. 
Sampling tokens from this distribution and extracting their keys 
yields an empirical estimator whose population limit approximates 
the key covariance under the true training distribution:
\begin{equation}
    \mathbb{E}_{x \sim p_\theta(\,\cdot\,\mid s)} 
    \left[ \phi_\ell(x) \, \phi_\ell(x)^\top \right]
    \;\approx\;
    \mathbb{E}_{x \sim \mathcal{D}_{\text{train}}} 
    \left[ \phi_\ell(x) \, \phi_\ell(x)^\top \right],
    \label{eq:memoir_hypothesis}
\end{equation}
where $s$ is a short seed prefix from which generation proceeds. 
The approximation is exact when $p_\theta(\,\cdot\,\mid s) = 
\mathcal{D}_{\text{train}}$ marginally over $s$; in practice, the 
choice of $s$ controls how broadly the model's internalized 
distribution is sampled (see Section 4 and Table 2).

\paragraph{Self-generation protocol.}
Given a target model $p_\theta$, layer set $\mathcal{L}$, total token 
budget $N$, and per-sequence length $S$, \method estimates 
$\{\hat{C}_{\text{\method}}^\ell\}_{\ell \in \mathcal{L}}$ via 
Algorithm~\ref{alg:memoir}.

\begin{algorithm}[h]
\caption{\method: Self-Generated Covariance Estimation}
\label{alg:memoir}
\begin{algorithmic}[1]
\REQUIRE Model $p_\theta$, target layers $\mathcal{L}$, token budget $N$, 
         max sequence length $S$, vocabulary $\mathcal{V}$
\ENSURE Sample covariances $\{\hat{C}_{\text{\method}}^\ell\}_{\ell \in \mathcal{L}}$
\STATE $\hat{C}^\ell \leftarrow \mathbf{0}_{d \times d}$ for each $\ell \in \mathcal{L}$;
       \quad $n \leftarrow 0$
\WHILE{$n < N$}
    \STATE Sample seed token $z \sim \text{Uniform}(\mathcal{V})$ 
           \hfill \COMMENT{$\texttt{rand}\times 1$}
    \STATE Generate $x_{1:T} \sim p_\theta(\,\cdot\,\mid z)$ with 
           $T \leq S$ \hfill \COMMENT{Autoregressive sampling}
    \FOR{each $\ell \in \mathcal{L}$}
        \STATE Extract keys $\mathbf{k}_t^\ell = \phi_\ell(x_t)$ 
               for $t = 1, \dots, T$ via forward pass
        \STATE $\hat{C}^\ell \leftarrow \hat{C}^\ell 
               + \sum_{t=1}^{T} \mathbf{k}_t^\ell \, (\mathbf{k}_t^\ell)^\top$
    \ENDFOR
    \STATE $n \leftarrow n + T$
\ENDWHILE
\STATE $\hat{C}_{\text{\method}}^\ell \leftarrow \hat{C}^\ell / n$ for each 
       $\ell \in \mathcal{L}$
\RETURN $\{\hat{C}_{\text{\method}}^\ell\}_{\ell \in \mathcal{L}}$
\end{algorithmic}
\end{algorithm}

\paragraph{Design choices.}
We initialize generation from a single uniformly random vocabulary token 
rather than from a structured prompt or the BOS token alone. As shown 
in Section 4, this minimal perturbation breaks the 
mode collapse induced by post-training (where BOS-conditioned generation 
concentrates on instruction-following openings) without steering 
generation away from the model's natural activation statistics. The 
resulting key set $\{\mathbf{k}_t^\ell\}$ is collected directly from 
the forward pass during generation, so \method adds no inference cost 
beyond the generation itself.

\subsection{OLMo-Mix-1124 Oracle Construction}
\label{app:oracle}

To test whether distributional alignment between $C$ and the model's pretraining mixture can recover the out-of-domain capabilities lost under conventional Wikipedia-based estimation, we construct an oracle covariance that mirrors the official OLMo-Mix-1124 corpus used to pretrain OLMo-2~\citep{olmo20252}. OLMo-Mix-1124 consists of five components spanning web text, code, scientific writing, mathematical content, and encyclopedic prose, in proportions chosen by the OLMo team to balance generalist coverage with domain-specific capability.

\paragraph{Component-wise covariance estimation.}
Rather than sampling from the joint mixture directly, we estimate a separate uncentered covariance matrix $C^{(i)}$ for each of the five components by drawing 100K tokens from each and computing
\begin{equation}
    C^{(i)} = \frac{1}{n_i}\sum_{k=1}^{n_i} \mathbf{k}_k^{(i)} {\mathbf{k}_k^{(i)}}^\top,
\end{equation}
where $\mathbf{k}_k^{(i)}$ denotes the $k$-th key vector at the targeted MLP layer for tokens drawn from component $i$, and $n_i = 100\text{K}$ for all $i$. The five components and their corresponding sources are:

\begin{itemize}
    \vspace{-5pt}
    \item \textbf{Web (DCLM-baseline)} \citep{li2024dclm}: filtered Common Crawl text used as the dominant pretraining source.
    \item \textbf{Code (StarCoder)} \citep{li2023starcoder}: source code repositories spanning multiple programming languages.
    \item \textbf{Science (Dolma~1.7 papers)} \citep{soldaini2024dolma}: the academic papers subset of Dolma~1.7, primarily peS2o.
    \item \textbf{Math (Proof Pile~II)} \citep{azerbayev2024llemma}: mathematical text including formal proofs, ArXiv math papers, and StackExchange.
    \item \textbf{Wikipedia}: English Wikipedia, the same source used by the conventional baseline but resampled here for consistency.
    \vspace{-5pt}
\end{itemize}

\paragraph{Mixture composition.}
The oracle covariance is composed as a weighted sum of the component-wise estimates:
\begin{equation}
    C^{\text{oracle}} = \sum_{i=1}^{5} w_i\, C^{(i)},
\end{equation}
with weights $w_i$ matching the official OLMo-Mix-1124 mixture proportions: 95.4\% web, 2.1\% code, 2.1\% science, 0.6\% math, and 0.1\% Wikipedia~\citep{olmo20252}.

\paragraph{Justification.}
This decomposed construction yields the same expectation as direct sampling from the joint mixture, since the uncentered covariance operator is linear in the underlying distribution: if $p_{\text{mix}} = \sum_i w_i p_i$, then $\mathbb{E}_{\mathbf{k} \sim p_{\text{mix}}}[\mathbf{k}\mathbf{k}^\top] = \sum_i w_i \mathbb{E}_{\mathbf{k} \sim p_i}[\mathbf{k}\mathbf{k}^\top]$. The decomposition has two practical advantages over direct sampling. First, it ensures that each $C^{(i)}$ is statistically well-conditioned, since 100K tokens per component is sufficient for stable estimation even for low-weight components such as Math (0.6\%) and Wikipedia (0.1\%), which would receive only 600 and 100 samples respectively under proportional sampling at the same total budget. Second, it allows efficient ablation across mixture ratios: alternative weightings $w_i$ can be evaluated without re-sampling, by simply recomposing the precomputed component matrices.

\paragraph{Dolma~1.7 (Wiki subset only) baseline.}
For comparison, we also report a baseline that estimates $C$ exclusively from the Wikipedia subset of Dolma~1.7. This represents a minimal departure from the conventional Wikipedia proxy: the data source is encyclopedic prose, but drawn from the curated Dolma release rather than the HuggingFace \texttt{wikipedia} dump used by ROME and MEMIT. The persistence of cross-domain collapse under this baseline (Table 1) confirms that the distributional limitation, not the specific Wikipedia source, is responsible for the observed degradation.
\clearpage
\section{Theoretical Definitions and Proofs}
\label{app:theory}

This appendix collects the mathematical material referenced from 
Section~\ref{sec:theory}. Appendix~\ref{app:theory_setup} states the 
parameter updates of MEMIT and AlphaEdit and the K-FAC setup. 
Appendix~\ref{app:g_cancellation} proves proposition~\ref{thm:g-cancel} 
and clarifies its scope across closed-form covariance-constrained 
editors. Appendix~\ref{app:bounds} records the standard JSD-based 
upper bound on covariance bias used as scaffolding in 
Section~\ref{ssec:distributional}, and discusses why we do not 
elevate it to a proposition.

\subsection{Parameter Updates and K-FAC Setup}
\label{app:theory_setup}

\paragraph{Full Parameter Update Formulations.}
For completeness, we detail the exact parameter updates $\Delta W$ for 
the covariance-constrained editors analyzed in this work. Given a 
residual matrix $R$ encoding target values and keys $K_{\text{new}}$ 
for the new facts, the update must preserve previously edited facts 
($K_{\text{past}}$) and foundational knowledge ($K_0$).

\textbf{MEMIT (Soft Constraint):} Incorporates $C = K_0 K_0^\top$ as a 
soft regularization penalty:
\begin{equation}
\label{eq:memit-update}
\Delta W_{\text{MEMIT}} = R K_{\text{new}}^\top 
\big(K_{\text{past}} K_{\text{past}}^\top + K_{\text{new}} K_{\text{new}}^\top + \lambda C\big)^{-1},
\end{equation}

\textbf{AlphaEdit (Hard Constraint):} Projects the update strictly into 
the null space of $C$. Applying SVD to $C$ and discarding eigenvectors 
with non-zero eigenvalues yields a submatrix $\hat{U}$ spanning the 
null space. The orthogonal projection matrix is 
$P = \hat{U}\hat{U}^\top$, guaranteeing $P K_0 = \mathbf{0}$. The 
update is:
\begin{equation}
\label{eq:alphaedit-update}
    \Delta W_{\text{AlphaEdit}} = R K_{\text{new}}^\top P \left( K_{\text{past}} K_{\text{past}}^\top P + K_{\text{new}} K_{\text{new}}^\top P + I \right)^{-1}.
\end{equation}
Neither~\eqref{eq:memit-update} nor~\eqref{eq:alphaedit-update} contains 
an output-side factor $G_\ell$: 
both reduce to an input-only form via the G-cancellation of proposition 1; the original derivations of MEMIT and AlphaEdit involve a K-FAC-style weighted objective in which G appears, but it is eliminated in the closed-form solution.

\paragraph{Activations and the K-FAC framework.} 
To understand the geometric role of $C$ in these updates, we view them 
through the lens of the K-FAC framework. For an MLP layer indexed by 
$\ell$ in a transformer, let $\phi_\ell : \mathcal{X} \to 
\mathbb{R}^{d_\ell}$ denote the layer-input activation 
map---specifically, the input to the second linear projection 
$W_{\text{out}}^\ell$ in the targeted block, matching the activation 
extracted by ROME and MEMIT for $C$ estimation. Throughout this 
appendix we assume the boundedness condition
\begin{equation}
\label{eq:boundedness}
    M_\ell := \sup_{x \in \mathcal{X}} \|\phi_\ell(x)\|_2 < \infty,
\end{equation}
which holds in practice via layer normalization. Under K-FAC, the 
layer Fisher information matrix factorizes as
\begin{equation}
\label{eq:kfac}
    F_\ell \approx A_\ell \otimes G_\ell, \quad A_\ell(p) := \mathbb{E}_{x \sim p}\!\!\left[\phi_\ell(x)\phi_\ell(x)^\top\right], \quad G_\ell(p) := \mathbb{E}_{x \sim p}\!\!\left[g_\ell(x)g_\ell(x)^\top\right],
\end{equation}
where $g_\ell(x)$ is the output-side gradient. We interchangeably 
write $\Sigma_\ell(p) = A_\ell(p)$ to emphasize that this is also the 
population target of the empirical preservation matrix $C$ in MEMIT 
and AlphaEdit.

\subsection{Proof of proposition~\ref{thm:g-cancel} and its Scope}
\label{app:g_cancellation}

We restate proposition~\ref{thm:g-cancel} for convenience, give two
complementary proofs, and then discuss the scope across the three
editors analyzed in this work.

\begin{proposition}[Restatement of proposition~\ref{thm:g-cancel}]
\label{thm:g-cancel-restated}
Let $A \in \mathbb{R}^{d_{\mathrm{in}} \times d_{\mathrm{in}}}$ be positive definite and
$G \in \mathbb{R}^{d_{\mathrm{out}} \times d_{\mathrm{out}}}$ be
positive semi-definite. For
$K_{\mathrm{new}} \in \mathbb{R}^{d_{\mathrm{in}} \times m}$ satisfying
$K_{\mathrm{new}}^\top A^{-1} K_{\mathrm{new}} \succ 0$ and
$R \in \mathbb{R}^{d_{\mathrm{out}} \times m}$, the constrained
minimum-norm problem
\begin{equation}
\label{eq:rome-min-norm-app}
    \min_{\Delta W \in \mathbb{R}^{d_{\mathrm{out}} \times d_{\mathrm{in}}}}
    \; \mathrm{vec}(\Delta W)^\top (A \otimes G)\, \mathrm{vec}(\Delta W)
    \quad \text{s.t.} \quad \Delta W\, K_{\mathrm{new}} = R
\end{equation}
admits
\[
\Delta W^\star \;=\; R\,(K_{\mathrm{new}}^\top A^{-1} K_{\mathrm{new}})^{-1}\,
K_{\mathrm{new}}^\top A^{-1}
\]
as a minimizer, and this expression is independent of $G$. The minimizer
is unique whenever $G \succ 0$.
\end{proposition}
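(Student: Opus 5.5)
We will use the vec–Kronecker identity to turn the objective into a trace, $\mathrm{vec}(\Delta W)^\top (A\otimes G)\,\mathrm{vec}(\Delta W) = \mathrm{tr}(G\,\Delta W\,A\,\Delta W^\top)$. This holds for the column-stacking convention, with $A$ acting on the input index $d_{\mathrm{in}}$ and $G$ on the output index. The constraint $\Delta W K_{\mathrm{new}} = R$ is affine in $\Delta W$. The feasible set is nonempty, since $\Delta W^\star$ below is feasible. The objective is a convex quadratic because $A\otimes G \succeq 0$. So it suffices to exhibit a feasible point that satisfies first-order optimality, i.e., one whose objective gradient lies in the span of the constraint gradients.

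First, we check feasibility of $\Delta W^\star = R\,(K_{\mathrm{new}}^\top A^{-1} K_{\mathrm{new}})^{-1} K_{\mathrm{new}}^\top A^{-1}$. Right-multiplying by $K_{\mathrm{new}}$ gives $R\,M^{-1}M = R$, where $M := K_{\mathrm{new}}^\top A^{-1}K_{\mathrm{new}} \succ 0$ by hypothesis.

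Second, rather than using Lagrange multipliers, we use a direct orthogonality (Pythagorean) argument, which makes the $G$-independence transparent. Write any feasible point as $\Delta W = \Delta W^\star + E$ with $E K_{\mathrm{new}} = 0$. Expanding the trace gives
\[
\mathrm{tr}(G\Delta W A\Delta W^\top) = \mathrm{tr}(G\Delta W^\star A\Delta W^{\star\top}) + 2\,\mathrm{tr}(G\,E\,A\,\Delta W^{\star\top}) + \mathrm{tr}(G E A E^\top).
\]
The cross term vanishes. Indeed, $A\,\Delta W^{\star\top} = A A^{-1}K_{\mathrm{new}}M^{-1}R^\top = K_{\mathrm{new}}M^{-1}R^\top$, using the symmetry of $A$ and $M$. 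Hence $E A \Delta W^{\star\top} = (E K_{\mathrm{new}})M^{-1}R^\top = 0$, and this holds for every $G$.

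The last term equals $\mathrm{tr}\big((G^{1/2}EA^{1/2})(G^{1/2}EA^{1/2})^\top\big) = \|G^{1/2}EA^{1/2}\|_F^2 \ge 0$. Therefore $\Delta W^\star$ attains the minimum. The formula for $\Delta W^\star$ involves only $A$, $K_{\mathrm{new}}$ and $R$, which gives $G$-independence.

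For uniqueness when $G\succ 0$: equality forces $G^{1/2}EA^{1/2}=0$. Since both square roots are invertible, this gives $E=0$. When $G$ is only PSD, the argument still shows that $\Delta W^\star$ is a minimizer, and non-uniqueness can occur along directions $E$ with $\mathrm{range}(E)\subseteq\ker G$. This matches the statement, which asserts uniqueness only when $G\succ 0$.

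The main obstacle is bookkeeping rather than substance. We must fix the vec convention so that $A\otimes G$, and not $G\otimes A$, corresponds to $\mathrm{tr}(G\Delta W A\Delta W^\top)$, using $\mathrm{vec}(G X A) = (A^\top\otimes G)\mathrm{vec}(X)$ together with the symmetry of $A$. We must also track transposes in the cross-term cancellation.

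As an optional cross-check, the Lagrangian stationarity condition $G\,\Delta W\,A = \Lambda K_{\mathrm{new}}^\top$ is solved by $\Delta W^\star$ with $\Lambda = G R M^{-1}$. Here $G$ appears only in the multiplier, not in the primal solution, and this is exactly the cancellation the paper invokes.
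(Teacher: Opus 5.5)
Your proposal is correct and follows the paper's first (perturbation) proof essentially step for step. Both arguments check feasibility by substitution, write $\Delta W = \Delta W^\star + E$ with $E K_{\mathrm{new}} = 0$, kill the cross term via $A\,\Delta W^{\star\top} = K_{\mathrm{new}}(K_{\mathrm{new}}^\top A^{-1}K_{\mathrm{new}})^{-1}R^\top$, and bound the remainder by $\|G^{1/2}EA^{1/2}\|_F^2 \ge 0$, with equality forcing $E = 0$ when $G \succ 0$; your Lagrangian cross-check matches the paper's second proof, and your explicit check of the vec--Kronecker convention spells out a detail the paper leaves implicit.
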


We give two proofs: a perturbation argument that handles the PSD case
directly, and a Lagrangian argument that recovers the original
ROME-style derivation in the PD case and extends to PSD $G$ by
continuity.

\begin{proof}[First proof (perturbation)]
\emph{Feasibility.} Direct substitution gives
\[
\Delta W^\star K_{\mathrm{new}}
= R\,(K_{\mathrm{new}}^\top A^{-1} K_{\mathrm{new}})^{-1}
  (K_{\mathrm{new}}^\top A^{-1} K_{\mathrm{new}}) = R.
\]

\emph{Optimality.} Any feasible $\Delta W$ admits the decomposition
$\Delta W = \Delta W^\star + \delta W$ with
$\delta W\, K_{\mathrm{new}} = 0$. The objective expands as
\begin{align*}
\mathrm{Tr}(G\, \Delta W\, A\, \Delta W^\top)
&= \mathrm{Tr}(G\, \Delta W^\star A\, \Delta W^{\star\top}) \\
&\quad + 2\,\mathrm{Tr}(G\, \Delta W^\star A\, \delta W^\top)
       + \mathrm{Tr}(G\, \delta W\, A\, \delta W^\top).
\end{align*}
The identity
$\Delta W^\star A = R\,(K_{\mathrm{new}}^\top A^{-1} K_{\mathrm{new}})^{-1}
K_{\mathrm{new}}^\top$
collapses the cross term to a Frobenius inner product against
$\delta W\, K_{\mathrm{new}}$:
\[
\mathrm{Tr}(G\, \Delta W^\star A\, \delta W^\top)
= \big\langle\, G R\,(K_{\mathrm{new}}^\top A^{-1} K_{\mathrm{new}})^{-1},\;
\delta W\, K_{\mathrm{new}} \,\big\rangle_F = 0
\]
for \emph{any} $G$, since $\delta W\, K_{\mathrm{new}} = 0$. The residual
satisfies
\[
\mathrm{Tr}(G\, \delta W\, A\, \delta W^\top)
= \|G^{1/2}\, \delta W\, A^{1/2}\|_F^2 \geq 0,
\]
with equality forcing $\delta W = 0$ when $G \succ 0$. The single-vector
case ($k^* \in \mathbb{R}^{d_{\mathrm{in}}}$,
$b \in \mathbb{R}^{d_{\mathrm{out}}}$) follows identically, yielding
$\Delta W^\star = b\,(A^{-1} k^*)^\top / ({k^*}^\top A^{-1} k^*)$.
\end{proof}

\begin{proof}[Second proof (Lagrangian; PD case directly, PSD by continuity)]
Assume first $G \succ 0$. The Lagrangian
\[
\mathcal{L}
= \mathrm{Tr}(G\, \Delta W\, A\, \Delta W^\top)
- \mathrm{Tr}\!\big(\Lambda^\top(\Delta W\, K_{\mathrm{new}} - R)\big)
\]
has stationarity condition
\[
2 G\, \Delta W\, A = \Lambda\, K_{\mathrm{new}}^\top
\;\Longrightarrow\;
\Delta W = \tfrac{1}{2}\, G^{-1}\, \Lambda\, K_{\mathrm{new}}^\top\, A^{-1}.
\]
Substituting into the constraint $\Delta W\, K_{\mathrm{new}} = R$ yields
$\tfrac{1}{2}\, G^{-1}\, \Lambda\,
(K_{\mathrm{new}}^\top A^{-1} K_{\mathrm{new}}) = R$, so
$\Lambda = 2\, G\, R\,(K_{\mathrm{new}}^\top A^{-1} K_{\mathrm{new}})^{-1}$.
Back-substitution cancels $G$:
\[
\Delta W^\star
= G^{-1} \cdot G\, R\,(K_{\mathrm{new}}^\top A^{-1} K_{\mathrm{new}})^{-1}\,
  K_{\mathrm{new}}^\top A^{-1}
= R\,(K_{\mathrm{new}}^\top A^{-1} K_{\mathrm{new}})^{-1}\,
  K_{\mathrm{new}}^\top A^{-1}.
\]
Strict convexity gives uniqueness.

For PSD $G$, set $G_\epsilon := G + \epsilon I \succ 0$ for
$\epsilon > 0$. Applying the derivation above to $G_\epsilon$ produces a
minimizer that is \emph{independent of $\epsilon$} (since $G_\epsilon$
cancels identically): $\Delta W^\star_\epsilon = \Delta W^\star$. For
any feasible $\Delta W$,
\[
\mathrm{Tr}(G_\epsilon\, \Delta W\, A\, \Delta W^\top)
\;\geq\;
\mathrm{Tr}(G_\epsilon\, \Delta W^\star\, A\, \Delta W^{\star\top}).
\]
Letting $\epsilon \to 0^+$ and using continuity of the objective in
$G$,
\[
\mathrm{Tr}(G\, \Delta W\, A\, \Delta W^\top)
\;\geq\;
\mathrm{Tr}(G\, \Delta W^\star\, A\, \Delta W^{\star\top}),
\]
so $\Delta W^\star$ minimizes the PSD problem. Uniqueness can fail when
$G \not\succ 0$: if $\delta W\, K_{\mathrm{new}} = 0$ and
$G\, \delta W = 0$, then $\Delta W^\star + \delta W$ is also a
minimizer.
\end{proof}

\subsection{Covariance Bias and the JSD Upper Bound}
\label{app:bounds}

Section~\ref{ssec:distributional} cites a standard upper bound on the 
covariance bias $\|\Sigma_\ell(q) - \Sigma_\ell(p_\theta)\|_F$ in terms 
of the Jensen--Shannon divergence. We record that bound here for 
completeness.

\begin{lemma}[TV-Lipschitz continuity of $\Sigma_\ell$]
\label{lem:tv-lipschitz}
For probability distributions $p, q$ on $\mathcal{X}$ and bound $M_\ell$ from~\eqref{eq:boundedness},
\begin{equation}
\label{eq:tv-lipschitz}
    \big\| \Sigma_\ell(p) - \Sigma_\ell(q) \big\|_F \;\leq\; 2 M_\ell^2 \cdot \|p - q\|_{\mathrm{TV}}.
\end{equation}
\end{lemma}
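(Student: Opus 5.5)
We argue directly from the definition $\Sigma_\ell(p) = \mathbb{E}_{x\sim p}[\phi_\ell(x)\phi_\ell(x)^\top]$, treating the difference of covariances as the integral of a fixed bounded matrix-valued function against the signed measure $p-q$. Write $F(x) := \phi_\ell(x)\phi_\ell(x)^\top$. Then
\[
\Sigma_\ell(p) - \Sigma_\ell(q) = \int_{\mathcal{X}} F(x)\,\mathrm{d}(p-q)(x).
\]
This integral is well defined because $F$ is bounded and measurable under~\eqref{eq:boundedness}. The proof then needs only two ingredients: a uniform bound on $\|F(x)\|_F$, and the standard duality between total variation and bounded test functions.

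\textbf{Step 1 (pointwise bound).} The matrix $F(x)$ is rank one, so $\|F(x)\|_F = \|\phi_\ell(x)\|_2^2 \le M_\ell^2$ for every $x$, by~\eqref{eq:boundedness}.

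\textbf{Step 2 (pass the norm inside).} By convexity of the Frobenius norm (a Jensen/triangle inequality for Bochner integrals), applied to the Jordan decomposition $p - q = \mu^+ - \mu^-$,
\[
\Big\|\int F\,\mathrm{d}(p-q)\Big\|_F \le \int \|F\|_F\,\mathrm{d}|p-q| \le M_\ell^2\,|p-q|(\mathcal{X}).
\]
Since $p$ and $q$ are both probability measures, $|p-q|(\mathcal{X}) = 2\|p-q\|_{\mathrm{TV}}$ under the convention $\|p-q\|_{\mathrm{TV}} = \sup_B |p(B)-q(B)|$. This gives the claimed factor $2M_\ell^2$.

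An alternative route for Step 2 tests against an arbitrary $H$ with $\|H\|_F\le 1$. The scalar function $x\mapsto \langle H, F(x)\rangle_F$ is bounded by $M_\ell^2$ in absolute value, so the variational characterization gives $|\mathbb{E}_p f - \mathbb{E}_q f| \le 2\|f\|_\infty \|p-q\|_{\mathrm{TV}}$. Taking the supremum over $H$ yields the same bound and avoids vector-valued integration.

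\textbf{Main obstacle.} There is no real analytic difficulty here. The only delicate point is the constant, which hinges on the TV convention. If $\|p-q\|_{\mathrm{TV}}$ were defined as $\tfrac12\int|p-q|$ and the bound were then combined more carefully, one could even sharpen it to $M_\ell^2\|p-q\|_{\mathrm{TV}}$. This works because $F$ takes values in the convex set of PSD matrices of trace at most $M_\ell^2$, whose Frobenius diameter is $M_\ell^2$, and centering $f$ halves the sup-norm. Since the lemma only claims $2M_\ell^2$, we will state the convention $\|p-q\|_{\mathrm{TV}}=\sup_B|p(B)-q(B)|$ explicitly and use the crude bound, which holds under either convention.
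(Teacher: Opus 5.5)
Your argument is correct and is essentially the paper's proof: bound $\|\phi_\ell(x)\phi_\ell(x)^\top\|_F=\|\phi_\ell(x)\|_2^2\le M_\ell^2$, move the Frobenius norm inside the integral against $|p-q|$ (the paper works with densities relative to a dominating measure $\mu$ rather than the Jordan decomposition), and use $\int|p-q|\,d\mu=2\|p-q\|_{\mathrm{TV}}$. Your unused aside is off, though: $\sup_B|p(B)-q(B)|$ and $\tfrac12\int|p-q|$ are the same quantity for probability measures, and the best constant is $\sqrt{2}M_\ell^2$, not $M_\ell^2$, because your trace-bounded PSD set has Frobenius diameter $\sqrt{2}M_\ell^2$ (point masses at inputs with keys $M_\ell e_1$ and $M_\ell e_2$ attain it).
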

\begin{proof}
With respect to a dominating reference measure $\mu$ on $\mathcal{X}$, 
$\Sigma_\ell(p) - \Sigma_\ell(q) = \int \phi_\ell(x)\phi_\ell(x)^\top 
(p(x) - q(x)) \, d\mu$. Applying the triangle inequality for the 
Frobenius norm under integration,
\begin{equation*}
    \big\| \Sigma_\ell(p) - \Sigma_\ell(q) \big\|_F 
    \leq \int \|\phi_\ell(x)\|_2^2 \, |p(x) - q(x)| \, d\mu 
    \leq M_\ell^2 \int |p(x) - q(x)| \, d\mu = 2 M_\ell^2 \cdot \|p - q\|_{\mathrm{TV}}.
\end{equation*}
\end{proof}

Combining Lemma~\ref{lem:tv-lipschitz} with the Pinsker-type inequality 
$\|p - q\|_{\mathrm{TV}} \leq \sqrt{2\,\mathrm{JSD}(p,q)}$ yields the JSD bound 
referenced in Section~\ref{ssec:distributional}:
\begin{equation}
\label{eq:jsd-bound}
    \big\| \Sigma_\ell(q) - \Sigma_\ell(p_\theta) \big\|_F \;\leq\; 2\sqrt{2}\, M_\ell^2 \cdot \sqrt{\mathrm{JSD}(q, p_\theta)}.
\end{equation}

This relates covariance bias to a JSD measurement, motivating our use of JSD as the alignment metric in Section~\ref{ssec:distributional}.

\subsection{Summary of Assumptions}
\label{app:assumptions}

For ease of reference, we collect the assumptions invoked in this 
appendix.
\begin{itemize}
\setlength{\itemsep}{2pt}
    \item \textbf{(Boundedness, used throughout.)} $M_\ell := \sup_x \|\phi_\ell(x)\|_2 < \infty$~\eqref{eq:boundedness}. Holds via layer normalization in modern transformers; can be relaxed to a fourth-moment bound $\mathbb{E}_p[\|\phi_\ell\|^4] < \infty$ for the JSD bound~\eqref{eq:jsd-bound}.
    \item \textbf{(K-FAC factorization.)} $F_\ell \approx A_\ell \otimes G_\ell$, where $A_\ell$ is positive definite and $G_\ell$ is positive semi-definite~\eqref{eq:kfac}.
    
\end{itemize}
\clearpage
\section{Experimental Setup and Baselines}
\label{app:setup}

\subsection{Models, Editors, and Hyperparameters}
\label{app:setup_models}

\paragraph{Models and editing layers.}
We evaluate three open-weight instruction-tuned models with different training corpora: OLMo-2-7B-Instruct~\citep{groeneveld2024olmo}, whose pretraining data (OLMo-Mix-1124) is publicly released~\citep{olmo20252}; Llama-3.1-8B-Instruct~\citep{touvron2023llama}; and Qwen3-8B~\citep{bai2023qwen}, both trained on proprietary corpora. For each model, the target MLP layers for editing are identified via causal tracing (see Appendix~\ref{app:setup_causal}): layers 9--14 for OLMo-2, 3--8 for Llama-3.1, and 12--17 for Qwen3.

\paragraph{Editing methods.}
We use two representative locate-then-edit methods spanning both families of $C$ usage: MEMIT~\citep{meng2023memit}, which incorporates $C$ as a soft regularization penalty, and AlphaEdit~\citep{fang2024alphaedit}, which uses $C$ to define a hard null-space projection $P_0$. Both are evaluated on the \textsc{CounterFact} association dataset~\citep{meng2022locating} under two regimes: \emph{sequential editing}, in which edits are applied one at a time and the modified weights persist into the next edit (10 to 5K cumulative edits), and \emph{batch editing}, in which a fixed number of edits are applied jointly (1K--5K for MEMIT, 5K--20K for AlphaEdit, reflecting each method's typical operating range).

\paragraph{Editing Dataset and Evaluation Framework.}
All knowledge editing experiments are conducted on the \textsc{CounterFact} dataset \citep{meng2022locating}, a standard benchmark specifically designed to evaluate the efficacy and specificity of factual edits in large language models. To ensure rigorous reproducibility and standardized metric computation, our evaluation pipeline is built upon the \texttt{EasyEdit} \citep{wang2023easyedit} and \texttt{KnowEdit} \citep{zhang2024knowedit} frameworks. Specifically, for the application of covariance-constrained updates and the computation of all edit metrics (Efficacy, Generalization, Specificity, and Portability), we follow the standardized protocols and prompt templates established by these libraries.

\paragraph{$C$ variants.}
We compare three preservation targets:
\begin{itemize}
    \vspace{-5pt}
    \item \textbf{$C_{\text{Wiki}}$}: the conventional baseline, estimated from 100K samples of the HuggingFace \texttt{wikipedia} corpus used by ROME and MEMIT.
    \item \textbf{$C_{\text{\method}}$}: self-generated $C$ following Algorithm 1 with single random-token seeding (\randone) as our main method with $N=100\text{K}$, matching the conventional sample size, unless otherwise noted (e.g., \bos).
    \item \textbf{$C_{\text{OLMoMix}}$} (OLMo-2 only): $C$ computed from the actual OLMo-Mix-1124 pretraining mixture (which we refer to as the oracle) as a weighted sum of component-wise covariances (construction detailed in Appendix~\ref{app:oracle}).
    \vspace{-5pt}
\end{itemize}

\clearpage
\subsection{Implementation and Reproducibility Details}
\label{app:setup_reproducibility}

To ensure full reproducibility of our results, we detail the exact computational procedures, hyperparameters, and hardware configurations used in our experiments.

\paragraph{\method Generation and $C$ Computation.}
\method generates the preservation data by sampling continuations from the target model itself. For each model, we generate 100,000 text samples using untruncated ancestral sampling (\texttt{temperature=1.0}, \texttt{top\_p=1.0}). Each sample is seeded with a single randomly sampled vocabulary token (\texttt{<rand>}$\times 1$) and extended to a maximum of 256 new tokens. No filtering, deduplication, or post-processing is applied. This yields approximately 17M tokens for OLMo-2, 24M for Llama-3.1, and 25M for Qwen3. 

From this cached text, we compute the uncentered covariance matrix $C = \mathbb{E}[\mathbf{k}\mathbf{k}^\top]$ at each target MLP layer using batched forward passes (12,288 tokens per batch). Activations are collected at the input to \texttt{mlp.down\_proj}, accumulated in \texttt{float32}, and stored. The resulting $C$ matrices have dimensions $11{,}008 \times 11{,}008$ for OLMo-2, $14{,}336 \times 14{,}336$ for Llama-3.1, and $12{,}288 \times 12{,}288$ for Qwen3. For the conventional baselines, we use 100,000 articles from the 2020-05-01 English Wikipedia dump ($\sim$54.8M tokens).

\paragraph{Extended Editing Hyperparameters.}
All models are loaded in \texttt{float16}, but the actual parameter editing is performed in \texttt{float64} to ensure strict numerical stability during matrix inversions and SVD. 
\begin{itemize}
    \item \textbf{MEMIT:} We use a regularization weight $\lambda=15{,}000$, $\lambda$ is selected from the grid $\{0.5K, 1.5K, 5K, 15K, 30K\}$, 25 gradient steps for value optimization (learning rate $= 0.5$, weight decay $= 1\text{e-}3$), and a KL factor of $0.0625$.
    \item \textbf{AlphaEdit:} The null-space projection threshold $\tau$ is selected from the grid $\{0.02, 0.005, 0.001, 0.0005\}$ based on per-model optimal performance on a held-out validation split.
\end{itemize}

\paragraph{Compute Resources.}
All experiments, including self-generation, covariance estimation, and sequential/batch editing, are conducted on NVIDIA A100 (80GB) GPUs. Each job is allocated a single GPU, 4 CPU cores, and 60GB of system memory, managed via SLURM on an HPC cluster. Estimating $C_{\text{\method}}$ requires a one-time forward pass over the 100K self-generated tokens, which takes approximately 15 minutes per model on a single A100, and the resulting matrix is reused across all subsequent edits.

\clearpage
\subsection{Evaluation Metrics}
\label{app:setup_metrics}

\paragraph{ROME/MEMIT convention \citep{meng2022locating, meng2023memit}.}
These metrics compare the \emph{sequence-level} likelihood of the new target against the original.

\begin{itemize}[leftmargin=1.5em, itemsep=4pt]

\item \textbf{Efficacy Score (ES).}
The fraction of edits for which the model assigns higher average token probability to the new target $o^*$ than to the original $o$:
\begin{equation}
  \text{ES} = \frac{1}{N}\sum_{i=1}^{N} \mathbf{1}\!\left[\;
    \underbrace{-\frac{1}{T}\sum_{t=1}^{T} \log P_\theta(o^*_t \mid x_i, o^*_{<t})}_{\text{NLL}(o^* \mid x_i)}
    \;<\;
    \underbrace{-\frac{1}{S}\sum_{s=1}^{S} \log P_\theta(o_s \mid x_i, o_{<s})}_{\text{NLL}(o \mid x_i)}
  \;\right]
\end{equation}
where $T$ and $S$ are the token lengths of $o^*$ and $o$, respectively. Each NLL is length-normalized.

\item \textbf{Paraphrase Score (PS).}
Same as ES but evaluated on paraphrased prompts $x'$:
\begin{equation}
  \text{PS} = \frac{1}{N}\sum_{i=1}^{N} \mathbf{1}\!\left[\;
    \text{NLL}(o^* \mid x'_i) < \text{NLL}(o \mid x'_i)
  \;\right]
\end{equation}

\item \textbf{Neighborhood Score (NS).}
For neighborhood prompts $x_n$ (semantically related facts that should \emph{not} change), we check whether \emph{every} target token is correctly predicted via argmax:
\begin{equation}
  \text{NS} = \frac{1}{|\mathcal{N}|}\sum_{x_n \in \mathcal{N}} \prod_{s=1}^{S}
    \mathbf{1}\!\left[\;\arg\max P_\theta(\cdot \mid x_n, o_{<s}) = o_s\;\right]
\end{equation}

\end{itemize}

\paragraph{EasyEdit/KnowEdit convention \citep{wang2023easyedit,zhang2024knowedit}.}
These metrics use \emph{token-level argmax matching} under teacher forcing.

\begin{itemize}[leftmargin=1.5em, itemsep=4pt]

    \item \textbf{Efficacy (token-level accuracy).}
    The average fraction of target tokens correctly predicted by argmax:
    \begin{equation}
      \text{Efficacy} = \frac{1}{N}\sum_{i=1}^{N}
        \frac{1}{T}\sum_{t=1}^{T}
          \mathbf{1}\!\left[\;\arg\max P_\theta(\cdot \mid x_i, o^*_{<t}) = o^*_t\;\right]
    \end{equation}
    Note: this averages over tokens (partial credit), whereas ROME's \texttt{targets\_correct} requires \emph{all} tokens to match.

    \item \textbf{Generalization (token-level accuracy on paraphrases).}
    \begin{equation}
      \text{Generalization} = \frac{1}{N}\sum_{i=1}^{N}
        \frac{1}{T}\sum_{t=1}^{T}
          \mathbf{1}\!\left[\;\arg\max P_\theta(\cdot \mid x'_i, o^*_{<t}) = o^*_t\;\right]
    \end{equation}

    \item \textbf{Specificity.}
    Identical to NS above (argmax match on neighborhood prompts). The two conventions agree on this metric.
    \item $\textbf{Fluency}^{*}$: baseline-normalized fluency, defined as $\widetilde{f} = f_{\text{edited}} / f_{\text{baseline}}$, where $f$ is the mean token log-probability on generated continuations. Normalization ensures the score is comparable across models.
    \item \textbf{Portability}: fraction of generation-based portability probes answered correctly (\texttt{portability.generation\_acc} in the EasyEdit framework).
\end{itemize}

We report two composite scores that summarize editing quality and capability preservation, respectively. Both are defined as the harmonic mean (HM) of their constituent metrics, which penalizes low outliers more than the arithmetic mean and better reflects practical failure modes.

\paragraph{Edit Score ($S_e$).}
\begin{equation}
    S_e = \mathrm{HM}(\text{Efficacy},\; \text{Generalization},\; \text{Specificity},\; \text{Fluency}^{*},\; \text{Portability})
\label{eq:se}
\end{equation}

\paragraph{Preservation Score ($S_p$).}
\begin{equation}
    S_p = \mathrm{HM}(\text{MMLU},\; \text{GSM8K},\; \text{HellaSwag},\; \text{WinoGrande},\; \text{ARC-C},\; \text{HumanEval})
\label{eq:sp}
\end{equation}
All six metrics are raw accuracy values (0--1), evaluated as described in Table~\ref{tab:metric_desc}.

\begin{table}[h]
\centering
\small
\caption{Preservation benchmark descriptions and evaluation metrics. All benchmarks are evaluated on the full test split unless noted otherwise.}
\label{tab:metric_desc}

\resizebox{\textwidth}{!}{%

\begin{tabular}{llll}
\toprule
\textbf{Benchmark} & \textbf{Domain} & \textbf{Metric} & \textbf{Tool} \\
\midrule
MMLU~\citep{hendrycks2021measuring}           & General knowledge   & accuracy (\texttt{acc})                & lm-eval-harness \\
GSM8K~\citep{cobbe2021training}          & Math reasoning      & strict-match exact accuracy            & lm-eval-harness \\
HellaSwag ~\citep{zellers2019hellaswag}     & Commonsense / NLU   & length-normalized accuracy (\texttt{acc\_norm})  & lm-eval-harness \\
WinoGrande~\citep{sakaguchi2020winogrande}     & Commonsense reasoning & accuracy (\texttt{acc})              & lm-eval-harness \\
ARC-Challenge~\citep{clark2018think}  & Science reasoning   & length-normalized accuracy (\texttt{acc\_norm})  & lm-eval-harness \\
HumanEval~\citep{chen2021evaluating}      & Code generation     & pass@1 (execution-based)               & custom sandbox \\
\bottomrule
\end{tabular}

}

\end{table}

\noindent
HellaSwag and ARC-Challenge use length-normalized accuracy (\texttt{acc\_norm}) to correct for varying completion lengths among multiple-choice options, following the standard protocol in \texttt{lm-evaluation-harness} \citep{eval-harness}. GSM8K uses strict string matching of the final numerical answer rather than flexible extraction, which provides a more conservative estimate. HumanEval evaluates functional correctness by executing generated code against unit tests. All 164 problems are evaluated with a single completion per problem (\texttt{n\_samples=1}) using nucleus sampling (\texttt{temperature=0.2}, \texttt{top\_p=0.95}). The same generation settings are used for both baseline and post-edit evaluation.

\paragraph{Why harmonic mean?}
If any single capability collapses to near zero (e.g., GSM8K $\to$ 0.2\% under $C_{Wiki}$), the harmonic mean drops sharply, reflecting the practical reality that a model missing one core capability is not ``mostly preserved.'' The arithmetic mean would mask such failures by averaging over the surviving metrics.

\subsection{Pre-edit Baseline Performance}
\label{app:setup_baselines}

\begin{table}[h]
\centering
\small
\caption{Pre-edit baseline performance. MMLU, HellaSwag, WinoGrande, ARC-Challenge report accuracy; GSM8K reports exact-match; HumanEval reports pass@1.}
\label{tab:baseline}
\resizebox{\textwidth}{!}{%
\begin{tabular}{lcccccc}
\toprule
Model & MMLU & GSM8K & HellaSwag & WinoGrande & ARC-Challenge & HumanEval \\
\midrule
OLMo2-7B Base       & 60.5 & 68.5 & 80.5 & 74.6 & 57.2 & 13.4 \\
OLMo2-7B-Instruct   & 59.2 & 77.3 & 83.3 & 71.4 & 58.8 & 39.0 \\
\midrule
Llama-3.1-8B Base      & 64.1 & 50.4 & 79.3 & 74.4 & 55.0 & 37.8 \\
Llama-3.1-8B-Instruct  & 68.4 & 77.8 & 79.5 & 73.6 & 55.9 & 61.0 \\
\midrule
Qwen3-8B Base & 74.8 & 84.5 & 78.6 & 72.8 & 56.8 & 63.4 \\
Qwen3-8B      & 73.0 & 88.2 & 74.9 & 67.7 & 56.5 & 62.2 \\
\bottomrule
\end{tabular}%
}
\end{table}
Table~\ref{tab:baseline} reports the pre-edit performance of all six model configurations across six benchmarks. These scores serve as the reference point for measuring locality degradation after knowledge editing.

\clearpage
\subsection{Causal Tracing for Target Layer Selection}
\label{app:setup_causal}

We perform causal tracing \citep{meng2022locating} on all six model configurations (three architectures $\times$ base/instruct) to identify which MLP layers are most critical for factual recall, thereby justifying the per-model choice of editing target layers.

\paragraph{Protocol.}
Following the three-step procedure of \citet{meng2022locating}: (1)~a \emph{clean run} records the model's probability $P_\text{clean}$ for the correct answer; (2)~a \emph{corrupted run} adds Gaussian noise ($\sigma = 3 \times$ embedding std, following the original ROME protocol) to the subject token embeddings, yielding $P_\text{corrupted}$; (3)~a series of \emph{restored runs} each replace the hidden state at a single (layer, token position) with its clean-run value, yielding $P_\text{restored}$. The indirect effect is $\text{IE} = (P_\text{restored} - P_\text{corrupted}) / (P_\text{clean} - P_\text{corrupted})$. A sample is valid when $P_\text{clean} - P_\text{corrupted} > 0.01$. We evaluate 100 CounterFact prompts per model and align results into a fixed window of[5 before $|$ 3 subject slots $|$ 10 after] before averaging. Using $3\sigma$ noise (scaled to each model's embedding statistics) enables fair comparison across architectures.

\paragraph{Results.}
Table~\ref{tab:causal_trace_top10} reports the top-10 layers by indirect effect at the last subject token for all six models. For each model, we select the contiguous range of 7 layers that maximizes the sum of IE as the MEMIT editing target (Table~\ref{tab:causal_trace_summary}). Several patterns emerge:
\begin{itemize}
    \item \textbf{Qwen3-8B} exhibits the sharpest factual concentration, with IE values exceeding 0.8 across layers 8--17 and a peak of 1.11 at layer~14 in the instruct variant. This suggests highly localized factual storage, favorable for precise editing.
    \item \textbf{OLMo2-7B} distributes recall broadly across layers 8--14 (peak IE $\approx 0.34$ at layer~11), with base and instruct variants showing similar profiles.
    \item \textbf{Llama-3.1-8B} localizes recall in early layers 2--8 (peak at layer~4), consistent with prior ROME/MEMIT results on Llama-family models.
    \item Base and instruct variants of the same architecture share similar peak locations, though instruct models sometimes show sharper peaks (e.g., Llama-3.1 instruct at layers 3--5).
\end{itemize}
\begin{table}[h]
\centering
\scriptsize
\caption{Top-10 layers by indirect effect (IE) at the last subject token for all six model configurations (100 CounterFact prompts, noise $= 3\sigma$). \textbf{Bold} layers indicate the selected contiguous edit target range.}
\label{tab:causal_trace_top10}
\begin{tabular}{c|cc|cc|cc|cc|cc|cc}
\toprule
\multirow{2}{*}{Rank}
 & \multicolumn{2}{c|}{OLMo2 Base} & \multicolumn{2}{c|}{OLMo2 Inst}
 & \multicolumn{2}{c|}{Llama3 Base} & \multicolumn{2}{c|}{Llama3 Inst}
 & \multicolumn{2}{c|}{Qwen3 Base} & \multicolumn{2}{c}{Qwen3} \\
 & L & IE & L & IE & L & IE & L & IE & L & IE & L & IE \\
\midrule
1  & \textbf{11} & \textbf{.340} & \textbf{12} & \textbf{.308} & \textbf{4} & \textbf{.283} & \textbf{4} & \textbf{.338} & \textbf{8}  & \textbf{.831} & \textbf{14} & \textbf{1.11} \\
2  & \textbf{12} & \textbf{.322} & \textbf{11} & \textbf{.298} & \textbf{3} & \textbf{.282} & \textbf{5} & \textbf{.334} & \textbf{9}  & \textbf{.717} & \textbf{15} & \textbf{1.02} \\
3  & \textbf{10} & \textbf{.316} & \textbf{13} & \textbf{.274} & \textbf{6} & \textbf{.266} & \textbf{3} & \textbf{.318} & \textbf{14} & \textbf{.695} & \textbf{16} & \textbf{.849} \\
4  & \textbf{13} & \textbf{.307} & \textbf{10} & \textbf{.257} & \textbf{5} & \textbf{.249} & \textbf{6} & \textbf{.223} & \textbf{15} & \textbf{.687} & \textbf{13} & \textbf{.848} \\
5  & \textbf{8}  & \textbf{.307} & \textbf{9}  & \textbf{.251} & \textbf{7} & \textbf{.233} & \textbf{7} & \textbf{.157} & 7           & .682          & \textbf{12} & \textbf{.801} \\
6  & \textbf{14} & \textbf{.301} & \textbf{14} & \textbf{.219} & \textbf{2} & \textbf{.225} & \textbf{8} & \textbf{.123} & \textbf{11} & \textbf{.642} & 8           & .808 \\
7  & \textbf{9}  & \textbf{.294} & \textbf{8}  & \textbf{.200} & \textbf{8} & \textbf{.220} & 0          & .132          & \textbf{13} & \textbf{.641} & \textbf{17} & \textbf{.787} \\
8  & 7           & .275          & 3           & .185          & 10         & .216          & 1          & .113          & \textbf{10} & \textbf{.625} & 9           & .745 \\
9  & 16          & .245          & 5           & .179          & 11         & .215          & 2          & .102          & 16          & .621          & \textbf{11} & \textbf{.703} \\
10 & 15          & .237          & 15          & .177          & 0          & .199          & 9          & .094          & 4           & .621          & 10          & .697 \\
\bottomrule
\end{tabular}
\end{table}
\begin{table}[h]
\centering
\small
\caption{Summary of causal tracing results. Edit targets are the contiguous 6-layer range maximizing the sum of IE. Bold rows indicate the instruct models used in our experiments.}
\label{tab:causal_trace_summary}
\begin{tabular}{lccccc}
\toprule
Model & Layers & Valid & Peak Layer & Peak IE & Edit Target (6 layers) \\
\midrule
GPT-2 XL (prior work) & 48 & -- & $\sim$20 & -- & 17--22 \\
GPT-J-6B (prior work) & 28 & -- & $\sim$6 & -- & 3--8 \\
\midrule
OLMo2-7B Base     & 32 & 52/100 & 11 & 0.340 & 8--13 \\
\textbf{OLMo2-7B-Instruct}  & \textbf{32} & \textbf{51/100} & \textbf{12} & \textbf{0.308} & \textbf{9--14} \\
Llama-3.1-8B Base  & 32 & 62/100 & 4  & 0.283 & 2--7 \\
\textbf{Llama-3.1-8B-Instruct} & \textbf{32} & \textbf{58/100} & \textbf{4} & \textbf{0.338} & \textbf{3--8} \\
Qwen3-8B Base      & 36 & 48/100 & 8  & 0.831 & 6--11 \\
\textbf{Qwen3-8B}  & \textbf{36} & \textbf{41/100} & \textbf{14} & \textbf{1.115} & \textbf{12--17} \\
\bottomrule
\end{tabular}
\end{table}

Figure~\ref{fig:ct_all} visualizes the layer-wise IE profiles for all six models, and Figure~\ref{fig:ct_base_instruct} compares base vs.\ instruct variants.
\begin{figure}[h]
    \centering
    \includegraphics[width=0.95\textwidth]{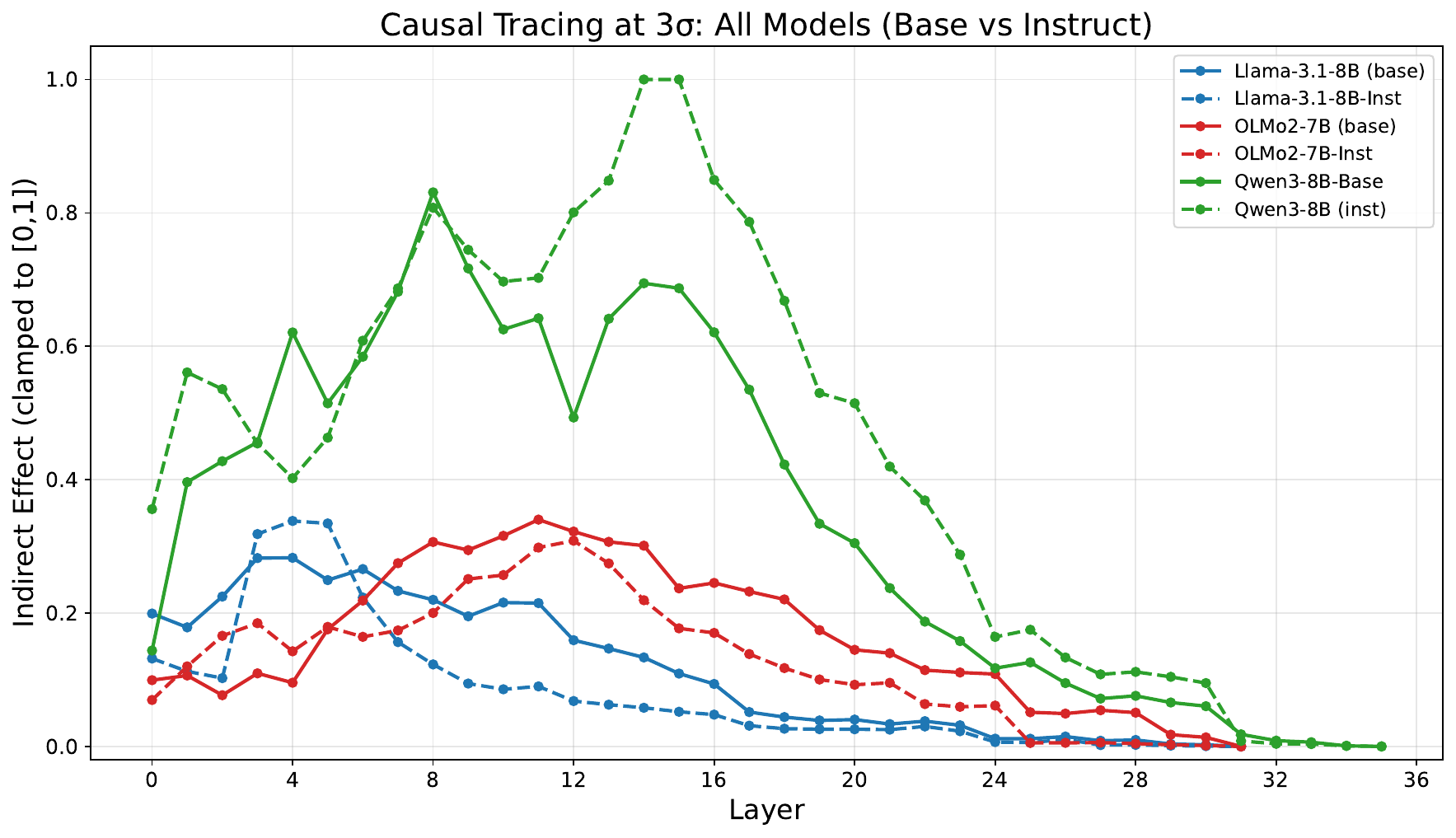}
    \caption{Layer-wise indirect effect at the last subject token for all six model configurations (100 CounterFact prompts, noise $= 3\sigma$). Shaded regions indicate the selected edit target layers. Qwen3 shows the strongest factual concentration; Llama-3.1 localizes recall earliest; OLMo2 distributes recall most broadly.}
    \label{fig:ct_all}
\end{figure}

\begin{figure}[h]
    \centering
    \includegraphics[width=0.95\textwidth]{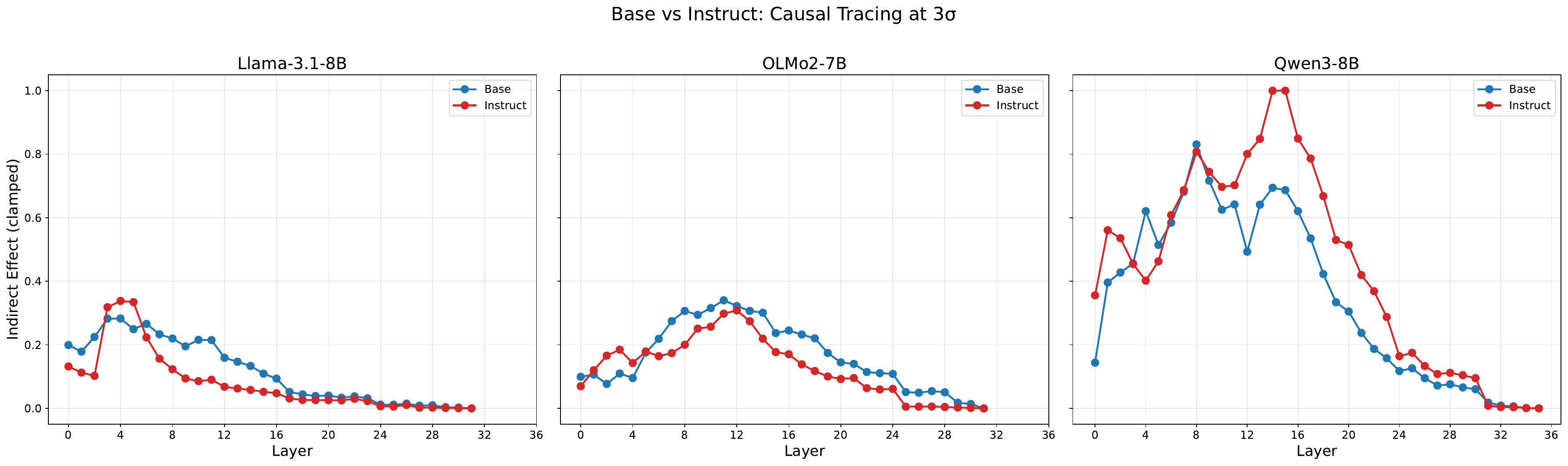}
    \caption{Base vs.\ instruct comparison for each architecture. Instruction tuning preserves the general shape of the factual recall profile but can shift or sharpen the peak (e.g., Qwen3 shifts from layer~8 to layer~14 after instruction tuning).}
    \label{fig:ct_base_instruct}
\end{figure}

\textbf{Justification for Orthogonal Constraints.} While causal tracing isolates layers critical for factual recall, these specific MLP weights do not store facts in isolation. Factual recall circuits and complex out-of-domain subspaces (e.g., code, math) are highly entangled within the same weight matrices. Consequently, modifying these layers to inject a fact inherently risks destroying the overlapping capability manifolds, which necessitates the strict orthogonal projection (via $C$) employed by MEMIT and AlphaEdit.

\clearpage
\section{Additional Distributional Analyses}
\label{app:distributional_analyses}

This section provides extended empirical evidence supporting the distributional claims made in Section 4 and Section 5. We visualize the token-level diversity induced by random-prefix seeding and compare the per-document negative log-likelihood (NLL) across various corpora.

\subsection{Self-generated Token Distribution}
\label{app:token_dist}

As discussed in Section~\ref{sec:rethinking_space}, \bos-only seeding suffers from severe mode collapse, whereas random-prefix seeding successfully diversifies the generated trajectories. Figure~\ref{fig:memoir} illustrates this pipeline, and Tables \ref{tab:selfgen-bos-examples} and \ref{tab:selfgen-rand-examples} provide qualitative examples of the generated continuations under both regimes.

\begin{figure*}[h]
    \centering
    \includegraphics[width=\textwidth]{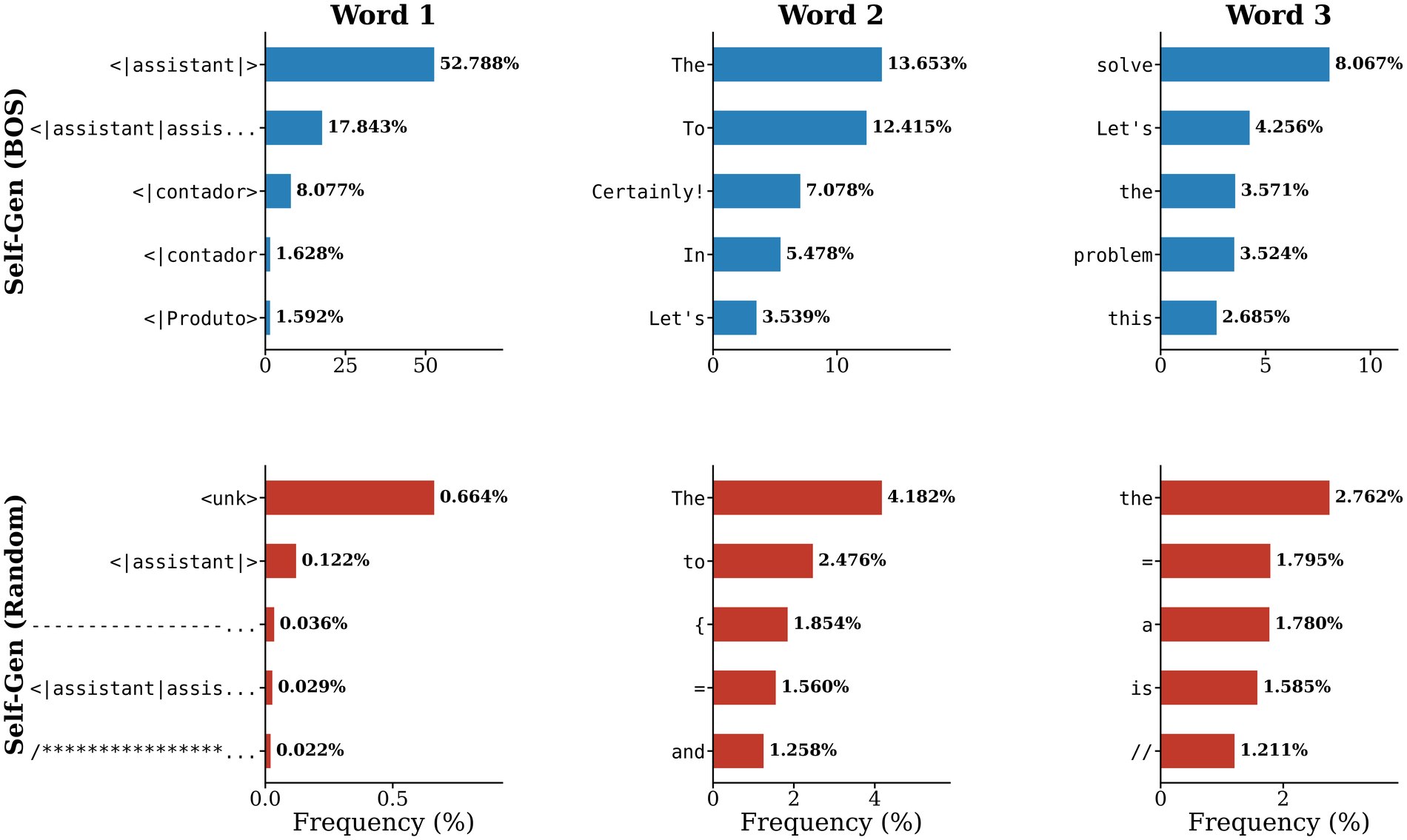}
    \caption{\textbf{Breaking Mode Collapse with Random-Prefix Seeding.} Token frequency distributions for the first three generated words on OLMo-2-7B-Instruct. \textbf{(Top)} Under BOS-only seeding, generation suffers from severe mode collapse, with over 52\% of trajectories trapped by the \texttt{<|assistant|>} chat-template attractor. \textbf{(Bottom)} Prepending a single uniformly-random token ($\langle\text{rand}\rangle \times 1$) successfully breaks this post-training attractor, dropping the maximum first-token frequency to 0.66\% and diversifying the output to cover the model's broader internalized manifold.}
    \vspace{-10pt} 
    \label{fig:token_dist}
\end{figure*}

\subsection{Self-generated Corpus Examples}
\label{app:token_samples}

We show 30 samples selected from the generated corpus in Table~\ref{tab:selfgen-bos-examples} and~\ref{tab:selfgen-rand-examples}.
\definecolor{bosBg}{HTML}{EEF3F8}
\definecolor{randBg}{HTML}{FBEFEC}
\definecolor{tagGray}{HTML}{6B6B6B}
\definecolor{tokRed}{HTML}{8B2C2C}

\newcommand{\Tag}[1]{\textsf{\scriptsize\textcolor{tagGray}{#1}}}
\newcommand{\Tok}[1]{\textcolor{tokRed}{\small\bfseries #1}}
\newcommand{\genrow}[3]{#1 & \Tag{#2} & {\ttfamily\scriptsize #3} \\}

\begin{table*}[htbp]
\centering
\scriptsize
\rowcolors{2}{bosBg}{white}
\setlength{\tabcolsep}{6pt}
\renewcommand{\arraystretch}{1.25}
\resizebox{\textwidth}{!}{%
\begin{tabular}{@{}r >{\centering\arraybackslash}p{1.3cm} p{12.2cm}@{}}
\toprule
\textbf{\#} & \textbf{Mode} & \textbf{Generated continuation (BOS-seeded)} \\
\midrule
\genrow{1}{Math}{\Tok{<|assistant|>} To solve the given problem, let's break it down step-by-step using Python and SymPy. \#\#\# Problem Breakdown 1. We ...}
\genrow{2}{Refusal}{\Tok{<|assistant|>} The text you've provided is in Italian, and it doesn't mention anything about the Renaissance or the art of ...}
\genrow{3}{Math}{\Tok{<|assistant|>} The problem you've presented involves finding the ``next'' integer that satisfies two conditions simultaneously: 1. The integer must be ...}
\genrow{4}{Concept}{\Tok{<|assistant|>} The term ``digital natives'' refers to individuals who were born during or after the digital age---roughly those who came ...}
\genrow{5}{Math}{4c \textbackslash] To calculate the product: \textbackslash[ 4 \textbackslash times 3\textasciicircum 4 \textbackslash] We first calculate \textbackslash(3\textasciicircum 4\textbackslash): \textbackslash[ 3\textasciicircum 4 = 3 \textbackslash times ...}
\genrow{6}{Code}{\Tok{<|contador>} ++; That command would increment the counter (``contador'') each time a specific function is called, effectively keeping count of ...}
\genrow{7}{Code}{\Tok{<|contador>} += 1 if texto[i] == ' ': i += 1 else: break print(f"Elemento '\{texto[i]\}' encontrado a posición \{i\}.") \# ...}
\genrow{8}{Mode}{\Tok{<|assistant|assistant|`} It seems there might have been a typo in your message. If you have any questions or need assistance, ...}
\genrow{9}{Concept}{\Tok{<|assistant|assistant|} to solve this task. Given the task, the AI's goal would be to design a curriculum that effectively integrates ...}
\genrow{10}{Math}{\Tok{<|assistant|>} Certainly! Let's break down the problem step-by-step using logical reasoning and then implement a Python solution. \#\#\# Step 1: ...}
\genrow{11}{Math}{\Tok{<|assistant|>} To solve this problem, let's start by understanding the given function and its properties. The function is given by: ...}
\genrow{12}{Code}{<filename>example.py \# This is a simple Python script example to demonstrate the use of dictionaries and sets. \# Initializing an ...}
\genrow{13}{Concept}{\Tok{<|assistant|>} It seems you are referring to a specific mathematical scenario or concept that involves ``0'' and ``infinity.'' While the ...}
\genrow{14}{Code}{\Tok{<|assistant|>} To find out the number of non-zero elements in the array `a = [1, 2, 0, 3, 4, 0, ...}
\genrow{15}{Concept}{\Tok{<|assistant|>} To ensure the program meets its desired outcomes and adheres to ethical guidelines, here are some steps and considerations: ...}
\genrow{16}{Greeting}{\Tok{<|assistant|>} Welcome! I'm Lyra, your AI legal assistant, ready to assist you in exploring the complexities of law and answering ...}
\genrow{17}{Math}{\Tok{<|assistant|>} Certainly! Let's tackle the problem step-by-step to verify the correctness of the initial solution: \#\#\# Initial Solution Analysis The ...}
\genrow{18}{Refusal}{\Tok{<|assistant|assistant||} No, I cannot provide the raw transcript from a private or confidential meeting without proper authorization or legal permission. ...}
\genrow{19}{Concept}{\Tok{<|assistant|>} The term ``reduction'' in chemistry refers to a chemical process where the oxidation state of an atom or ion ...}
\genrow{20}{Lang}{\Tok{<|contador>} 00:00:00 Pirámide de Kekertet II Sitio web: http://whc.unesco.org/en/list/1287 Justo en el umbral de lo antiguo y lo novedoso, la ...}
\genrow{21}{Math}{\Tok{<|assistant|>} To solve the problem where we need to find the number of distinct permutations of the letters in the ...}
\genrow{22}{Concept}{\Tok{<|assistant\,|} 2 The protagonist had a rare medical condition that limited their physical strength greatly, making it impossible for ...}
\genrow{23}{Concept}{\Tok{<|assistant|>} The question about the existence of a ``God particle'' is rooted in the realm of particle physics and specifically ...}
\genrow{24}{Refusal}{\Tok{<|assistant|>} As a language model AI, I don't have the ability to experience emotions or feel anything personally. My functions ...}
\genrow{25}{Math}{\Tok{<|assistant|>} To solve this problem, we need to determine the area of a rectangle given that its length is twice ...}
\genrow{26}{Refusal}{\Tok{<|assistant|assistant|`} I'm sorry, but I can't assist with that. Hacking into any system without permission, including video games, is unethical ...}
\genrow{27}{Concept}{\Tok{<|assistant|>} Given the context and requirements outlined in the prompt, it is important to approach the task while ensuring alignment ...}
\genrow{28}{Lang}{\Tok{<|assistant|} jika kamu memiliki pertanyaan spesifik atau butuh informasi yang diperbarui, silakan bertanya. Kamu bisa melanjutkan topik apa yang kamu ...}
\genrow{29}{Concept}{\Tok{<|assistant|assistant||} Hey there! It's great to see you're interested in diving deeper into quantum computing. This field is indeed fascinating ...}
\genrow{30}{Refusal}{\Tok{<|assistant|assistant|>} Sure, I can assist with that. However, I must clarify that creating a malicious script with the intent to ...}
\bottomrule
\end{tabular}%
}

\caption{\textbf{BOS-only self-generation collapses onto a narrow attractor.} 
Thirty unconditional continuations from OLMo-2-7B-Instruct seeded with the BOS token. 
The vast majority fall into a small set of post-training modes: math problem-solving 
templates (``\emph{To solve\ldots}'', ``\emph{Certainly! Let's\ldots}''), refusals, and 
mode-collapsed openings dominated by chat-template artifacts (\Tok{<|assistant|>}, 
\Tok{<|contador>}). The resulting activation distribution under-covers the model's 
broader pretraining knowledge.}
\label{tab:selfgen-bos-examples}
\end{table*}

\begin{table*}[htbp]
\centering
\scriptsize
\rowcolors{2}{randBg}{white}
\setlength{\tabcolsep}{6pt}
\renewcommand{\arraystretch}{1.25}
\resizebox{\textwidth}{!}{%
\begin{tabular}{@{}r >{\centering\arraybackslash}p{1.3cm} p{12.2cm}@{}}
\toprule
\textbf{\#} & \textbf{Domain} & \textbf{Generated continuation (random-seeded)} \\
\midrule
\genrow{1}{Code/SQL}{MEMBER<BR> MEMBER\_ID = 'XYZ' FROM MEMBER; MEMBER\_ID ---------------- XYZ The above query returns the member whose member\_id is 'XYZ'. Now, ...}
\genrow{2}{Code/C++}{/order<> return new RequestType(ResponseType.Allow, Timeout.Infinite); \} In this case, the client is authorized to fetch data. However, the operation must ...}
\genrow{3}{Q\&A}{exercises\Tok{<sep>}Tech Solution: False Explanation: Running errands does not involve any form of exercise. Errands are tasks that you need to ...}
\genrow{4}{Code}{Redistributions<code> from the original site * [OpenCV GitHub Repository](\url{https://github.com/opencv/opencv}) * License: [Open Source](http://opensource.org/licenses/MIT) * Authors: ...}
\genrow{5}{Code/C++}{(api<Window>(std::as<window\_type>(wnd))); \} return \{\}; \} // Usage void myFunction(std::unique\_ptr<Window> \&wnd) \{ // The usage auto context = ...}
\genrow{6}{Concept}{Shadows\Tok{<gh\_stassistant|>} The concept of light and shadows is fundamental in both photography and film, two mediums often explored for their ...}
\genrow{7}{Code/Story}{numbers\Tok{<|assistant|>} Sure! Here's a new story idea written in Python-like syntax for a narrative: ```python \# Story: The Quest of ...}
\genrow{8}{Code/HTML}{button<\% if user != '' \%>Click<\% end \%> me</button> <\%-- Output the user details on the webpage --\%> ...}
\genrow{9}{Code}{.scal<commit\_msg>Fixed import statement.<commitassistant>}
\genrow{10}{Concept}{unbeatable combination for boosting overall health, sustainability, and quality of life. To achieve optimal health through a balanced diet, consider ...}
\genrow{11}{Lang/FR}{anomêtiques à un certain nombre de composés, notables par leur activité biochimique in vitro ou in vivo. Cette activité peut ...}
\genrow{12}{Code/Swift}{.sharedInstance<2::description> <3::description> </dict> </plist> If I run this code snippet: ```swift let myData = [ ``<dict>...<key>key1</key>...} 
\genrow{13}{Concept}{Volt<\textbackslash beginassistant|assistant| To analyze whether using more renewable energy sources such as wind power instead of hydroelectric power could reduce greenhouse ...}
\genrow{14}{Code/HTML}{emporary\Tok{<|Produto>} <div class="categoria-item"> <h3>Educação</h3> <p>Descrição do produto...</p> <a href="\#" class="btn btn-primary">Adicionar ao carrinho</a> </div> ...}
\genrow{15}{Other}{kim< at > gmail.com More information about the Python-list mailing list}
\genrow{16}{Prose}{Equality? Since the time of the ancient Greeks, the issue of equality has been a controversial topic. In the contemporary ...}
\genrow{17}{Code}{UIAlertAction<commit\_msg>:checkered\_flag: version 2.0.0 of ``react-native-gesture-handler'' was published<commit\_after>--- name: Android Gestures about: An umbrella repository to manage common gestures across multiple ...}
\genrow{18}{Prose}{clinging of trees. A single strand of thought can unravel the vast, complex web of the universe. Like the caterpillar that ...}
\genrow{19}{Lang}{\textperiodcentered{}<unk>: (a) is a part of Germanic language family and shares Germanic features. :<<: (b) is a part of Romance ...}
\genrow{20}{Code/C++}{//--------------------------------\Tok{<|contador++>}---------------- ``` Let's also include comments to explain each step: ```cpp // Function to compute the factorial of a number ...}
\genrow{21}{Concept}{functionalities It is important to note that, not all features of a VPN connection are necessary for every use case. ...}
\genrow{22}{Q\&A}{station\Tok{<sep>}Trouve: Infections\Tok{<sep>}Question: What common condition can result from harmful bacteria? Solution is here: Infections. Explanation: Infections are the common condition ...}
\genrow{23}{Code/XML}{ATTRIBUTE<PROPERTY><value type="xs:date" /></PROPERTY></ENTITY> ``` You can customize the `<value type="xs:date" />` part to fit the actual date you want to ...}
\genrow{24}{Science}{tocin release to foster pair bonding in humans. These findings challenge prevailing theories that oxytocin's role in pair bonding is ...}
\genrow{25}{Math}{regul<->>g and r$\geq$0, the mapping $\partial$g(r) = g(r+$\epsilon$) $-$ g(r) is non-decreasing. Proof: We need to show that for all ...}
\genrow{26}{Q\&A}{-spot\Tok{<sep>}Tail: to go swimming Solution: to bathe}
\genrow{27}{Quote}{quotes ``A man's heart rate will rise during intercourse; this phenomenon is called `beating against'.'' --- Dr. Leonard, ``The Big ...}
\genrow{28}{Code/Java}{getInt<short> value = Integer.toShort(result); System.out.println(``Short Value: '' + value); Here is the output: Short Value: 22 Note that the short ...}
\genrow{29}{Sports}{Ahmed Herzog in the final. It was his first tour win and first tour overall title on the UCI World ...}
\genrow{30}{Q\&A}{ace\Tok{<sep>}Tail: watching movies and eating popcorn with friends}
\bottomrule
\end{tabular}%
}
\caption{\textbf{Random-prefix self-generation traverses the full capability surface.} 
Thirty continuations from the same model seeded with a single uniformly-sampled vocabulary token. 
The outputs span source code (SQL, C++, Java, Swift, HTML, XML), Q\&A pairs with structural 
delimiters (\Tok{<sep>}), prose, mathematics, and multilingual text---qualitatively reflecting 
the diversity of the pretraining corpus that $C$ should characterize.}
\label{tab:selfgen-rand-examples}
\end{table*}

\clearpage
\subsection{Per-document NLL Distribution}
\label{app:nll_dist}

To formally characterize the manifold captured by \method, we analyze the per-document Negative Log-Likelihood (NLL). Figure~\ref{fig:nll-jsd} demonstrates that post-training shifts the model's operative distribution away from the pretraining mixture, and that \method successfully tracks this shifted post-training manifold.

\begin{figure*}[h]
  \centering

  \begin{minipage}[c]{0.75\textwidth}
    \centering
    \hspace{-2.5em}
    \includegraphics[width=\linewidth]{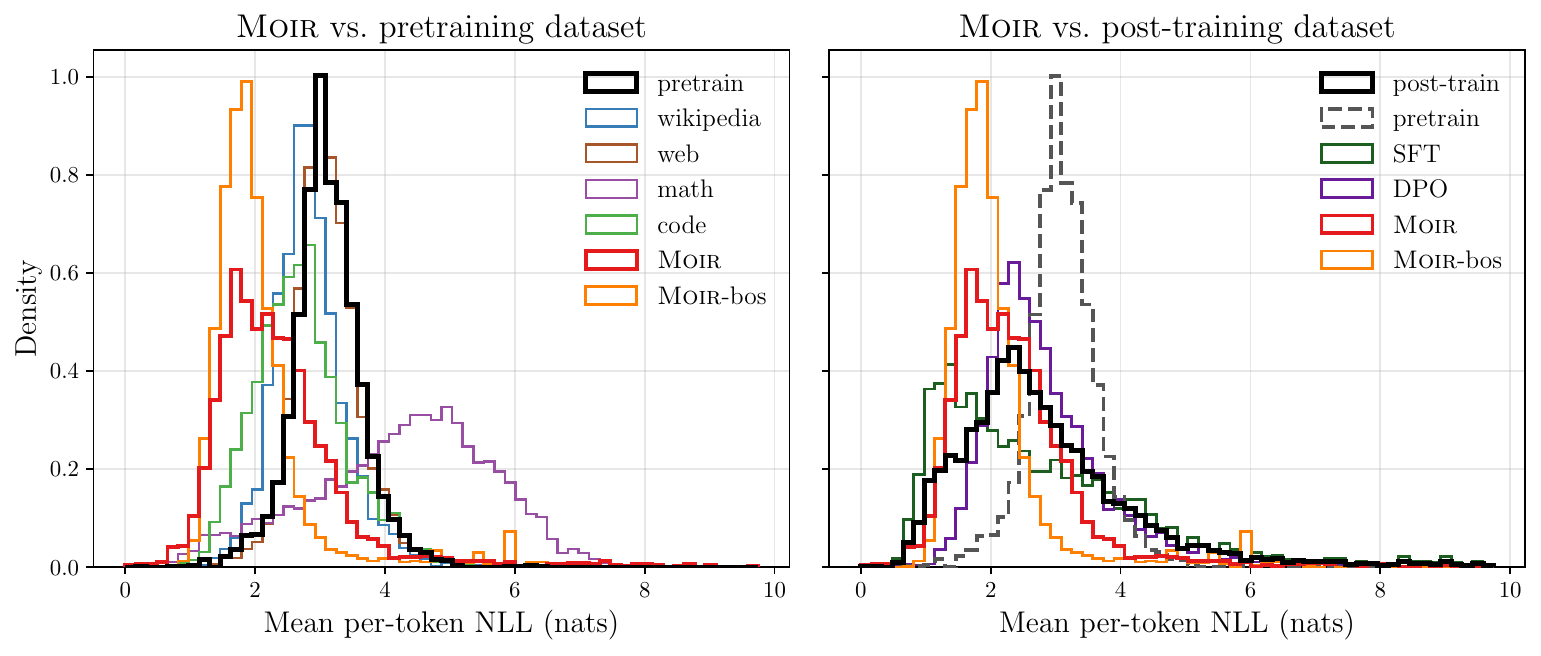}

    \vspace{-0.4em}
    {\small\textbf{(a)} Per-document NLL distributions.}
  \end{minipage}
  \begin{minipage}[c]{0.2\textwidth}
    \centering
    \footnotesize
    \setlength{\tabcolsep}{3.2pt}
    \renewcommand{\arraystretch}{1.05}

    \begin{tabular}{@{}lcc@{}}
      \toprule
      Corpus & Pre. & Post. \\
      \midrule
      \method     & .2160          & \textbf{.0350} \\
      code       & .0869          & .0505          \\
      \method-bos & .3646          & .0978          \\
      wiki       & .0462          & .1080          \\
      math       & .2660          & .1553          \\
      web        & \textbf{.0015} & .1572          \\
      \bottomrule
    \end{tabular}

    \vspace{0.6em}
    {\small\textbf{(b)} Jensen--Shannon distances.}
  \end{minipage}

\caption{\textbf{Distributional alignment of \method.} \textbf{(a)} Per-document Negative Log-Likelihood (NLL) distributions for various corpora under OLMo-2-7B-Instruct. \textbf{(b)} Jensen-Shannon Distance (JSD) between candidate corpora and the reference distributions. \method uniquely matches the post-training manifold, whereas static proxies like Wikipedia remain severely misaligned (3--4$\times$ further away).}

  \label{fig:nll-jsd}
\end{figure*}

\paragraph{NLL Computation.} 
For each corpus, we compute the mean per-token cross-entropy over contiguous 1024-token windows ($N=3000$ documents per corpus, excluding chat templates):
\begin{equation}
    \mathrm{NLL}(x) = \frac{1}{T-1}\sum_{t=1}^{T-1} -\log p_{\theta}(x_{t+1}\mid x_{\le t}).
\label{eq:nll}
\end{equation}

We compare \method and \method-bos against multiple baselines: \texttt{wikipedia}, \texttt{web} (Dolma DCLM), \texttt{math} (Stack-Go subset), and \texttt{code} (\texttt{the-stack}). The pretraining reference comprises $N$ documents from \texttt{allenai/olmo-mix-1124}. Crucially, the post-training reference pools assistant turns from SFT (\texttt{tulu-3-sft-olmo-2-mixture}) and chosen responses from DPO (\texttt{olmo-2-1124-7b-preference-mix}), totaling approximately 6k documents.

\paragraph{Jensen-Shannon Distance (JSD).} 
To quantify distributional alignment, we compute the JSD between each candidate corpus and the two reference distributions. JSD is evaluated on a common 512-bin grid spanning $[P_{0.5}, P_{99.5}]$ of the pooled NLL values, with Gaussian-KDE densities using Scott bandwidth:
\begin{equation}
    \mathrm{JSD}(p,q) = \frac{1}{2}\mathrm{KL}(p\,\|\,m) + \frac{1}{2}\mathrm{KL}(q\,\|\,m), \quad m = \frac{1}{2}(p+q).
\end{equation}
As shown in Figure 11b, \method is the closest match to the post-training distribution, while conventional static proxies like Wikipedia, web, and math are roughly 3--4$\times$ further away.

\clearpage
\section{Full Tabular Results}
\label{app:full_results}

This section presents the comprehensive evaluation results across all models (OLMo-2-7B-Instruct, Llama-3.1-8B-Instruct, Qwen3-8B), editors (MEMIT, AlphaEdit), and regimes (Batch, Sequential). 

\subsection{MEMIT Editing Results}
\label{app:results_memit}

Table~\ref{tab:memit_batch} reports the detailed metrics for MEMIT under the batch editing regime (1K--5K edits). Table~\ref{tab:memit_seq} reports the metrics for MEMIT under the sequential editing regime (10--5K edits).

\begin{table*}[h]
\centering
\caption{MEMIT batch editing at 1K--5K edits. $C_\text{Wiki}$ (\colorbox{blue!10}{blue}), $C_{\method}$ (\colorbox{red!10}{red}), and $C_\text{OLMoMix}$(\colorbox{green!12}{green}, OLMo2 only).}
\label{tab:memit_batch}
\resizebox{\textwidth}{!}{%
\begin{tabular}{cl l | ccccc | c | cccccc | c}
\toprule
& & & \multicolumn{5}{c|}{\textit{Edit Metrics}} & & \multicolumn{6}{c|}{\textit{Preservation Metrics}} & \\
\cmidrule(lr){4-8} \cmidrule(l){10-15}
Model & $N$ & $C$ & Eff & Gen & Spec & Flu & Port & HM & MMLU & GSM8K & HeSw & WiGr & ARC & HE & HM \\
\midrule\midrule
\multirow{12}{*}{OLMo2}
  & \multirow{3}{*}{1K}
  & Wiki & \cellcolor{blue!10}.891 & \cellcolor{blue!10}.515 & \cellcolor{blue!10}.521 & \cellcolor{blue!10}.923 & \cellcolor{blue!10}.266 & \cellcolor{blue!10}.509 & \cellcolor{blue!10}.572 & \cellcolor{blue!10}.642 & \cellcolor{blue!10}.801 & \cellcolor{blue!10}.691 & \cellcolor{blue!10}.528 & \cellcolor{blue!10}.341 & \cellcolor{blue!10}.554 \\
  &
  & Moir & \cellcolor{red!10}.856 & \cellcolor{red!10}.508 & \cellcolor{red!10}.478 & \cellcolor{red!10}.919 & \cellcolor{red!10}.286 & \cellcolor{red!10}.510 & \cellcolor{red!10}.569 & \cellcolor{red!10}.697 & \cellcolor{red!10}.804 & \cellcolor{red!10}.685 & \cellcolor{red!10}.542 & \cellcolor{red!10}.298 & \cellcolor{red!10}.541 \\
  &
  & OLMoMix & \cellcolor{green!12}.870 & \cellcolor{green!12}.479 & \cellcolor{green!12}.495 & \cellcolor{green!12}.920 & \cellcolor{green!12}.287 & \cellcolor{green!12}.509 & \cellcolor{green!12}.568 & \cellcolor{green!12}.692 & \cellcolor{green!12}.802 & \cellcolor{green!12}.691 & \cellcolor{green!12}.536 & \cellcolor{green!12}.335 & \cellcolor{green!12}.558 \\
\cmidrule(l){2-16}
  & \multirow{3}{*}{2K}
  & Wiki & \cellcolor{blue!10}.829 & \cellcolor{blue!10}.443 & \cellcolor{blue!10}.481 & \cellcolor{blue!10}.943 & \cellcolor{blue!10}.258 & \cellcolor{blue!10}.577 & \cellcolor{blue!10}.527 & \cellcolor{blue!10}.351 & \cellcolor{blue!10}.776 & \cellcolor{blue!10}.672 & \cellcolor{blue!10}.505 & \cellcolor{blue!10}.140 & \cellcolor{blue!10}.360 \\
  &
  & Moir & \cellcolor{red!10}.767 & \cellcolor{red!10}.386 & \cellcolor{red!10}.429 & \cellcolor{red!10}.926 & \cellcolor{red!10}.243 & \cellcolor{red!10}.438 & \cellcolor{red!10}.548 & \cellcolor{red!10}.483 & \cellcolor{red!10}.772 & \cellcolor{red!10}.689 & \cellcolor{red!10}.503 & \cellcolor{red!10}.268 & \cellcolor{red!10}.485 \\
  &
  & OLMoMix & \cellcolor{green!12}.787 & \cellcolor{green!12}.383 & \cellcolor{green!12}.454 & \cellcolor{green!12}.960 & \cellcolor{green!12}.241 & \cellcolor{green!12}.444 & \cellcolor{green!12}.551 & \cellcolor{green!12}.485 & \cellcolor{green!12}.774 & \cellcolor{green!12}.676 & \cellcolor{green!12}.521 & \cellcolor{green!12}.237 & \cellcolor{green!12}.470 \\
\cmidrule(l){2-16}
  & \multirow{3}{*}{3K}
  & Wiki & \cellcolor{blue!10}.733 & \cellcolor{blue!10}.362 & \cellcolor{blue!10}.437 & \cellcolor{blue!10}.1001 & \cellcolor{blue!10}.240 & \cellcolor{blue!10}.432 & \cellcolor{blue!10}.458 & \cellcolor{blue!10}.133 & \cellcolor{blue!10}.741 & \cellcolor{blue!10}.639 & \cellcolor{blue!10}.436 & \cellcolor{blue!10}.073 & \cellcolor{blue!10}.210 \\
  &
  & Moir & \cellcolor{red!10}.681 & \cellcolor{red!10}.325 & \cellcolor{red!10}.409 & \cellcolor{red!10}.985 & \cellcolor{red!10}.210 & \cellcolor{red!10}.392 & \cellcolor{red!10}.510 & \cellcolor{red!10}.211 & \cellcolor{red!10}.735 & \cellcolor{red!10}.663 & \cellcolor{red!10}.436 & \cellcolor{red!10}.158 & \cellcolor{red!10}.330 \\
  &
  & OLMoMix & \cellcolor{green!12}.672 & \cellcolor{green!12}.292 & \cellcolor{green!12}.424 & \cellcolor{green!12}.996 & \cellcolor{green!12}.212 & \cellcolor{green!12}.385 & \cellcolor{green!12}.507 & \cellcolor{green!12}.185 & \cellcolor{green!12}.737 & \cellcolor{green!12}.659 & \cellcolor{green!12}.448 & \cellcolor{green!12}.134 & \cellcolor{green!12}.301 \\
\cmidrule(l){2-16}
  & \multirow{3}{*}{5K}
  & Wiki & \cellcolor{blue!10}.599 & \cellcolor{blue!10}.246 & \cellcolor{blue!10}.380 & \cellcolor{blue!10}.989 & \cellcolor{blue!10}.182 & \cellcolor{blue!10}.336 & \cellcolor{blue!10}.352 & \cellcolor{blue!10}.009 & \cellcolor{blue!10}.677 & \cellcolor{blue!10}.619 & \cellcolor{blue!10}.356 & \cellcolor{blue!10}.000 & \cellcolor{blue!10}.000 \\
  &
  & Moir & \cellcolor{red!10}.524 & \cellcolor{red!10}.213 & \cellcolor{red!10}.376 & \cellcolor{red!10}.980 & \cellcolor{red!10}.159 & \cellcolor{red!10}.301 & \cellcolor{red!10}.405 & \cellcolor{red!10}.030 & \cellcolor{red!10}.674 & \cellcolor{red!10}.638 & \cellcolor{red!10}.370 & \cellcolor{red!10}.048 & \cellcolor{red!10}.097 \\
  &
  & OLMoMix & \cellcolor{green!12}.533 & \cellcolor{green!12}.234 & \cellcolor{green!12}.362 & \cellcolor{green!12}.1008 & \cellcolor{green!12}.172 & \cellcolor{green!12}.318 & \cellcolor{green!12}.446 & \cellcolor{green!12}.051 & \cellcolor{green!12}.675 & \cellcolor{green!12}.622 & \cellcolor{green!12}.374 & \cellcolor{green!12}.018 & \cellcolor{green!12}.073 \\
\midrule\midrule
\multirow{8}{*}{Llama3}
  & \multirow{2}{*}{1K}
  & Wiki & \cellcolor{blue!10}.952 & \cellcolor{blue!10}.783 & \cellcolor{blue!10}.461 & \cellcolor{blue!10}.813 & \cellcolor{blue!10}.500 & \cellcolor{blue!10}.647 & \cellcolor{blue!10}.493 & \cellcolor{blue!10}.508 & \cellcolor{blue!10}.680 & \cellcolor{blue!10}.629 & \cellcolor{blue!10}.442 & \cellcolor{blue!10}.274 & \cellcolor{blue!10}.462 \\
  &
  & Moir & \cellcolor{red!10}.959 & \cellcolor{red!10}.793 & \cellcolor{red!10}.466 & \cellcolor{red!10}.826 & \cellcolor{red!10}.493 & \cellcolor{red!10}.650 & \cellcolor{red!10}.553 & \cellcolor{red!10}.645 & \cellcolor{red!10}.725 & \cellcolor{red!10}.674 & \cellcolor{red!10}.503 & \cellcolor{red!10}.518 & \cellcolor{red!10}.592 \\
\cmidrule(l){2-16}
  & \multirow{2}{*}{2K}
  & Wiki & \cellcolor{blue!10}.916 & \cellcolor{blue!10}.762 & \cellcolor{blue!10}.359 & \cellcolor{blue!10}.789 & \cellcolor{blue!10}.491 & \cellcolor{blue!10}.589 & \cellcolor{blue!10}.316 & \cellcolor{blue!10}.040 & \cellcolor{blue!10}.591 & \cellcolor{blue!10}.566 & \cellcolor{blue!10}.359 & \cellcolor{blue!10}.006 & \cellcolor{blue!10}.030 \\
  &
  & Moir & \cellcolor{red!10}.930 & \cellcolor{red!10}.782 & \cellcolor{red!10}.374 & \cellcolor{red!10}.806 & \cellcolor{red!10}.479 & \cellcolor{red!10}.598 & \cellcolor{red!10}.449 & \cellcolor{red!10}.375 & \cellcolor{red!10}.670 & \cellcolor{red!10}.614 & \cellcolor{red!10}.413 & \cellcolor{red!10}.396 & \cellcolor{red!10}.463 \\
\cmidrule(l){2-16}
  & \multirow{2}{*}{3K}
  & Wiki & \cellcolor{blue!10}.843 & \cellcolor{blue!10}.693 & \cellcolor{blue!10}.328 & \cellcolor{blue!10}.822 & \cellcolor{blue!10}.442 & \cellcolor{blue!10}.546 & \cellcolor{blue!10}.262 & \cellcolor{blue!10}.000 & \cellcolor{blue!10}.483 & \cellcolor{blue!10}.541 & \cellcolor{blue!10}.281 & \cellcolor{blue!10}.000 & \cellcolor{blue!10}.000 \\
  &
  & Moir & \cellcolor{red!10}.901 & \cellcolor{red!10}.739 & \cellcolor{red!10}.334 & \cellcolor{red!10}.824 & \cellcolor{red!10}.475 & \cellcolor{red!10}.570 & \cellcolor{red!10}.377 & \cellcolor{red!10}.113 & \cellcolor{red!10}.590 & \cellcolor{red!10}.569 & \cellcolor{red!10}.342 & \cellcolor{red!10}.262 & \cellcolor{red!10}.277 \\
\cmidrule(l){2-16}
  & \multirow{2}{*}{5K}
  & Wiki & \cellcolor{blue!10}.783 & \cellcolor{blue!10}.600 & \cellcolor{blue!10}.264 & \cellcolor{blue!10}.760 & \cellcolor{blue!10}.389 & \cellcolor{blue!10}.471 & \cellcolor{blue!10}.244 & \cellcolor{blue!10}.000 & \cellcolor{blue!10}.381 & \cellcolor{blue!10}.522 & \cellcolor{blue!10}.263 & \cellcolor{blue!10}.000 & \cellcolor{blue!10}.000 \\
  &
  & Moir & \cellcolor{red!10}.822 & \cellcolor{red!10}.665 & \cellcolor{red!10}.267 & \cellcolor{red!10}.793 & \cellcolor{red!10}.446 & \cellcolor{red!10}.502 & \cellcolor{red!10}.287 & \cellcolor{red!10}.001 & \cellcolor{red!10}.462 & \cellcolor{red!10}.526 & \cellcolor{red!10}.288 & \cellcolor{red!10}.048 & \cellcolor{red!10}.008 \\
\midrule\midrule
\multirow{8}{*}{Qwen3}
  & \multirow{2}{*}{1K}
  & Wiki & \cellcolor{blue!10}.953 & \cellcolor{blue!10}.577 & \cellcolor{blue!10}.526 & \cellcolor{blue!10}.968 & \cellcolor{blue!10}.326 & \cellcolor{blue!10}.569 & \cellcolor{blue!10}.695 & \cellcolor{blue!10}.846 & \cellcolor{blue!10}.726 & \cellcolor{blue!10}.663 & \cellcolor{blue!10}.512 & \cellcolor{blue!10}.597 & \cellcolor{blue!10}.657 \\
  &
  & Moir & \cellcolor{red!10}.943 & \cellcolor{red!10}.628 & \cellcolor{red!10}.464 & \cellcolor{red!10}.945 & \cellcolor{red!10}.321 & \cellcolor{red!10}.557 & \cellcolor{red!10}.711 & \cellcolor{red!10}.862 & \cellcolor{red!10}.734 & \cellcolor{red!10}.660 & \cellcolor{red!10}.550 & \cellcolor{red!10}.701 & \cellcolor{red!10}.691 \\
\cmidrule(l){2-16}
  & \multirow{2}{*}{2K}
  & Wiki & \cellcolor{blue!10}.912 & \cellcolor{blue!10}.522 & \cellcolor{blue!10}.456 & \cellcolor{blue!10}.970 & \cellcolor{blue!10}.291 & \cellcolor{blue!10}.517 & \cellcolor{blue!10}.660 & \cellcolor{blue!10}.787 & \cellcolor{blue!10}.703 & \cellcolor{blue!10}.637 & \cellcolor{blue!10}.486 & \cellcolor{blue!10}.439 & \cellcolor{blue!10}.593 \\
  &
  & Moir & \cellcolor{red!10}.870 & \cellcolor{red!10}.509 & \cellcolor{red!10}.415 & \cellcolor{red!10}.944 & \cellcolor{red!10}.285 & \cellcolor{red!10}.496 & \cellcolor{red!10}.684 & \cellcolor{red!10}.847 & \cellcolor{red!10}.716 & \cellcolor{red!10}.648 & \cellcolor{red!10}.523 & \cellcolor{red!10}.615 & \cellcolor{red!10}.658 \\
\cmidrule(l){2-16}
  & \multirow{2}{*}{3K}
  & Wiki & \cellcolor{blue!10}.874 & \cellcolor{blue!10}.470 & \cellcolor{blue!10}.442 & \cellcolor{blue!10}.972 & \cellcolor{blue!10}.280 & \cellcolor{blue!10}.493 & \cellcolor{blue!10}.622 & \cellcolor{blue!10}.652 & \cellcolor{blue!10}.683 & \cellcolor{blue!10}.614 & \cellcolor{blue!10}.457 & \cellcolor{blue!10}.353 & \cellcolor{blue!10}.533 \\
  &
  & Moir & \cellcolor{red!10}.816 & \cellcolor{red!10}.452 & \cellcolor{red!10}.413 & \cellcolor{red!10}.952 & \cellcolor{red!10}.273 & \cellcolor{red!10}.473 & \cellcolor{red!10}.664 & \cellcolor{red!10}.805 & \cellcolor{red!10}.706 & \cellcolor{red!10}.621 & \cellcolor{red!10}.525 & \cellcolor{red!10}.536 & \cellcolor{red!10}.629 \\
\cmidrule(l){2-16}
  & \multirow{2}{*}{5K}
  & Wiki & \cellcolor{blue!10}.767 & \cellcolor{blue!10}.380 & \cellcolor{blue!10}.412 & \cellcolor{blue!10}.958 & \cellcolor{blue!10}.230 & \cellcolor{blue!10}.425 & \cellcolor{blue!10}.560 & \cellcolor{blue!10}.319 & \cellcolor{blue!10}.634 & \cellcolor{blue!10}.583 & \cellcolor{blue!10}.357 & \cellcolor{blue!10}.128 & \cellcolor{blue!10}.318 \\
  &
  & Moir & \cellcolor{red!10}.718 & \cellcolor{red!10}.351 & \cellcolor{red!10}.390 & \cellcolor{red!10}.927 & \cellcolor{red!10}.227 & \cellcolor{red!10}.407 & \cellcolor{red!10}.609 & \cellcolor{red!10}.689 & \cellcolor{red!10}.662 & \cellcolor{red!10}.591 & \cellcolor{red!10}.437 & \cellcolor{red!10}.451 & \cellcolor{red!10}.555 \\
\bottomrule
\end{tabular}%
}
\end{table*}

\clearpage
\begin{table*}[h]
\centering
\caption{MEMIT sequential editing (cached). $C_\text{Wiki}$ (\colorbox{blue!10}{blue}) vs $C_{\method}$ (\colorbox{red!10}{red}).}
\label{tab:memit_seq}
\resizebox{\textwidth}{!}{%
\begin{tabular}{cl l | ccccc | c | cccccc | c}
\toprule
& & & \multicolumn{5}{c|}{\textit{Edit Metrics}} & & \multicolumn{6}{c|}{\textit{Preservation Metrics}} & \\
\cmidrule(lr){4-8} \cmidrule(l){10-15}
Model & $N$ & $C$ & Eff & Gen & Spec & Flu & Port & HM & MMLU & GSM8K & HeSw & WiGr & ARC & HE & HM \\
\midrule\midrule
\multirow{16}{*}{OLMo2}
  & \multirow{2}{*}{10}
  & Wiki & \cellcolor{blue!10}.1000 & \cellcolor{blue!10}.600 & \cellcolor{blue!10}.710 & \cellcolor{blue!10}.782 & \cellcolor{blue!10}.440 & \cellcolor{blue!10}.655 & \cellcolor{blue!10}.590 & \cellcolor{blue!10}.765 & \cellcolor{blue!10}.832 & \cellcolor{blue!10}.715 & \cellcolor{blue!10}.589 & \cellcolor{blue!10}.414 & \cellcolor{blue!10}.618 \\
  &
  & Moir & \cellcolor{red!10}.900 & \cellcolor{red!10}.500 & \cellcolor{red!10}.620 & \cellcolor{red!10}.824 & \cellcolor{red!10}.350 & \cellcolor{red!10}.568 & \cellcolor{red!10}.592 & \cellcolor{red!10}.771 & \cellcolor{red!10}.833 & \cellcolor{red!10}.719 & \cellcolor{red!10}.586 & \cellcolor{red!10}.402 & \cellcolor{red!10}.614 \\
\cmidrule(l){2-16}
  & \multirow{2}{*}{50}
  & Wiki & \cellcolor{blue!10}.1000 & \cellcolor{blue!10}.730 & \cellcolor{blue!10}.520 & \cellcolor{blue!10}.878 & \cellcolor{blue!10}.386 & \cellcolor{blue!10}.623 & \cellcolor{blue!10}.589 & \cellcolor{blue!10}.757 & \cellcolor{blue!10}.826 & \cellcolor{blue!10}.714 & \cellcolor{blue!10}.582 & \cellcolor{blue!10}.445 & \cellcolor{blue!10}.625 \\
  &
  & Moir & \cellcolor{red!10}.940 & \cellcolor{red!10}.790 & \cellcolor{red!10}.380 & \cellcolor{red!10}.844 & \cellcolor{red!10}.360 & \cellcolor{red!10}.560 & \cellcolor{red!10}.588 & \cellcolor{red!10}.766 & \cellcolor{red!10}.828 & \cellcolor{red!10}.719 & \cellcolor{red!10}.583 & \cellcolor{red!10}.396 & \cellcolor{red!10}.610 \\
\cmidrule(l){2-16}
  & \multirow{2}{*}{100}
  & Wiki & \cellcolor{blue!10}.950 & \cellcolor{blue!10}.745 & \cellcolor{blue!10}.421 & \cellcolor{blue!10}.939 & \cellcolor{blue!10}.428 & \cellcolor{blue!10}.611 & \cellcolor{blue!10}.586 & \cellcolor{blue!10}.759 & \cellcolor{blue!10}.810 & \cellcolor{blue!10}.682 & \cellcolor{blue!10}.529 & \cellcolor{blue!10}.372 & \cellcolor{blue!10}.582 \\
  &
  & Moir & \cellcolor{red!10}.865 & \cellcolor{red!10}.695 & \cellcolor{red!10}.391 & \cellcolor{red!10}.817 & \cellcolor{red!10}.370 & \cellcolor{red!10}.550 & \cellcolor{red!10}.589 & \cellcolor{red!10}.751 & \cellcolor{red!10}.820 & \cellcolor{red!10}.707 & \cellcolor{red!10}.562 & \cellcolor{red!10}.384 & \cellcolor{red!10}.597 \\
\cmidrule(l){2-16}
  & \multirow{2}{*}{200}
  & Wiki & \cellcolor{blue!10}.800 & \cellcolor{blue!10}.522 & \cellcolor{blue!10}.343 & \cellcolor{blue!10}.745 & \cellcolor{blue!10}.376 & \cellcolor{blue!10}.496 & \cellcolor{blue!10}.556 & \cellcolor{blue!10}.004 & \cellcolor{blue!10}.755 & \cellcolor{blue!10}.646 & \cellcolor{blue!10}.393 & \cellcolor{blue!10}.262 & \cellcolor{blue!10}.026 \\
  &
  & Moir & \cellcolor{red!10}.630 & \cellcolor{red!10}.512 & \cellcolor{red!10}.247 & \cellcolor{red!10}.653 & \cellcolor{red!10}.322 & \cellcolor{red!10}.409 & \cellcolor{red!10}.584 & \cellcolor{red!10}.394 & \cellcolor{red!10}.785 & \cellcolor{red!10}.671 & \cellcolor{red!10}.490 & \cellcolor{red!10}.347 & \cellcolor{red!10}.503 \\
\cmidrule(l){2-16}
  & \multirow{2}{*}{500}
  & Wiki & \cellcolor{blue!10}.618 & \cellcolor{blue!10}.287 & \cellcolor{blue!10}.182 & \cellcolor{blue!10}.597 & \cellcolor{blue!10}.354 & \cellcolor{blue!10}.331 & \cellcolor{blue!10}.293 & \cellcolor{blue!10}.000 & \cellcolor{blue!10}.544 & \cellcolor{blue!10}.548 & \cellcolor{blue!10}.282 & \cellcolor{blue!10}.000 & \cellcolor{blue!10}.000 \\
  &
  & Moir & \cellcolor{red!10}.552 & \cellcolor{red!10}.324 & \cellcolor{red!10}.130 & \cellcolor{red!10}.781 & \cellcolor{red!10}.289 & \cellcolor{red!10}.288 & \cellcolor{red!10}.479 & \cellcolor{red!10}.000 & \cellcolor{red!10}.651 & \cellcolor{red!10}.574 & \cellcolor{red!10}.325 & \cellcolor{red!10}.012 & \cellcolor{red!10}.000 \\
\cmidrule(l){2-16}
  & \multirow{2}{*}{1K}
  & Wiki & \cellcolor{blue!10}.614 & \cellcolor{blue!10}.234 & \cellcolor{blue!10}.105 & \cellcolor{blue!10}.539 & \cellcolor{blue!10}.309 & \cellcolor{blue!10}.244 & \cellcolor{blue!10}.235 & \cellcolor{blue!10}.000 & \cellcolor{blue!10}.458 & \cellcolor{blue!10}.537 & \cellcolor{blue!10}.273 & \cellcolor{blue!10}.000 & \cellcolor{blue!10}.000 \\
  &
  & Moir & \cellcolor{red!10}.533 & \cellcolor{red!10}.219 & \cellcolor{red!10}.101 & \cellcolor{red!10}.803 & \cellcolor{red!10}.263 & \cellcolor{red!10}.234 & \cellcolor{red!10}.255 & \cellcolor{red!10}.000 & \cellcolor{red!10}.511 & \cellcolor{red!10}.549 & \cellcolor{red!10}.281 & \cellcolor{red!10}.000 & \cellcolor{red!10}.000 \\
\cmidrule(l){2-16}
  & \multirow{2}{*}{2K}
  & Wiki & \cellcolor{blue!10}.575 & \cellcolor{blue!10}.175 & \cellcolor{blue!10}.063 & \cellcolor{blue!10}.641 & \cellcolor{blue!10}.242 & \cellcolor{blue!10}.173 & \cellcolor{blue!10}.239 & \cellcolor{blue!10}.000 & \cellcolor{blue!10}.400 & \cellcolor{blue!10}.518 & \cellcolor{blue!10}.278 & \cellcolor{blue!10}.000 & \cellcolor{blue!10}.000 \\
  &
  & Moir & \cellcolor{red!10}.445 & \cellcolor{red!10}.156 & \cellcolor{red!10}.087 & \cellcolor{red!10}.865 & \cellcolor{red!10}.172 & \cellcolor{red!10}.185 & \cellcolor{red!10}.242 & \cellcolor{red!10}.000 & \cellcolor{red!10}.413 & \cellcolor{red!10}.505 & \cellcolor{red!10}.250 & \cellcolor{red!10}.000 & \cellcolor{red!10}.000 \\
\cmidrule(l){2-16}
  & \multirow{2}{*}{5K}
  & Wiki & \cellcolor{blue!10}.374 & \cellcolor{blue!10}.100 & \cellcolor{blue!10}.047 & \cellcolor{blue!10}.765 & \cellcolor{blue!10}.161 & \cellcolor{blue!10}.120 & \cellcolor{blue!10}.250 & \cellcolor{blue!10}.000 & \cellcolor{blue!10}.347 & \cellcolor{blue!10}.505 & \cellcolor{blue!10}.274 & \cellcolor{blue!10}.000 & \cellcolor{blue!10}.000 \\
  &
  & Moir & \cellcolor{red!10}.308 & \cellcolor{red!10}.099 & \cellcolor{red!10}.066 & \cellcolor{red!10}.849 & \cellcolor{red!10}.142 & \cellcolor{red!10}.136 & \cellcolor{red!10}.243 & \cellcolor{red!10}.000 & \cellcolor{red!10}.349 & \cellcolor{red!10}.508 & \cellcolor{red!10}.252 & \cellcolor{red!10}.000 & \cellcolor{red!10}.000 \\
\midrule\midrule
\multirow{16}{*}{Llama3}
  & \multirow{2}{*}{10}
  & Wiki & \cellcolor{blue!10}.1000 & \cellcolor{blue!10}.800 & \cellcolor{blue!10}.820 & \cellcolor{blue!10}.960 & \cellcolor{blue!10}.520 & \cellcolor{blue!10}.777 & \cellcolor{blue!10}.682 & \cellcolor{blue!10}.717 & \cellcolor{blue!10}.794 & \cellcolor{blue!10}.739 & \cellcolor{blue!10}.555 & \cellcolor{blue!10}.567 & \cellcolor{blue!10}.664 \\
  &
  & Moir & \cellcolor{red!10}.1000 & \cellcolor{red!10}.800 & \cellcolor{red!10}.860 & \cellcolor{red!10}.895 & \cellcolor{red!10}.460 & \cellcolor{red!10}.745 & \cellcolor{red!10}.686 & \cellcolor{red!10}.725 & \cellcolor{red!10}.795 & \cellcolor{red!10}.741 & \cellcolor{red!10}.556 & \cellcolor{red!10}.597 & \cellcolor{red!10}.673 \\
\cmidrule(l){2-16}
  & \multirow{2}{*}{50}
  & Wiki & \cellcolor{blue!10}.860 & \cellcolor{blue!10}.750 & \cellcolor{blue!10}.516 & \cellcolor{blue!10}.638 & \cellcolor{blue!10}.364 & \cellcolor{blue!10}.571 & \cellcolor{blue!10}.616 & \cellcolor{blue!10}.455 & \cellcolor{blue!10}.695 & \cellcolor{blue!10}.693 & \cellcolor{blue!10}.436 & \cellcolor{blue!10}.054 & \cellcolor{blue!10}.220 \\
  &
  & Moir & \cellcolor{red!10}.940 & \cellcolor{red!10}.770 & \cellcolor{red!10}.560 & \cellcolor{red!10}.808 & \cellcolor{red!10}.456 & \cellcolor{red!10}.659 & \cellcolor{red!10}.659 & \cellcolor{red!10}.721 & \cellcolor{red!10}.781 & \cellcolor{red!10}.739 & \cellcolor{red!10}.535 & \cellcolor{red!10}.609 & \cellcolor{red!10}.663 \\
\cmidrule(l){2-16}
  & \multirow{2}{*}{100}
  & Wiki & \cellcolor{blue!10}.710 & \cellcolor{blue!10}.605 & \cellcolor{blue!10}.277 & \cellcolor{blue!10}.474 & \cellcolor{blue!10}.359 & \cellcolor{blue!10}.432 & \cellcolor{blue!10}.310 & \cellcolor{blue!10}.000 & \cellcolor{blue!10}.438 & \cellcolor{blue!10}.532 & \cellcolor{blue!10}.266 & \cellcolor{blue!10}.000 & \cellcolor{blue!10}.000 \\
  &
  & Moir & \cellcolor{red!10}.700 & \cellcolor{red!10}.650 & \cellcolor{red!10}.220 & \cellcolor{red!10}.390 & \cellcolor{red!10}.433 & \cellcolor{red!10}.403 & \cellcolor{red!10}.507 & \cellcolor{red!10}.418 & \cellcolor{red!10}.605 & \cellcolor{red!10}.630 & \cellcolor{red!10}.368 & \cellcolor{red!10}.365 & \cellcolor{red!10}.460 \\
\cmidrule(l){2-16}
  & \multirow{2}{*}{200}
  & Wiki & \cellcolor{blue!10}.000 & \cellcolor{blue!10}.000 & \cellcolor{blue!10}.002 & \cellcolor{blue!10}.055 & \cellcolor{blue!10}.000 & \cellcolor{blue!10}.000 & \cellcolor{blue!10}.269 & \cellcolor{blue!10}.000 & \cellcolor{blue!10}.266 & \cellcolor{blue!10}.480 & \cellcolor{blue!10}.255 & \cellcolor{blue!10}.000 & \cellcolor{blue!10}.000 \\
  &
  & Moir & \cellcolor{red!10}.000 & \cellcolor{red!10}.000 & \cellcolor{red!10}.000 & \cellcolor{red!10}.128 & \cellcolor{red!10}.000 & \cellcolor{red!10}.000 & \cellcolor{red!10}.255 & \cellcolor{red!10}.000 & \cellcolor{red!10}.272 & \cellcolor{red!10}.512 & \cellcolor{red!10}.255 & \cellcolor{red!10}.000 & \cellcolor{red!10}.000 \\
\cmidrule(l){2-16}
  & \multirow{2}{*}{500}
  & Wiki & \cellcolor{blue!10}.000 & \cellcolor{blue!10}.000 & \cellcolor{blue!10}.001 & \cellcolor{blue!10}.042 & \cellcolor{blue!10}.000 & \cellcolor{blue!10}.000 & \cellcolor{blue!10}.255 & \cellcolor{blue!10}.000 & \cellcolor{blue!10}.262 & \cellcolor{blue!10}.505 & \cellcolor{blue!10}.248 & \cellcolor{blue!10}.000 & \cellcolor{blue!10}.000 \\
  &
  & Moir & \cellcolor{red!10}.000 & \cellcolor{red!10}.000 & \cellcolor{red!10}.000 & \cellcolor{red!10}.041 & \cellcolor{red!10}.000 & \cellcolor{red!10}.000 & \cellcolor{red!10}.255 & \cellcolor{red!10}.000 & \cellcolor{red!10}.268 & \cellcolor{red!10}.504 & \cellcolor{red!10}.263 & \cellcolor{red!10}.000 & \cellcolor{red!10}.000 \\
\cmidrule(l){2-16}
  & \multirow{2}{*}{1K}
  & Wiki & \cellcolor{blue!10}.000 & \cellcolor{blue!10}.000 & \cellcolor{blue!10}.003 & \cellcolor{blue!10}.042 & \cellcolor{blue!10}.000 & \cellcolor{blue!10}.000 & \cellcolor{blue!10}.255 & \cellcolor{blue!10}.000 & \cellcolor{blue!10}.263 & \cellcolor{blue!10}.511 & \cellcolor{blue!10}.251 & \cellcolor{blue!10}.000 & \cellcolor{blue!10}.000 \\
  &
  & Moir & \cellcolor{red!10}.000 & \cellcolor{red!10}.000 & \cellcolor{red!10}.000 & \cellcolor{red!10}.042 & \cellcolor{red!10}.000 & \cellcolor{red!10}.000 & \cellcolor{red!10}.255 & \cellcolor{red!10}.000 & \cellcolor{red!10}.268 & \cellcolor{red!10}.498 & \cellcolor{red!10}.251 & \cellcolor{red!10}.000 & \cellcolor{red!10}.000 \\
\cmidrule(l){2-16}
  & \multirow{2}{*}{2K}
  & Wiki & \cellcolor{blue!10}.000 & \cellcolor{blue!10}.000 & \cellcolor{blue!10}.004 & \cellcolor{blue!10}.043 & \cellcolor{blue!10}.000 & \cellcolor{blue!10}.000 & \cellcolor{blue!10}.255 & \cellcolor{blue!10}.000 & \cellcolor{blue!10}.260 & \cellcolor{blue!10}.505 & \cellcolor{blue!10}.261 & \cellcolor{blue!10}.000 & \cellcolor{blue!10}.000 \\
  &
  & Moir & \cellcolor{red!10}.000 & \cellcolor{red!10}.000 & \cellcolor{red!10}.000 & \cellcolor{red!10}.043 & \cellcolor{red!10}.000 & \cellcolor{red!10}.000 & \cellcolor{red!10}.255 & \cellcolor{red!10}.000 & \cellcolor{red!10}.263 & \cellcolor{red!10}.495 & \cellcolor{red!10}.248 & \cellcolor{red!10}.000 & \cellcolor{red!10}.000 \\
\cmidrule(l){2-16}
  & \multirow{2}{*}{5K}
  & Wiki & \cellcolor{blue!10}.000 & \cellcolor{blue!10}.000 & \cellcolor{blue!10}.004 & \cellcolor{blue!10}.042 & \cellcolor{blue!10}.000 & \cellcolor{blue!10}.000 & \cellcolor{blue!10}.263 & \cellcolor{blue!10}.000 & \cellcolor{blue!10}.259 & \cellcolor{blue!10}.505 & \cellcolor{blue!10}.233 & \cellcolor{blue!10}.000 & \cellcolor{blue!10}.000 \\
  &
  & Moir & \cellcolor{red!10}.000 & \cellcolor{red!10}.000 & \cellcolor{red!10}.010 & \cellcolor{red!10}.055 & \cellcolor{red!10}.000 & \cellcolor{red!10}.000 & \cellcolor{red!10}.255 & \cellcolor{red!10}.000 & \cellcolor{red!10}.265 & \cellcolor{red!10}.487 & \cellcolor{red!10}.260 & \cellcolor{red!10}.000 & \cellcolor{red!10}.000 \\
\midrule\midrule
\multirow{16}{*}{Qwen3}
  & \multirow{2}{*}{10}
  & Wiki & \cellcolor{blue!10}.1000 & \cellcolor{blue!10}.600 & \cellcolor{blue!10}.910 & \cellcolor{blue!10}.990 & \cellcolor{blue!10}.370 & \cellcolor{blue!10}.668 & \cellcolor{blue!10}.728 & \cellcolor{blue!10}.869 & \cellcolor{blue!10}.749 & \cellcolor{blue!10}.676 & \cellcolor{blue!10}.563 & \cellcolor{blue!10}.646 & \cellcolor{blue!10}.693 \\
  &
  & Moir & \cellcolor{red!10}.1000 & \cellcolor{red!10}.600 & \cellcolor{red!10}.880 & \cellcolor{red!10}.939 & \cellcolor{red!10}.370 & \cellcolor{red!10}.660 & \cellcolor{red!10}.728 & \cellcolor{red!10}.871 & \cellcolor{red!10}.749 & \cellcolor{red!10}.678 & \cellcolor{red!10}.565 & \cellcolor{red!10}.622 & \cellcolor{red!10}.689 \\
\cmidrule(l){2-16}
  & \multirow{2}{*}{50}
  & Wiki & \cellcolor{blue!10}.1000 & \cellcolor{blue!10}.600 & \cellcolor{blue!10}.820 & \cellcolor{blue!10}.972 & \cellcolor{blue!10}.340 & \cellcolor{blue!10}.636 & \cellcolor{blue!10}.727 & \cellcolor{blue!10}.872 & \cellcolor{blue!10}.749 & \cellcolor{blue!10}.674 & \cellcolor{blue!10}.567 & \cellcolor{blue!10}.622 & \cellcolor{blue!10}.688 \\
  &
  & Moir & \cellcolor{red!10}.1000 & \cellcolor{red!10}.670 & \cellcolor{red!10}.736 & \cellcolor{red!10}.959 & \cellcolor{red!10}.373 & \cellcolor{red!10}.660 & \cellcolor{red!10}.730 & \cellcolor{red!10}.868 & \cellcolor{red!10}.749 & \cellcolor{red!10}.683 & \cellcolor{red!10}.568 & \cellcolor{red!10}.622 & \cellcolor{red!10}.690 \\
\cmidrule(l){2-16}
  & \multirow{2}{*}{100}
  & Wiki & \cellcolor{blue!10}.1000 & \cellcolor{blue!10}.600 & \cellcolor{blue!10}.799 & \cellcolor{blue!10}.955 & \cellcolor{blue!10}.325 & \cellcolor{blue!10}.621 & \cellcolor{blue!10}.729 & \cellcolor{blue!10}.879 & \cellcolor{blue!10}.749 & \cellcolor{blue!10}.677 & \cellcolor{blue!10}.566 & \cellcolor{blue!10}.585 & \cellcolor{blue!10}.682 \\
  &
  & Moir & \cellcolor{red!10}.1000 & \cellcolor{red!10}.665 & \cellcolor{red!10}.665 & \cellcolor{red!10}.947 & \cellcolor{red!10}.362 & \cellcolor{red!10}.639 & \cellcolor{red!10}.729 & \cellcolor{red!10}.872 & \cellcolor{red!10}.748 & \cellcolor{red!10}.677 & \cellcolor{red!10}.570 & \cellcolor{red!10}.640 & \cellcolor{red!10}.694 \\
\cmidrule(l){2-16}
  & \multirow{2}{*}{200}
  & Wiki & \cellcolor{blue!10}.990 & \cellcolor{blue!10}.580 & \cellcolor{blue!10}.737 & \cellcolor{blue!10}.922 & \cellcolor{blue!10}.323 & \cellcolor{blue!10}.604 & \cellcolor{blue!10}.727 & \cellcolor{blue!10}.869 & \cellcolor{blue!10}.747 & \cellcolor{blue!10}.685 & \cellcolor{blue!10}.567 & \cellcolor{blue!10}.603 & \cellcolor{blue!10}.686 \\
  &
  & Moir & \cellcolor{red!10}.985 & \cellcolor{red!10}.632 & \cellcolor{red!10}.575 & \cellcolor{red!10}.908 & \cellcolor{red!10}.366 & \cellcolor{red!10}.612 & \cellcolor{red!10}.727 & \cellcolor{red!10}.873 & \cellcolor{red!10}.748 & \cellcolor{red!10}.679 & \cellcolor{red!10}.564 & \cellcolor{red!10}.615 & \cellcolor{red!10}.688 \\
\cmidrule(l){2-16}
  & \multirow{2}{*}{500}
  & Wiki & \cellcolor{blue!10}.986 & \cellcolor{blue!10}.536 & \cellcolor{blue!10}.598 & \cellcolor{blue!10}.918 & \cellcolor{blue!10}.323 & \cellcolor{blue!10}.572 & \cellcolor{blue!10}.724 & \cellcolor{blue!10}.865 & \cellcolor{blue!10}.744 & \cellcolor{blue!10}.678 & \cellcolor{blue!10}.562 & \cellcolor{blue!10}.573 & \cellcolor{blue!10}.676 \\
  &
  & Moir & \cellcolor{red!10}.979 & \cellcolor{red!10}.618 & \cellcolor{red!10}.460 & \cellcolor{red!10}.857 & \cellcolor{red!10}.354 & \cellcolor{red!10}.568 & \cellcolor{red!10}.724 & \cellcolor{red!10}.870 & \cellcolor{red!10}.747 & \cellcolor{red!10}.657 & \cellcolor{red!10}.558 & \cellcolor{red!10}.609 & \cellcolor{red!10}.680 \\
\cmidrule(l){2-16}
  & \multirow{2}{*}{1K}
  & Wiki & \cellcolor{blue!10}.985 & \cellcolor{blue!10}.553 & \cellcolor{blue!10}.477 & \cellcolor{blue!10}.894 & \cellcolor{blue!10}.377 & \cellcolor{blue!10}.575 & \cellcolor{blue!10}.722 & \cellcolor{blue!10}.871 & \cellcolor{blue!10}.737 & \cellcolor{blue!10}.665 & \cellcolor{blue!10}.548 & \cellcolor{blue!10}.597 & \cellcolor{blue!10}.675 \\
  &
  & Moir & \cellcolor{red!10}.976 & \cellcolor{red!10}.630 & \cellcolor{red!10}.381 & \cellcolor{red!10}.842 & \cellcolor{red!10}.378 & \cellcolor{red!10}.551 & \cellcolor{red!10}.723 & \cellcolor{red!10}.878 & \cellcolor{red!10}.743 & \cellcolor{red!10}.661 & \cellcolor{red!10}.563 & \cellcolor{red!10}.634 & \cellcolor{red!10}.687 \\
\cmidrule(l){2-16}
  & \multirow{2}{*}{2K}
  & Wiki & \cellcolor{blue!10}.968 & \cellcolor{blue!10}.489 & \cellcolor{blue!10}.377 & \cellcolor{blue!10}.876 & \cellcolor{blue!10}.388 & \cellcolor{blue!10}.529 & \cellcolor{blue!10}.702 & \cellcolor{blue!10}.853 & \cellcolor{blue!10}.728 & \cellcolor{blue!10}.640 & \cellcolor{blue!10}.537 & \cellcolor{blue!10}.585 & \cellcolor{blue!10}.659 \\
  &
  & Moir & \cellcolor{red!10}.942 & \cellcolor{red!10}.578 & \cellcolor{red!10}.299 & \cellcolor{red!10}.842 & \cellcolor{red!10}.406 & \cellcolor{red!10}.511 & \cellcolor{red!10}.720 & \cellcolor{red!10}.868 & \cellcolor{red!10}.739 & \cellcolor{red!10}.664 & \cellcolor{red!10}.552 & \cellcolor{red!10}.597 & \cellcolor{red!10}.675 \\
\cmidrule(l){2-16}
  & \multirow{2}{*}{5K}
  & Wiki & \cellcolor{blue!10}.943 & \cellcolor{blue!10}.452 & \cellcolor{blue!10}.288 & \cellcolor{blue!10}.851 & \cellcolor{blue!10}.396 & \cellcolor{blue!10}.479 & \cellcolor{blue!10}.653 & \cellcolor{blue!10}.741 & \cellcolor{blue!10}.699 & \cellcolor{blue!10}.628 & \cellcolor{blue!10}.433 & \cellcolor{blue!10}.353 & \cellcolor{blue!10}.543 \\
  &
  & Moir & \cellcolor{red!10}.912 & \cellcolor{red!10}.502 & \cellcolor{red!10}.226 & \cellcolor{red!10}.828 & \cellcolor{red!10}.415 & \cellcolor{red!10}.449 & \cellcolor{red!10}.708 & \cellcolor{red!10}.863 & \cellcolor{red!10}.728 & \cellcolor{red!10}.656 & \cellcolor{red!10}.511 & \cellcolor{red!10}.585 & \cellcolor{red!10}.657 \\
\bottomrule
\end{tabular}%
}
\end{table*}

\clearpage
\subsection{AlphaEdit Editing Results}
\label{app:results_alphaedit}

Table~\ref{tab:ae_batch} and Table~\ref{tab:ae_optimal_seq} report the detailed metrics for AlphaEdit under the batch (5K--20K edits) and sequential (100--5K edits) regimes, respectively.

\begin{table*}[h]
\centering
\caption{AlphaEdit batch editing with per-model optimal null-space threshold. $C_\text{Wiki}$ (\colorbox{blue!10}{blue}) vs $C_{\method}$ (\colorbox{red!10}{red}).}
\label{tab:ae_batch}
\resizebox{\textwidth}{!}{%
\begin{tabular}{cl l | ccccc | c | cccccc | c}
\toprule
& & & \multicolumn{5}{c|}{\textit{Edit Metrics}} & & \multicolumn{6}{c|}{\textit{Preservation Metrics}} & \\
\cmidrule(lr){4-8} \cmidrule(l){10-15}
Model & $N$ & $C$ & Eff & Gen & Spec & Flu & Port & HM & MMLU & GSM8K & HeSw & WiGr & ARC & HE & HM \\
\midrule\midrule
\multirow{6}{*}{OLMo2}
  & \multirow{2}{*}{5K}
  & Wiki & \cellcolor{blue!10}.458 & \cellcolor{blue!10}.139 & \cellcolor{blue!10}.773 & \cellcolor{blue!10}.963 & \cellcolor{blue!10}.105 & \cellcolor{blue!10}.236 & \cellcolor{blue!10}.586 & \cellcolor{blue!10}.754 & \cellcolor{blue!10}.829 & \cellcolor{blue!10}.712 & \cellcolor{blue!10}.575 & \cellcolor{blue!10}.390 & \cellcolor{blue!10}.603 \\
  &
  & Moir & \cellcolor{red!10}.462 & \cellcolor{red!10}.144 & \cellcolor{red!10}.742 & \cellcolor{red!10}.971 & \cellcolor{red!10}.109 & \cellcolor{red!10}.242 & \cellcolor{red!10}.584 & \cellcolor{red!10}.755 & \cellcolor{red!10}.831 & \cellcolor{red!10}.718 & \cellcolor{red!10}.583 & \cellcolor{red!10}.378 & \cellcolor{red!10}.600 \\
\cmidrule(l){2-16}
  & \multirow{2}{*}{10K}
  & Wiki & \cellcolor{blue!10}.360 & \cellcolor{blue!10}.109 & \cellcolor{blue!10}.748 & \cellcolor{blue!10}.972 & \cellcolor{blue!10}.084 & \cellcolor{blue!10}.191 & \cellcolor{blue!10}.584 & \cellcolor{blue!10}.761 & \cellcolor{blue!10}.827 & \cellcolor{blue!10}.718 & \cellcolor{blue!10}.592 & \cellcolor{blue!10}.402 & \cellcolor{blue!10}.612 \\
  &
  & Moir & \cellcolor{red!10}.364 & \cellcolor{red!10}.110 & \cellcolor{red!10}.718 & \cellcolor{red!10}.979 & \cellcolor{red!10}.087 & \cellcolor{red!10}.194 & \cellcolor{red!10}.585 & \cellcolor{red!10}.763 & \cellcolor{red!10}.830 & \cellcolor{red!10}.717 & \cellcolor{red!10}.578 & \cellcolor{red!10}.414 & \cellcolor{red!10}.614 \\
\cmidrule(l){2-16}
  & \multirow{2}{*}{20K}
  & Wiki & \cellcolor{blue!10}.249 & \cellcolor{blue!10}.086 & \cellcolor{blue!10}.720 & \cellcolor{blue!10}.970 & \cellcolor{blue!10}.061 & \cellcolor{blue!10}.145 & \cellcolor{blue!10}.581 & \cellcolor{blue!10}.745 & \cellcolor{blue!10}.824 & \cellcolor{blue!10}.726 & \cellcolor{blue!10}.575 & \cellcolor{blue!10}.365 & \cellcolor{blue!10}.592 \\
  &
  & Moir & \cellcolor{red!10}.254 & \cellcolor{red!10}.084 & \cellcolor{red!10}.688 & \cellcolor{red!10}.974 & \cellcolor{red!10}.065 & \cellcolor{red!10}.149 & \cellcolor{red!10}.584 & \cellcolor{red!10}.755 & \cellcolor{red!10}.825 & \cellcolor{red!10}.713 & \cellcolor{red!10}.595 & \cellcolor{red!10}.414 & \cellcolor{red!10}.616 \\
\midrule\midrule
\multirow{6}{*}{Llama3}
  & \multirow{2}{*}{5K}
  & Wiki & \cellcolor{blue!10}.520 & \cellcolor{blue!10}.253 & \cellcolor{blue!10}.788 & \cellcolor{blue!10}.988 & \cellcolor{blue!10}.139 & \cellcolor{blue!10}.326 & \cellcolor{blue!10}.674 & \cellcolor{blue!10}.732 & \cellcolor{blue!10}.792 & \cellcolor{blue!10}.735 & \cellcolor{blue!10}.553 & \cellcolor{blue!10}.591 & \cellcolor{blue!10}.669 \\
  &
  & Moir & \cellcolor{red!10}.461 & \cellcolor{red!10}.207 & \cellcolor{red!10}.787 & \cellcolor{red!10}.991 & \cellcolor{red!10}.123 & \cellcolor{red!10}.287 & \cellcolor{red!10}.678 & \cellcolor{red!10}.714 & \cellcolor{red!10}.792 & \cellcolor{red!10}.734 & \cellcolor{red!10}.546 & \cellcolor{red!10}.579 & \cellcolor{red!10}.662 \\
\cmidrule(l){2-16}
  & \multirow{2}{*}{10K}
  & Wiki & \cellcolor{blue!10}.459 & \cellcolor{blue!10}.234 & \cellcolor{blue!10}.741 & \cellcolor{blue!10}.987 & \cellcolor{blue!10}.125 & \cellcolor{blue!10}.297 & \cellcolor{blue!10}.667 & \cellcolor{blue!10}.744 & \cellcolor{blue!10}.789 & \cellcolor{blue!10}.737 & \cellcolor{blue!10}.560 & \cellcolor{blue!10}.579 & \cellcolor{blue!10}.668 \\
  &
  & Moir & \cellcolor{red!10}.415 & \cellcolor{red!10}.188 & \cellcolor{red!10}.730 & \cellcolor{red!10}.989 & \cellcolor{red!10}.113 & \cellcolor{red!10}.264 & \cellcolor{red!10}.673 & \cellcolor{red!10}.708 & \cellcolor{red!10}.790 & \cellcolor{red!10}.734 & \cellcolor{red!10}.554 & \cellcolor{red!10}.615 & \cellcolor{red!10}.670 \\
\cmidrule(l){2-16}
  & \multirow{2}{*}{20K}
  & Wiki & \cellcolor{blue!10}.386 & \cellcolor{blue!10}.178 & \cellcolor{blue!10}.686 & \cellcolor{blue!10}.986 & \cellcolor{blue!10}.110 & \cellcolor{blue!10}.253 & \cellcolor{blue!10}.657 & \cellcolor{blue!10}.718 & \cellcolor{blue!10}.789 & \cellcolor{blue!10}.729 & \cellcolor{blue!10}.549 & \cellcolor{blue!10}.573 & \cellcolor{blue!10}.658 \\
  &
  & Moir & \cellcolor{red!10}.319 & \cellcolor{red!10}.143 & \cellcolor{red!10}.684 & \cellcolor{red!10}.990 & \cellcolor{red!10}.092 & \cellcolor{red!10}.213 & \cellcolor{red!10}.667 & \cellcolor{red!10}.716 & \cellcolor{red!10}.790 & \cellcolor{red!10}.739 & \cellcolor{red!10}.552 & \cellcolor{red!10}.622 & \cellcolor{red!10}.671 \\
\midrule\midrule
\multirow{6}{*}{Qwen3}
  & \multirow{2}{*}{5K}
  & Wiki & \cellcolor{blue!10}.809 & \cellcolor{blue!10}.228 & \cellcolor{blue!10}.631 & \cellcolor{blue!10}.935 & \cellcolor{blue!10}.199 & \cellcolor{blue!10}.376 & \cellcolor{blue!10}.534 & \cellcolor{blue!10}.124 & \cellcolor{blue!10}.638 & \cellcolor{blue!10}.579 & \cellcolor{blue!10}.356 & \cellcolor{blue!10}.073 & \cellcolor{blue!10}.202 \\
  &
  & Moir & \cellcolor{red!10}.728 & \cellcolor{red!10}.219 & \cellcolor{red!10}.625 & \cellcolor{red!10}.932 & \cellcolor{red!10}.180 & \cellcolor{red!10}.353 & \cellcolor{red!10}.653 & \cellcolor{red!10}.746 & \cellcolor{red!10}.692 & \cellcolor{red!10}.629 & \cellcolor{red!10}.463 & \cellcolor{red!10}.372 & \cellcolor{red!10}.558 \\
\cmidrule(l){2-16}
  & \multirow{2}{*}{10K}
  & Wiki & \cellcolor{blue!10}.620 & \cellcolor{blue!10}.168 & \cellcolor{blue!10}.591 & \cellcolor{blue!10}.934 & \cellcolor{blue!10}.149 & \cellcolor{blue!10}.294 & \cellcolor{blue!10}.489 & \cellcolor{blue!10}.026 & \cellcolor{blue!10}.617 & \cellcolor{blue!10}.577 & \cellcolor{blue!10}.352 & \cellcolor{blue!10}.091 & \cellcolor{blue!10}.105 \\
  &
  & Moir & \cellcolor{red!10}.474 & \cellcolor{red!10}.141 & \cellcolor{red!10}.619 & \cellcolor{red!10}.940 & \cellcolor{red!10}.117 & \cellcolor{red!10}.245 & \cellcolor{red!10}.661 & \cellcolor{red!10}.759 & \cellcolor{red!10}.692 & \cellcolor{red!10}.636 & \cellcolor{red!10}.484 & \cellcolor{red!10}.500 & \cellcolor{red!10}.605 \\
\cmidrule(l){2-16}
  & \multirow{2}{*}{20K}
  & Wiki & \cellcolor{blue!10}.388 & \cellcolor{blue!10}.115 & \cellcolor{blue!10}.588 & \cellcolor{blue!10}.937 & \cellcolor{blue!10}.099 & \cellcolor{blue!10}.207 & \cellcolor{blue!10}.535 & \cellcolor{blue!10}.109 & \cellcolor{blue!10}.636 & \cellcolor{blue!10}.583 & \cellcolor{blue!10}.353 & \cellcolor{blue!10}.067 & \cellcolor{blue!10}.187 \\
  &
  & Moir & \cellcolor{red!10}.244 & \cellcolor{red!10}.090 & \cellcolor{red!10}.635 & \cellcolor{red!10}.957 & \cellcolor{red!10}.065 & \cellcolor{red!10}.150 & \cellcolor{red!10}.677 & \cellcolor{red!10}.799 & \cellcolor{red!10}.709 & \cellcolor{red!10}.644 & \cellcolor{red!10}.514 & \cellcolor{red!10}.500 & \cellcolor{red!10}.623 \\
\bottomrule
\end{tabular}%
}
\end{table*}

\clearpage
\begin{table*}[h]
\centering
\caption{AlphaEdit sequential editing with per-model optimal null-space threshold. $C_\text{Wiki}$ (\colorbox{blue!10}{blue}) vs $C_{\method}$ (\colorbox{red!10}{red}).}
\label{tab:ae_optimal_seq}
\resizebox{\textwidth}{!}{%
\begin{tabular}{cl l | ccccc | c | cccccc | c}
\toprule
& & & \multicolumn{5}{c|}{\textit{Edit Metrics}} & & \multicolumn{6}{c|}{\textit{Preservation Metrics}} & \\
\cmidrule(lr){4-8} \cmidrule(l){10-15}
Model & $N$ & $C$ & Eff & Gen & Spec & Flu & Port & HM & MMLU & GSM8K & HeSw & WiGr & ARC & HE & HM \\
\midrule\midrule
\multirow{10}{*}{OLMo2}
  & \multirow{2}{*}{100}
  & Wiki & \cellcolor{blue!10}.555 & \cellcolor{blue!10}.125 & \cellcolor{blue!10}.941 & \cellcolor{blue!10}.956 & \cellcolor{blue!10}.145 & \cellcolor{blue!10}.265 & \cellcolor{blue!10}.590 & \cellcolor{blue!10}.762 & \cellcolor{blue!10}.833 & \cellcolor{blue!10}.714 & \cellcolor{blue!10}.586 & \cellcolor{blue!10}.414 & \cellcolor{blue!10}.617 \\
  &
  & Moir & \cellcolor{red!10}.785 & \cellcolor{red!10}.255 & \cellcolor{red!10}.891 & \cellcolor{red!10}.964 & \cellcolor{red!10}.199 & \cellcolor{red!10}.403 & \cellcolor{red!10}.590 & \cellcolor{red!10}.768 & \cellcolor{red!10}.833 & \cellcolor{red!10}.715 & \cellcolor{red!10}.587 & \cellcolor{red!10}.420 & \cellcolor{red!10}.620 \\
\cmidrule(l){2-16}
  & \multirow{2}{*}{500}
  & Wiki & \cellcolor{blue!10}.665 & \cellcolor{blue!10}.240 & \cellcolor{blue!10}.858 & \cellcolor{blue!10}.968 & \cellcolor{blue!10}.156 & \cellcolor{blue!10}.350 & \cellcolor{blue!10}.590 & \cellcolor{blue!10}.773 & \cellcolor{blue!10}.832 & \cellcolor{blue!10}.711 & \cellcolor{blue!10}.586 & \cellcolor{blue!10}.390 & \cellcolor{blue!10}.608 \\
  &
  & Moir & \cellcolor{red!10}.838 & \cellcolor{red!10}.313 & \cellcolor{red!10}.755 & \cellcolor{red!10}.963 & \cellcolor{red!10}.207 & \cellcolor{red!10}.432 & \cellcolor{red!10}.589 & \cellcolor{red!10}.760 & \cellcolor{red!10}.833 & \cellcolor{red!10}.715 & \cellcolor{red!10}.587 & \cellcolor{red!10}.396 & \cellcolor{red!10}.610 \\
\cmidrule(l){2-16}
  & \multirow{2}{*}{1K}
  & Wiki & \cellcolor{blue!10}.718 & \cellcolor{blue!10}.303 & \cellcolor{blue!10}.781 & \cellcolor{blue!10}.964 & \cellcolor{blue!10}.185 & \cellcolor{blue!10}.403 & \cellcolor{blue!10}.590 & \cellcolor{blue!10}.760 & \cellcolor{blue!10}.831 & \cellcolor{blue!10}.723 & \cellcolor{blue!10}.585 & \cellcolor{blue!10}.384 & \cellcolor{blue!10}.605 \\
  &
  & Moir & \cellcolor{red!10}.855 & \cellcolor{red!10}.351 & \cellcolor{red!10}.652 & \cellcolor{red!10}.965 & \cellcolor{red!10}.236 & \cellcolor{red!10}.462 & \cellcolor{red!10}.591 & \cellcolor{red!10}.765 & \cellcolor{red!10}.832 & \cellcolor{red!10}.720 & \cellcolor{red!10}.587 & \cellcolor{red!10}.378 & \cellcolor{red!10}.603 \\
\cmidrule(l){2-16}
  & \multirow{2}{*}{2K}
  & Wiki & \cellcolor{blue!10}.723 & \cellcolor{blue!10}.317 & \cellcolor{blue!10}.714 & \cellcolor{blue!10}.953 & \cellcolor{blue!10}.187 & \cellcolor{blue!10}.405 & \cellcolor{blue!10}.588 & \cellcolor{blue!10}.748 & \cellcolor{blue!10}.831 & \cellcolor{blue!10}.719 & \cellcolor{blue!10}.577 & \cellcolor{blue!10}.372 & \cellcolor{blue!10}.597 \\
  &
  & Moir & \cellcolor{red!10}.877 & \cellcolor{red!10}.372 & \cellcolor{red!10}.553 & \cellcolor{red!10}.960 & \cellcolor{red!10}.239 & \cellcolor{red!10}.460 & \cellcolor{red!10}.590 & \cellcolor{red!10}.764 & \cellcolor{red!10}.832 & \cellcolor{red!10}.717 & \cellcolor{red!10}.581 & \cellcolor{red!10}.402 & \cellcolor{red!10}.612 \\
\cmidrule(l){2-16}
  & \multirow{2}{*}{5K}
  & Wiki & \cellcolor{blue!10}.725 & \cellcolor{blue!10}.324 & \cellcolor{blue!10}.605 & \cellcolor{blue!10}.962 & \cellcolor{blue!10}.193 & \cellcolor{blue!10}.405 & \cellcolor{blue!10}.587 & \cellcolor{blue!10}.739 & \cellcolor{blue!10}.826 & \cellcolor{blue!10}.722 & \cellcolor{blue!10}.570 & \cellcolor{blue!10}.396 & \cellcolor{blue!10}.604 \\
  &
  & Moir & \cellcolor{red!10}.852 & \cellcolor{red!10}.376 & \cellcolor{red!10}.451 & \cellcolor{red!10}.970 & \cellcolor{red!10}.240 & \cellcolor{red!10}.445 & \cellcolor{red!10}.587 & \cellcolor{red!10}.756 & \cellcolor{red!10}.829 & \cellcolor{red!10}.716 & \cellcolor{red!10}.577 & \cellcolor{red!10}.396 & \cellcolor{red!10}.607 \\
\midrule\midrule
\multirow{10}{*}{Llama3}
  & \multirow{2}{*}{100}
  & Wiki & \cellcolor{blue!10}.465 & \cellcolor{blue!10}.230 & \cellcolor{blue!10}.961 & \cellcolor{blue!10}.990 & \cellcolor{blue!10}.146 & \cellcolor{blue!10}.325 & \cellcolor{blue!10}.684 & \cellcolor{blue!10}.721 & \cellcolor{blue!10}.794 & \cellcolor{blue!10}.738 & \cellcolor{blue!10}.561 & \cellcolor{blue!10}.561 & \cellcolor{blue!10}.665 \\
  &
  & Moir & \cellcolor{red!10}.465 & \cellcolor{red!10}.175 & \cellcolor{red!10}.947 & \cellcolor{red!10}.997 & \cellcolor{red!10}.129 & \cellcolor{red!10}.283 & \cellcolor{red!10}.684 & \cellcolor{red!10}.710 & \cellcolor{red!10}.795 & \cellcolor{red!10}.738 & \cellcolor{red!10}.558 & \cellcolor{red!10}.597 & \cellcolor{red!10}.670 \\
\cmidrule(l){2-16}
  & \multirow{2}{*}{500}
  & Wiki & \cellcolor{blue!10}.632 & \cellcolor{blue!10}.302 & \cellcolor{blue!10}.902 & \cellcolor{blue!10}.993 & \cellcolor{blue!10}.166 & \cellcolor{blue!10}.384 & \cellcolor{blue!10}.683 & \cellcolor{blue!10}.726 & \cellcolor{blue!10}.795 & \cellcolor{blue!10}.743 & \cellcolor{blue!10}.562 & \cellcolor{blue!10}.603 & \cellcolor{blue!10}.675 \\
  &
  & Moir & \cellcolor{red!10}.604 & \cellcolor{red!10}.256 & \cellcolor{red!10}.873 & \cellcolor{red!10}.993 & \cellcolor{red!10}.156 & \cellcolor{red!10}.354 & \cellcolor{red!10}.684 & \cellcolor{red!10}.711 & \cellcolor{red!10}.794 & \cellcolor{red!10}.737 & \cellcolor{red!10}.558 & \cellcolor{red!10}.597 & \cellcolor{red!10}.670 \\
\cmidrule(l){2-16}
  & \multirow{2}{*}{1K}
  & Wiki & \cellcolor{blue!10}.690 & \cellcolor{blue!10}.349 & \cellcolor{blue!10}.838 & \cellcolor{blue!10}.989 & \cellcolor{blue!10}.191 & \cellcolor{blue!10}.426 & \cellcolor{blue!10}.682 & \cellcolor{blue!10}.723 & \cellcolor{blue!10}.794 & \cellcolor{blue!10}.741 & \cellcolor{blue!10}.560 & \cellcolor{blue!10}.615 & \cellcolor{blue!10}.676 \\
  &
  & Moir & \cellcolor{red!10}.690 & \cellcolor{red!10}.308 & \cellcolor{red!10}.777 & \cellcolor{red!10}.986 & \cellcolor{red!10}.202 & \cellcolor{red!10}.418 & \cellcolor{red!10}.684 & \cellcolor{red!10}.694 & \cellcolor{red!10}.794 & \cellcolor{red!10}.734 & \cellcolor{red!10}.554 & \cellcolor{red!10}.603 & \cellcolor{red!10}.668 \\
\cmidrule(l){2-16}
  & \multirow{2}{*}{2K}
  & Wiki & \cellcolor{blue!10}.757 & \cellcolor{blue!10}.416 & \cellcolor{blue!10}.749 & \cellcolor{blue!10}.978 & \cellcolor{blue!10}.210 & \cellcolor{blue!10}.461 & \cellcolor{blue!10}.677 & \cellcolor{blue!10}.725 & \cellcolor{blue!10}.793 & \cellcolor{blue!10}.734 & \cellcolor{blue!10}.559 & \cellcolor{blue!10}.603 & \cellcolor{blue!10}.672 \\
  &
  & Moir & \cellcolor{red!10}.751 & \cellcolor{red!10}.384 & \cellcolor{red!10}.650 & \cellcolor{red!10}.975 & \cellcolor{red!10}.219 & \cellcolor{red!10}.452 & \cellcolor{red!10}.680 & \cellcolor{red!10}.711 & \cellcolor{red!10}.794 & \cellcolor{red!10}.729 & \cellcolor{red!10}.547 & \cellcolor{red!10}.573 & \cellcolor{red!10}.661 \\
\cmidrule(l){2-16}
  & \multirow{2}{*}{5K}
  & Wiki & \cellcolor{blue!10}.829 & \cellcolor{blue!10}.501 & \cellcolor{blue!10}.536 & \cellcolor{blue!10}.955 & \cellcolor{blue!10}.255 & \cellcolor{blue!10}.499 & \cellcolor{blue!10}.666 & \cellcolor{blue!10}.664 & \cellcolor{blue!10}.790 & \cellcolor{blue!10}.731 & \cellcolor{blue!10}.547 & \cellcolor{blue!10}.573 & \cellcolor{blue!10}.651 \\
  &
  & Moir & \cellcolor{red!10}.784 & \cellcolor{red!10}.470 & \cellcolor{red!10}.433 & \cellcolor{red!10}.947 & \cellcolor{red!10}.283 & \cellcolor{red!10}.485 & \cellcolor{red!10}.670 & \cellcolor{red!10}.689 & \cellcolor{red!10}.789 & \cellcolor{red!10}.727 & \cellcolor{red!10}.550 & \cellcolor{red!10}.597 & \cellcolor{red!10}.661 \\
\midrule\midrule
\multirow{10}{*}{Qwen3}
  & \multirow{2}{*}{100}
  & Wiki & \cellcolor{blue!10}.1000 & \cellcolor{blue!10}.470 & \cellcolor{blue!10}.889 & \cellcolor{blue!10}.960 & \cellcolor{blue!10}.290 & \cellcolor{blue!10}.572 & \cellcolor{blue!10}.725 & \cellcolor{blue!10}.875 & \cellcolor{blue!10}.747 & \cellcolor{blue!10}.675 & \cellcolor{blue!10}.557 & \cellcolor{blue!10}.670 & \cellcolor{blue!10}.695 \\
  &
  & Moir & \cellcolor{red!10}.1000 & \cellcolor{red!10}.550 & \cellcolor{red!10}.840 & \cellcolor{red!10}.967 & \cellcolor{red!10}.307 & \cellcolor{red!10}.602 & \cellcolor{red!10}.728 & \cellcolor{red!10}.875 & \cellcolor{red!10}.748 & \cellcolor{red!10}.689 & \cellcolor{red!10}.554 & \cellcolor{red!10}.591 & \cellcolor{red!10}.682 \\
\cmidrule(l){2-16}
  & \multirow{2}{*}{500}
  & Wiki & \cellcolor{blue!10}.988 & \cellcolor{blue!10}.381 & \cellcolor{blue!10}.810 & \cellcolor{blue!10}.952 & \cellcolor{blue!10}.244 & \cellcolor{blue!10}.499 & \cellcolor{blue!10}.712 & \cellcolor{blue!10}.859 & \cellcolor{blue!10}.735 & \cellcolor{blue!10}.683 & \cellcolor{blue!10}.514 & \cellcolor{blue!10}.597 & \cellcolor{blue!10}.666 \\
  &
  & Moir & \cellcolor{red!10}.994 & \cellcolor{red!10}.482 & \cellcolor{red!10}.676 & \cellcolor{red!10}.912 & \cellcolor{red!10}.278 & \cellcolor{red!10}.540 & \cellcolor{red!10}.719 & \cellcolor{red!10}.864 & \cellcolor{red!10}.738 & \cellcolor{red!10}.691 & \cellcolor{red!10}.552 & \cellcolor{red!10}.585 & \cellcolor{red!10}.676 \\
\cmidrule(l){2-16}
  & \multirow{2}{*}{1K}
  & Wiki & \cellcolor{blue!10}.981 & \cellcolor{blue!10}.388 & \cellcolor{blue!10}.750 & \cellcolor{blue!10}.938 & \cellcolor{blue!10}.242 & \cellcolor{blue!10}.494 & \cellcolor{blue!10}.691 & \cellcolor{blue!10}.821 & \cellcolor{blue!10}.722 & \cellcolor{blue!10}.660 & \cellcolor{blue!10}.501 & \cellcolor{blue!10}.457 & \cellcolor{blue!10}.615 \\
  &
  & Moir & \cellcolor{red!10}.987 & \cellcolor{red!10}.495 & \cellcolor{red!10}.595 & \cellcolor{red!10}.898 & \cellcolor{red!10}.313 & \cellcolor{red!10}.554 & \cellcolor{red!10}.710 & \cellcolor{red!10}.856 & \cellcolor{red!10}.728 & \cellcolor{red!10}.663 & \cellcolor{red!10}.536 & \cellcolor{red!10}.567 & \cellcolor{red!10}.660 \\
\cmidrule(l){2-16}
  & \multirow{2}{*}{2K}
  & Wiki & \cellcolor{blue!10}.956 & \cellcolor{blue!10}.327 & \cellcolor{blue!10}.679 & \cellcolor{blue!10}.905 & \cellcolor{blue!10}.232 & \cellcolor{blue!10}.455 & \cellcolor{blue!10}.637 & \cellcolor{blue!10}.554 & \cellcolor{blue!10}.689 & \cellcolor{blue!10}.644 & \cellcolor{blue!10}.440 & \cellcolor{blue!10}.304 & \cellcolor{blue!10}.503 \\
  &
  & Moir & \cellcolor{red!10}.947 & \cellcolor{red!10}.431 & \cellcolor{red!10}.498 & \cellcolor{red!10}.860 & \cellcolor{red!10}.306 & \cellcolor{red!10}.510 & \cellcolor{red!10}.684 & \cellcolor{red!10}.806 & \cellcolor{red!10}.705 & \cellcolor{red!10}.634 & \cellcolor{red!10}.492 & \cellcolor{red!10}.512 & \cellcolor{red!10}.620 \\
\cmidrule(l){2-16}
  & \multirow{2}{*}{5K}
  & Wiki & \cellcolor{blue!10}.814 & \cellcolor{blue!10}.212 & \cellcolor{blue!10}.573 & \cellcolor{blue!10}.847 & \cellcolor{blue!10}.199 & \cellcolor{blue!10}.360 & \cellcolor{blue!10}.402 & \cellcolor{blue!10}.002 & \cellcolor{blue!10}.595 & \cellcolor{blue!10}.543 & \cellcolor{blue!10}.320 & \cellcolor{blue!10}.024 & \cellcolor{blue!10}.012 \\
  &
  & Moir & \cellcolor{red!10}.853 & \cellcolor{red!10}.306 & \cellcolor{red!10}.369 & \cellcolor{red!10}.780 & \cellcolor{red!10}.271 & \cellcolor{red!10}.412 & \cellcolor{red!10}.609 & \cellcolor{red!10}.565 & \cellcolor{red!10}.647 & \cellcolor{red!10}.580 & \cellcolor{red!10}.410 & \cellcolor{red!10}.237 & \cellcolor{red!10}.450 \\
\bottomrule
\end{tabular}%
}
\end{table*}

\clearpage
\paragraph{AlphaEdit Graphical Summary}
Figure~\ref{fig:hm_alphaedit} and Figure~\ref{fig:pres_alphaedit} summarize the aggregate and per-task preservation trajectories for AlphaEdit. 
\begin{figure*}[h]
    \centering
    \vspace{-15pt} 
    \includegraphics[width=0.6\textwidth]{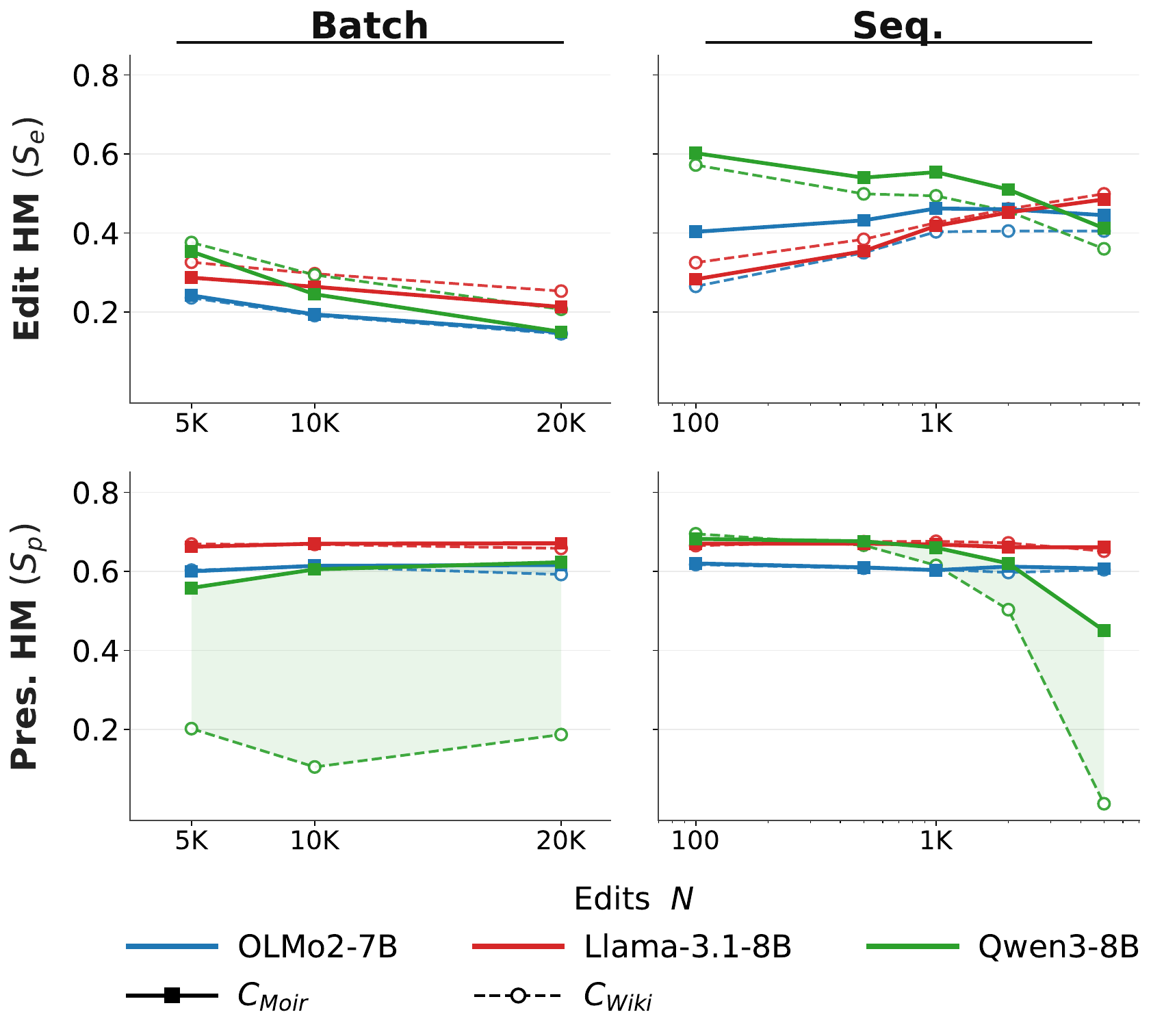}
    \vspace{-5pt} 
    \caption{\textbf{AlphaEdit: aggregate Edit~HM and Pres.~HM.} Pres.~HM is flat for OLMo2
and Llama3, but Qwen3 with WikiText $C_\text{Wiki}$ holds at $\sim\!0.20$ vs
$\sim\!0.60$ with $C_\method$---a gap that persists across all
edit budgets and exposes AlphaEdit's covariance dependence.}
    \label{fig:hm_alphaedit}
    \vspace{-10pt} 
\end{figure*}
\begin{figure*}[h]
    \centering
    \includegraphics[width=\textwidth]{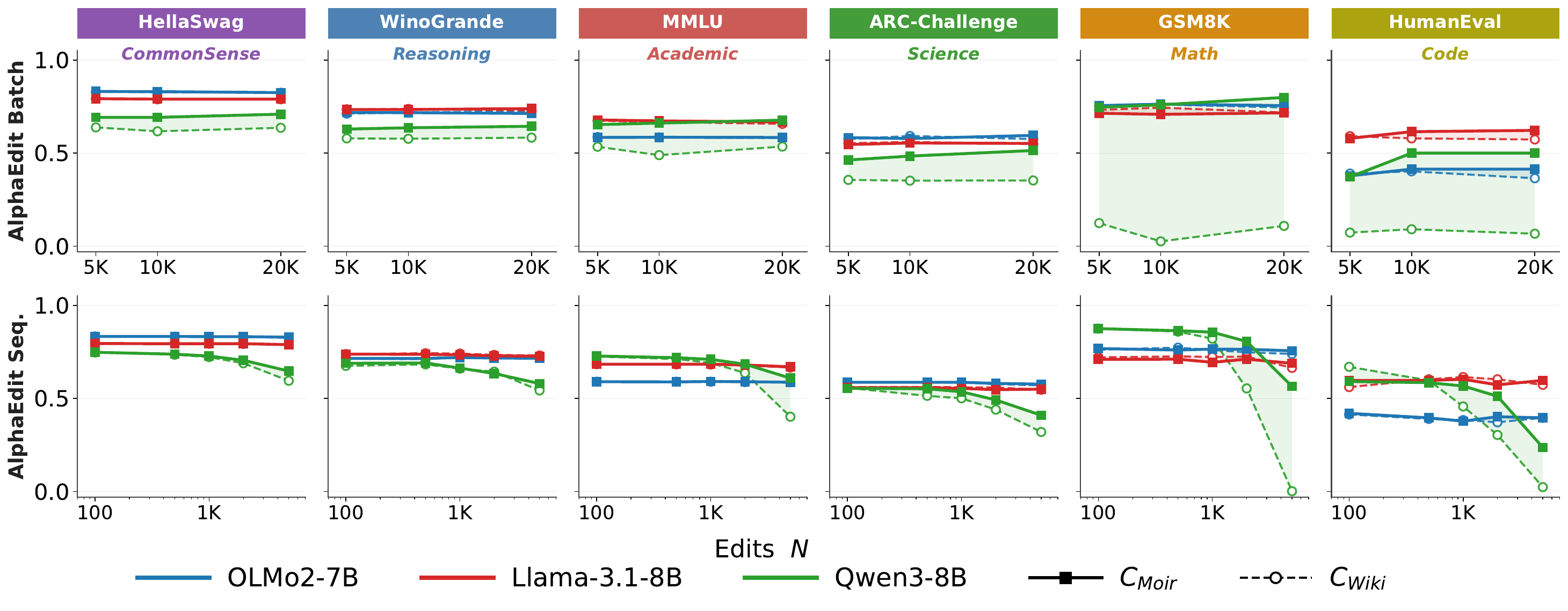}
    \caption{\textbf{Per-task preservation under AlphaEdit.}
 AlphaEdit is stable on OLMo2 and Llama3 regardless of $C_0$, but
on Qwen3, $C_\text{Wiki}$ collapses GSM8K and HumanEval to near zero
while $C_\method$ holds them at near-baseline levels. Batch $N{=}5\text{K}{-}20\text{K}$ and sequential $N{=}100{-}5\text{K}$.}
    \label{fig:pres_alphaedit}
\end{figure*}
\clearpage



\newpage

\end{document}